\newtheorem{theorem}{Theorem}
\newtheorem{corollary}[theorem]{Corollary}
\newtheorem{lemma}[theorem]{Lemma}
\newtheorem{definition}{Definition}
\DeclareMathOperator*{\argmin}{arg\,min}
\newcommand{\m}[1]{\ensuremath{\overline{#1}}}
\definecolor{red}{rgb}{0.8,0.2,0.2}
\definecolor{blue}{rgb}{0,0,0.5}
\definecolor{green}{rgb}{0,0.7,0}
\definecolor{violet}{rgb}{0.5,0.2,0.5}
\definecolor{orange}{rgb}{0.8,0.5,0.2}
\definecolor{meu}{rgb}{0.6,0.8,0.6}
\newsavebox{\fmbox}
\newenvironment{fmpage}[1]
{\begin{lrbox}{\fmbox}\begin{minipage}{#1}}
{\end{minipage}\end{lrbox}\fbox{\usebox{\fmbox}}}
\newcommand{\xaxis}{$x$-axis\xspace}
\newcommand{\yaxis}{$y$-axis\xspace}
\newcommand{\auc}{\ensuremath{\mathit{AUC}}\xspace}
\newcommand{\auch}{\ensuremath{\mathit{AUCH}}\xspace}
\newcommand{\bs}{\ensuremath{\mathit{BS}}\xspace}
\newcommand{\mae}{\ensuremath{\mathit{MAE}}\xspace}
\newcommand{\acc}{\ensuremath{\mathit{Acc}}\xspace}
\newcommand{\mbs}{\ensuremath{\mathit{MBS}}\xspace}
\newcommand{\mmae}{\ensuremath{\mathit{MMAE}}\xspace}
\newcommand{\micro}{\ensuremath{\mu}}
\newcommand{\macro}{\ensuremath{M}}
\newcommand{\macc}{\ensuremath{\mathit{MAcc}}\xspace}
\newcommand{\sk}{z}
\newcommand{\Tcost}{T_c}
\newcommand{\Tsk}{T_\sk}
\newcommand{\Qcost}{Q_c}
\newcommand{\Qsk}{Q_\sk}
\newcommand{\wcost}{w_c}
\newcommand{\wsk}{w_\sk}
\newcommand{\Thresholdchoice}{Threshold choice\xspace}  
\newcommand{\thresholdchoice}{threshold choice\xspace}  
\newcommand{\thresholdchoices}{threshold choices\xspace}  
\newcommand{\Thresholdchoicemethod}{\Thresholdchoice method\xspace}  
\newcommand{\Thresholdchoicemethods}{\Thresholdchoice methods\xspace}  
\newcommand{\thresholdchoicemethod}{\thresholdchoice method\xspace}  
\newcommand{\thresholdchoicemethods}{\thresholdchoice methods\xspace}  
\newcommand{\optimal}{optimal\xspace}  
\newcommand{\scorefixed}{score-fixed\xspace}  
\newcommand{\scoredriven}{score-driven\xspace}  
\newcommand{\scoreuniform}{score-uniform\xspace}  %
\newcommand{\ratefixed}{rate-fixed\xspace}  
\newcommand{\ratedriven}{rate-driven\xspace}  
\newcommand{\rateuniform}{rate-uniform\xspace}  
\newcommand{\Scoredriven}{Score-driven\xspace}  
\newcommand{\Ratedriven}{Rate-driven\xspace}  
\newcommand{\performancemetric}{performance metric\xspace}
\newcommand{\performancemetrics}{performance metrics\xspace}
\newcommand{\model}{model\xspace}
\newcommand{\classifier}{classifier\xspace}
\renewcommand{\models}{models\xspace}
\newcommand{\classifiers}{classifiers\xspace}
\newcommand{\To}{{T^o}}  
\newcommand{\Tsf}{{T^{sf}}}  
\newcommand{\Tsd}{{T^{sd}}}  
\newcommand{\Tsu}{{T^{su}}}  
\newcommand{\Trf}{{T^{rf}}}  
\newcommand{\Trd}{{T^{rd}}}  
\newcommand{\Tru}{{T^{ru}}}  
\newcommand{\Tcosto}{{T^o_c}}  
\newcommand{\Tcostsf}{{T^{sf}_c}}  
\newcommand{\Tcostsd}{{T^{sd}_c}}  
\newcommand{\Tcostsu}{{T^{su}_c}}  %
\newcommand{\Tcostrf}{{T^{rf}_c}}  
\newcommand{\Tcostrd}{{T^{rd}_c}}  
\newcommand{\Tcostru}{{T^{ru}_c}}  
\newcommand{\Tsko}{{T^o_\sk}}  
\newcommand{\Tsksf}{{T^{sf}_\sk}}  
\newcommand{\Tsksd}{{T^{sd}_\sk}}  
\newcommand{\Tsksu}{{T^{su}_\sk}}  %
\newcommand{\Tskrf}{{T^{rf}_\sk}}  
\newcommand{\Tskrd}{{T^{rd}_\sk}}  
\newcommand{\Tskru}{{T^{ru}_\sk}}  
\newcommand{\Lcost}{L_c}
\newcommand{\Lcosto}{L^o_c}
\newcommand{\LcostoU}{L^o_{U(c)}}
\newcommand{\Lcostsf}{{L^{sf}_c}}  
\newcommand{\LcostsfU}{{L^{sf}_{U(c)}}}  
\newcommand{\Lcostsd}{{L^{sd}_c}}  
\newcommand{\LcostsdU}{{L^{sd}_{U(c)}}}  
\newcommand{\LcostsuU}{{L^{su}_{U(c)}}}  
\newcommand{\Lcostrf}{{L^{rf}_c}}  
\newcommand{\LcostrfU}{{L^{rf}_{U(c)}}}  %
\newcommand{\Lcostrd}{{L^{rd}_c}}  
\newcommand{\LcostrdU}{{L^{rd}_{U(c)}}}  
\newcommand{\Lcostru}{{L^{ru}_c}}  
\newcommand{\LcostruU}{{L^{ru}_{U(c)}}}  
\newcommand{\Lsk}{L_\sk}
\newcommand{\LskoU}{L^o_{U(\sk)}}
\newcommand{\Lsksf}{{L^{sf}_\sk}}  
\newcommand{\LsksfU}{{L^{sf}_{U(\sk)}}}  
\newcommand{\LsksdU}{{L^{sd}_{U(\sk)}}}  
\newcommand{\LsksuU}{{L^{su}_{U(\sk)}}}  
\newcommand{\LskrdU}{{L^{rd}_{U(\sk)}}}  
\newcommand{\LskruU}{{L^{ru}_{U(\sk)}}}  
\newcommand{\ra}{r}  
\newcommand{\rate}{rate\xspace}  
\newcommand{\rates}{rates\xspace}  
\long\def\comment#1{}
\newcommand{\Conv}{\ensuremath{\mathrm{Conv}}\xspace}
\newcommand{\Cal}{\ensuremath{\mathrm{Cal}}\xspace}
\newcommand{\Even}{\ensuremath{\mathrm{Even}}\xspace}
\newcommand{\Evend}{\ensuremath{\mathrm{EST}}\xspace}
\newcommand{\rl}{\ensuremath{\mathit{RL}}\xspace}
\newcommand{\cl}{\ensuremath{\mathit{CL}}\xspace}
\begin{document}


\begin{titlepage}

\begin{center}

\vspace*{1cm}

\newcommand{\HRule}{\rule{\linewidth}{0.5mm}}

\HRule \\[0.7cm]
{ \huge \bfseries Threshold Choice Methods: the Missing Link}\\[0.4cm]

\HRule \\[1.2cm]

\begin{minipage}{0.75\textwidth}
\begin{flushleft} 
Jos\'e Hern\'andez-Orallo\hfill(jorallo@dsic.upv.es)\\
Departament de Sistemes Inform\`atics i Computaci\'o\\
Universitat Polit\`ecnica de Val\`encia, Spain\\[12pt]

Peter Flach\hfill(Peter.Flach@bristol.ac.uk)\\
Intelligent Systems Laboratory\\
University of Bristol, United Kingdom\\[12pt]

C\`esar Ferri\hfill(cferri@dsic.upv.es)\\
Departament de Sistemes Inform\`atics i Computaci\'o\\
Universitat Polit\`ecnica de Val\`encia, Spain\\[12pt]

\end{flushleft}
\end{minipage}

\vspace{1cm}

{\large \today}

\vfill
\end{center}


\begin{abstract}%
Many \performancemetrics have been introduced in the literature for the evaluation of classification performance, each of them with different origins and areas of application. These metrics include accuracy, macro-accuracy, area under the ROC curve or the ROC convex hull, the mean absolute error and the Brier score or mean squared error (with its decomposition into refinement and calibration). 
One way of understanding the relation among these metrics is by means of variable operating conditions (in the form of misclassification costs and/or class distributions). Thus, a metric may correspond to some expected loss over different operating conditions. One dimension for the analysis has been the distribution for this range of operating conditions, leading to some important connections in the area of proper scoring rules. 
We demonstrate in this paper that there is an equally important dimension which has so far not received attention in the analysis of \performancemetrics. This new dimension is given by the decision rule, which is typically implemented as a {\em \thresholdchoicemethod} when using scoring models. In this paper, we explore many old and new \thresholdchoicemethods: fixed, \scoreuniform, \scoredriven,  \ratedriven and  \optimal, among others. By calculating the expected loss obtained with these \thresholdchoicemethods for a uniform range of operating conditions we give clear interpretations of the 0-1 loss, the absolute error, the Brier score, the \auc and the refinement loss respectively. 
Our analysis provides a comprehensive view of \performancemetrics as well as a systematic approach to loss minimisation which can be summarised as follows: given a model, apply the \thresholdchoicemethods that correspond with the available information  about the operating condition, and compare their expected losses. In order to assist in this procedure we also derive several connections between the aforementioned \performancemetrics, and we highlight the role of calibration in choosing the \thresholdchoicemethod.

\vspace{0.3cm}
{\bf Keywords:}  Classification \performancemetrics, Cost-sensitive Evaluation, Operating Condition, Brier Score, Area Under the ROC Curve (\auc), Calibration Loss, Refinement Loss.

\end{abstract}

\end{titlepage}



\newpage
\tableofcontents

\newpage

\section{Introduction}\label{sec:intro}


The choice of a proper \performancemetric for evaluating classification \cite{hand1997construction} is an old but still lively debate which has incorporated many different \performancemetrics along the way. Besides accuracy (\acc, or, equivalently, the error rate or 0-1 loss), many other \performancemetrics have been studied. The most prominent and well-known metrics are the Brier Score (\bs, also known as Mean Squared Error) \cite{brier1950verification} and its decomposition in terms of refinement and calibration \cite{murphy1973new}, the absolute error (\mae), the log(arithmic) loss (or cross-entropy) \cite{Good52} and the area under the ROC curve (\auc, also known as the Wilcoxon-Mann-Whitney statistic, proportional to the Gini coefficient and to the Kendall's tau distance to a perfect \model) \cite{SDM00,Fawcett06}. There are also many graphical representations and tools for \model evaluation, such as ROC curves \cite{SDM00,Fawcett06}, ROC isometrics \cite{Fla03}, cost curves \cite{DH00,drummond-and-Holte2006}, DET curves \cite{martin1997det}, lift charts \cite{piatetsky1999estimating}, calibration maps \cite{cohen2004properties}, etc. A survey of graphical methods for classification predictive performance evaluation can be found in \cite{Prati2011}.

When we have a clear operating condition which establishes the misclassification costs and the class distributions, there are effective tools such as ROC analysis \cite{SDM00,Fawcett06} to establish which \model is best and what its expected loss will be. However, the question is more difficult in the general case when we do not have information about the operating condition where the \model will be applied. In this case, we want our \models to perform well in a wide range of operating conditions.
In this context, the notion of `proper scoring rule', see e.g. \cite{murphy1970scoring}, sheds some light on some \performancemetrics. 
Some proper scoring rules, such as the Brier Score (MSE loss), the logloss, boosting loss and error rate (0-1 loss) have been shown in \cite{BSS05} to be special cases of an integral over a Beta density of costs,  
see e.g. \cite{gneiting2007strictly,reid2010composite,reid2011information,brummer2010thesis}. Each performance metric is derived as a special case of the Beta distribution.
However, this analysis focusses on scoring rules which are `proper', i.e., metrics that are minimised for well-calibrated probability assessments or, in other words, get the best (lowest) score by forecasting the true beliefs.
Much less is known (in terms of expected loss for varying distributions) about other \performancemetrics which are non-proper scoring rules, such as \auc. Moreover, even its role as a classification \performancemetric has been put into question \cite{hand2009measuring}.


All these approaches make some (generally implicit and poorly understood) assumptions on how the \model will work for each operating condition. In particular, it is generally assumed that the threshold which is used to discriminate between the classes will be set according to the operating condition. In addition, it is assumed that the threshold will be set in such a way that the estimated probability where the threshold is set is made equal to the operating condition. This is natural if we focus on proper scoring rules. Once all this is settled and fixed, different performance metrics represent different expected losses by using the distribution over the operating condition as a parameter. However, this {\em threshold choice} is only one of the many possibilities.

In our work we make these assumptions explicit through the concept of a \emph{\thresholdchoicemethod,} which we argue forms the `missing link' between a \performancemetric and expected loss. 
A \thresholdchoicemethod sets a single threshold on the scores of a \model in order to arrive at classifications, possibly taking circumstances in the deployment context into account, such as the operating condition (the class or cost distribution) or the intended proportion of positive predictions (the predicted positive rate). 
Building on this new notion of \thresholdchoicemethod, we are able to systematically explore how known \performancemetrics are linked to expected loss, resulting in a range of results that are not only theoretically well-founded but also practically relevant.

The basic insight is the realisation that there are many ways of converting a \model (understood throughout this paper as a function assigning scores to instances) into a \classifier that maps instances to classes (we assume binary classification throughout). 
Put differently, there are many ways of setting the threshold given a \model and an operating point. We illustrate this with an example concerning a very common scenario in machine learning research. Consider two models $A$ and $B$, a naive Bayes \model and a decision tree respectively (induced from a training dataset), which are evaluated against a test dataset, producing a score distribution for the positive and negative classes as shown in Figure \ref{fig:models}. ROC curves of both models are shown in Figure \ref{fig:ROCmodels}. We will assume that at this {\em evaluation time} we do not have information about the operating condition, but we expect that this information will be available at {\em deployment time}.


\begin{figure}
\centering
\includegraphics[width=0.45\textwidth]{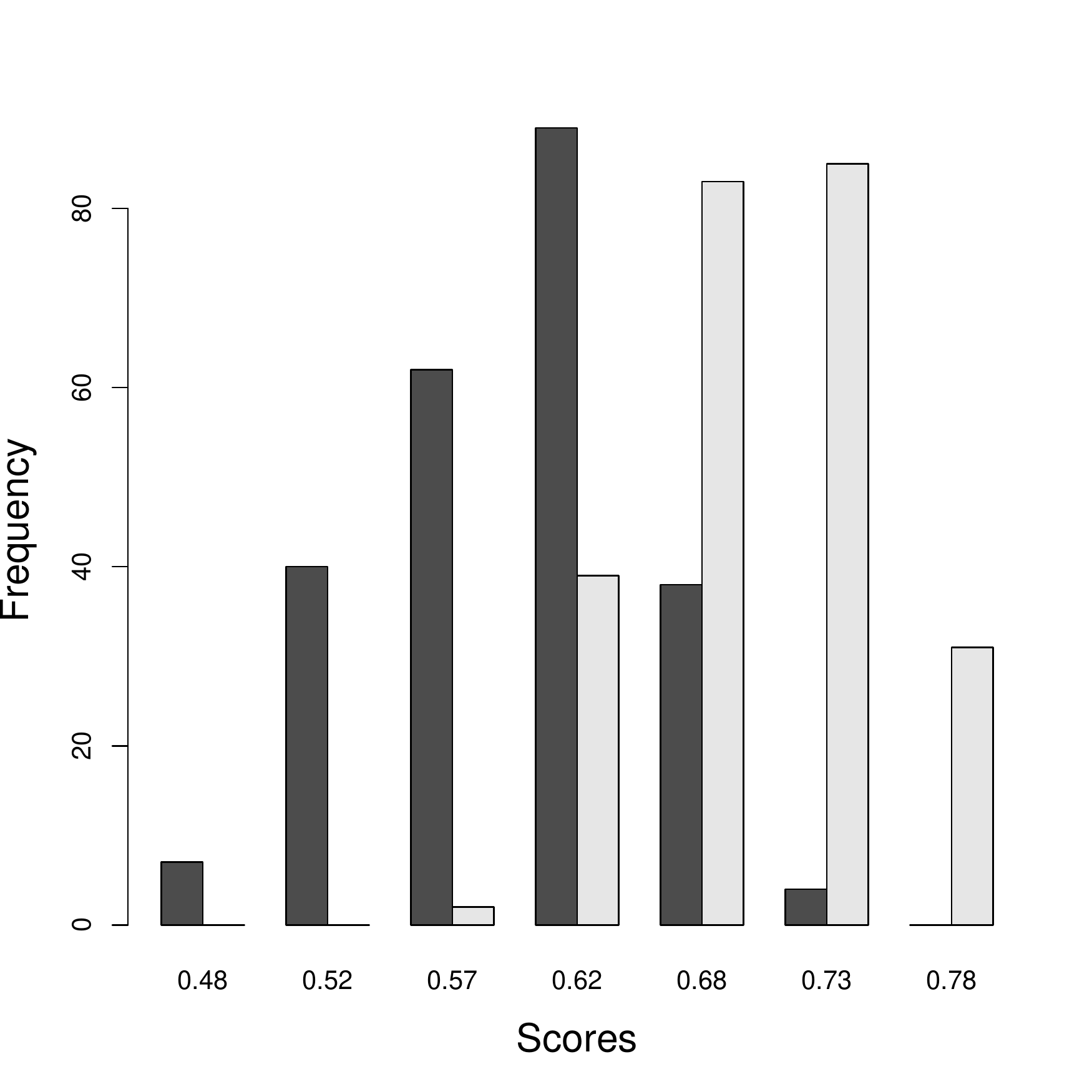} \hfill
\includegraphics[width=0.45\textwidth]{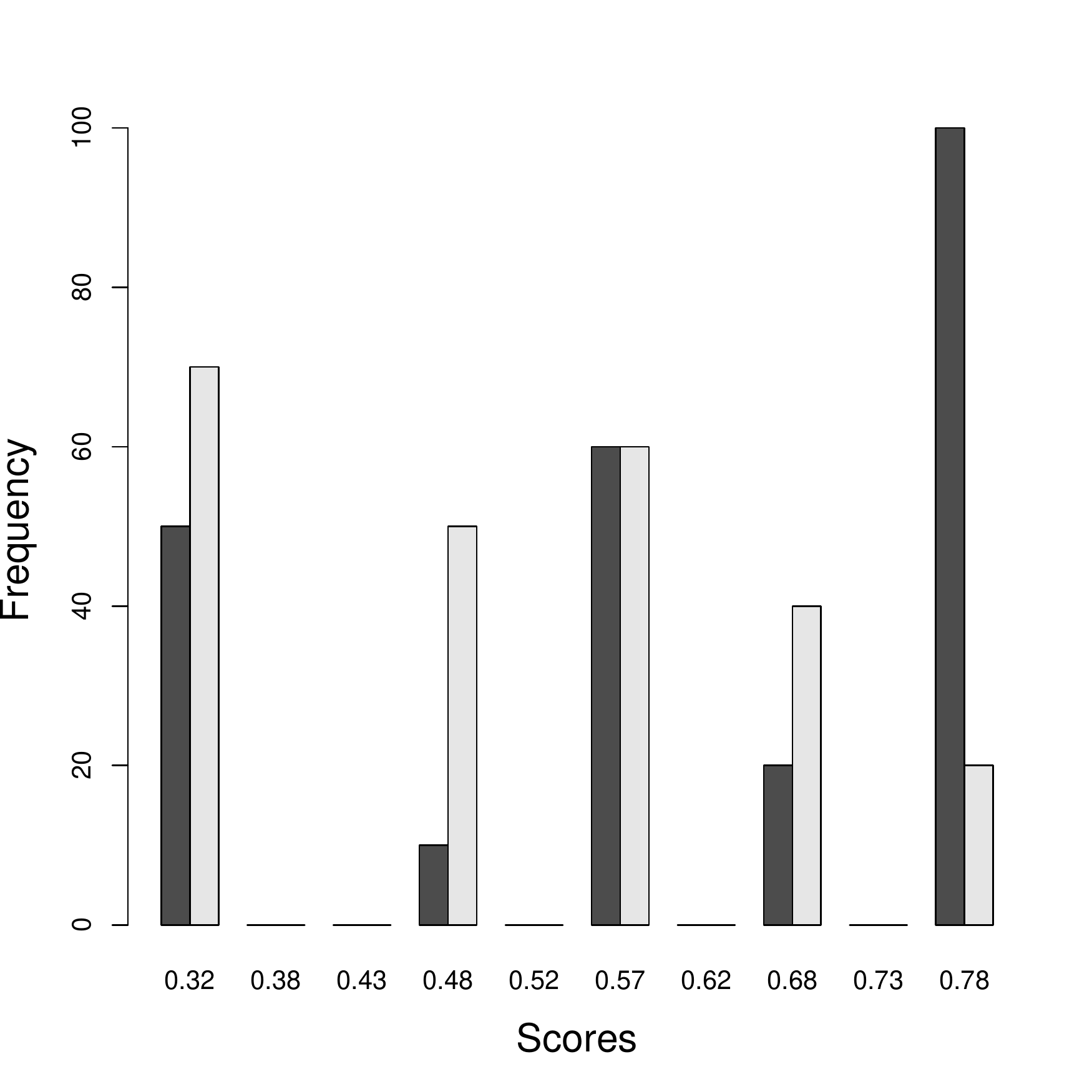}
\caption{Histograms of the score distribution for model $A$ (left) and model $B$ (right).}
\label{fig:models}
\end{figure}

\begin{figure}
\centering
\includegraphics[width=0.45\textwidth]{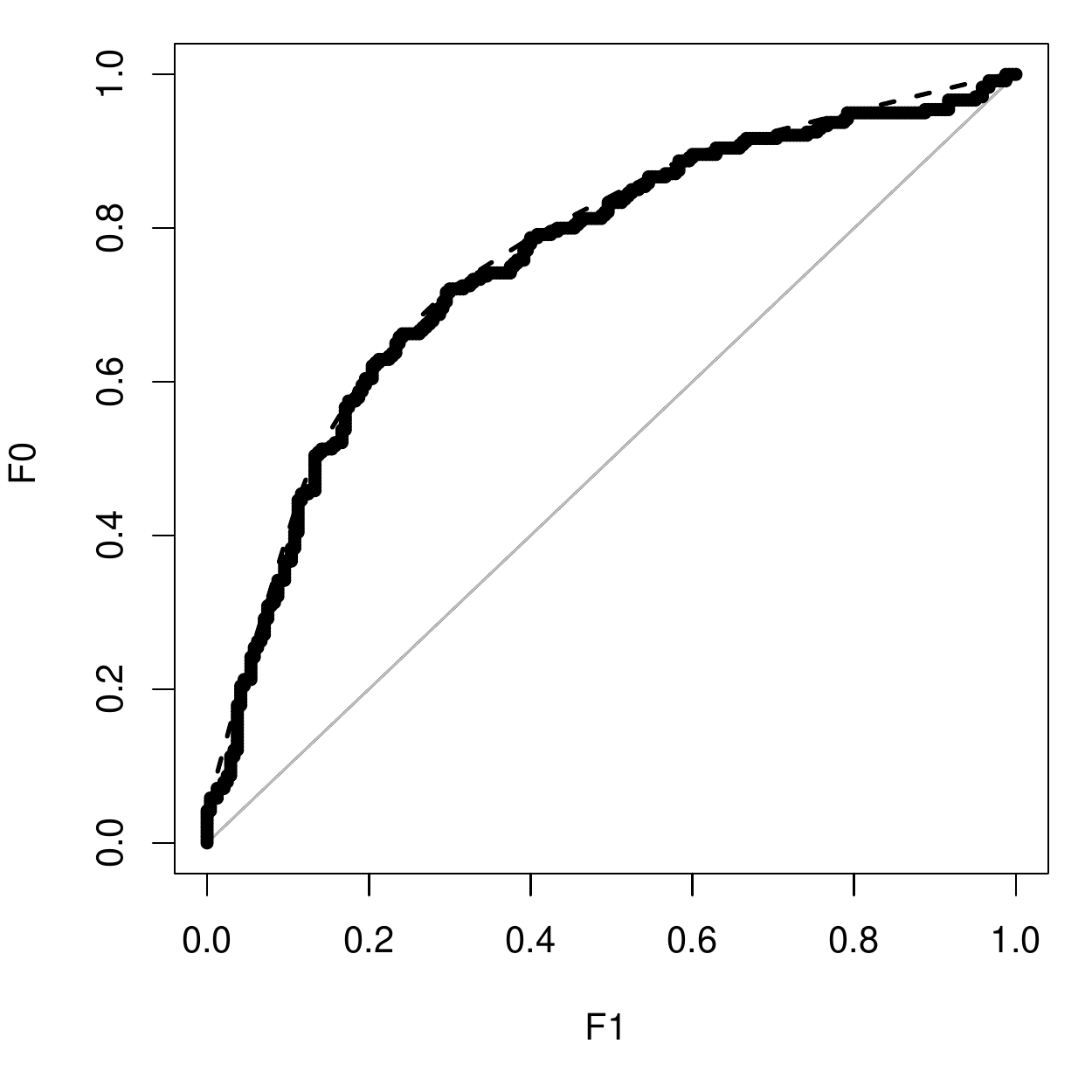} \hfill
\includegraphics[width=0.45\textwidth]{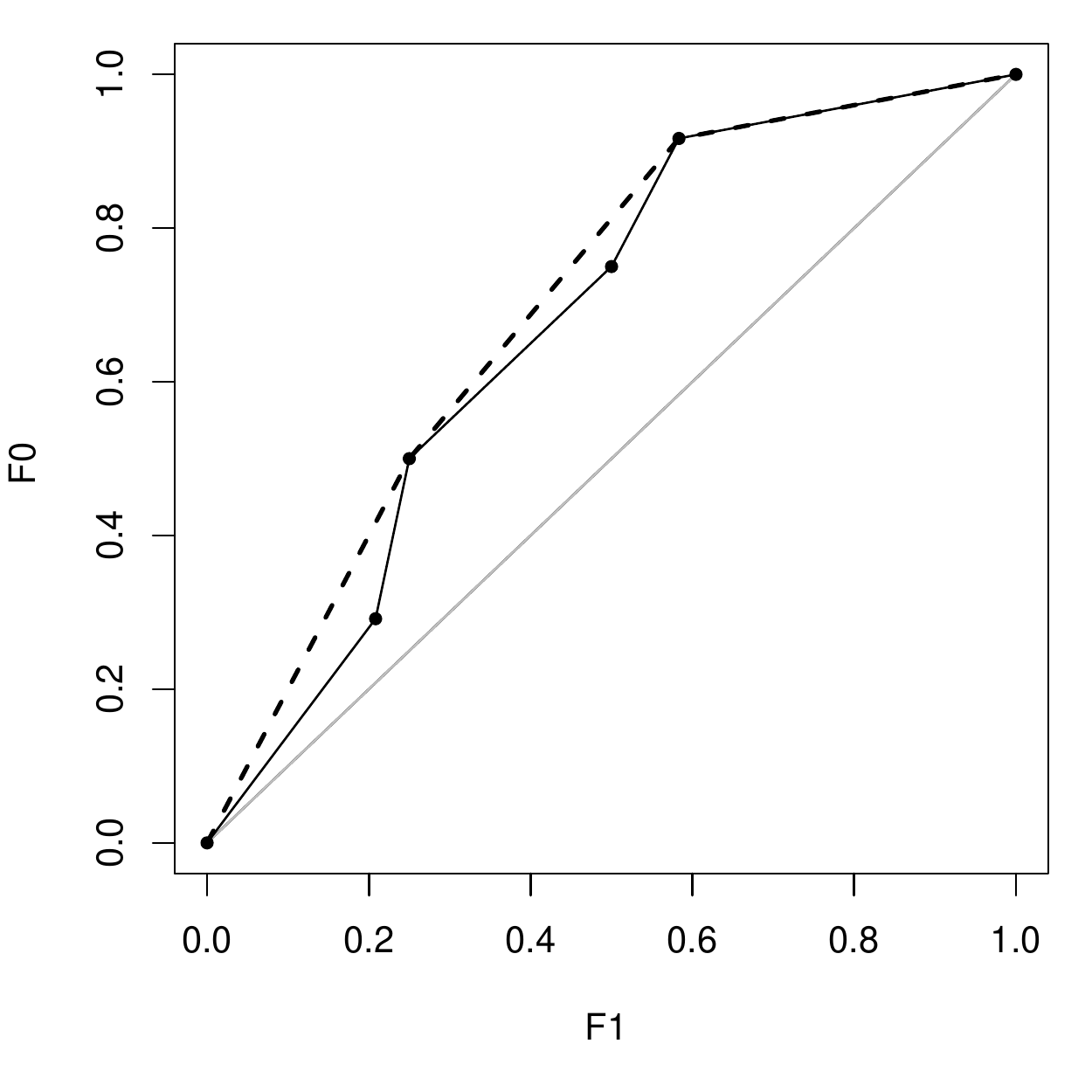}
\caption{ROC Curves for model $A$ (left) and model $B$ (right).}
\label{fig:ROCmodels}
\end{figure}
If we ask the question of which model is best we may rush to calculate its \auc and \bs (and perhaps other metrics), as given by Table \ref{tab:models1}.
However, we cannot give an answer because the question is {\em underspecified}. First, we need to know the range of operating conditions the model will work with. Second, we need to know how we will make the classifications, or in other words, we need a {\em decision rule}, which can be implemented as a {\em \thresholdchoicemethod} when the model outputs scores. For the first dimension (already considered by the work on proper scoring rules), if we have no knowledge about the operating conditions, we can assume a distribution, e.g., a uniform distribution, which considers all operating conditions equally likely. For the second (new) dimension, we have many options.


\begin{table}[htdp]
\begin{center}
\begin{tabular}{cccc}\hline performance metric & model $A$ & model $B$ \\ \hline 
\auc & 0.79 & 0.67  \\
Brier score & 0.33 & 0.24  \\
\hline\end{tabular}
\end{center}
\caption{Results from two models on a data set.}
\label{tab:models1}
\end{table}%

For instance, we can just set a fixed threshold at 0.5. This is what naive Bayes and decision trees do by default. This decision rule works as follows: if the score is greater than 0.5 then predict positive, otherwise predict negative. With this precise decision rule, we can now ask the question about the expected loss. Assuming a uniform distribution for operating conditions, we can effectively calculate the answer on the dataset: $0.51$.

But we can use better decision rules. We can use decision rules which adapt to the operating condition. One of these decision rules is the \scoredriven \thresholdchoicemethod, which sets the threshold equal to the operating condition or, more precisely, to a cost proportion $c$. Another decision rule is the \ratedriven \thresholdchoicemethod, which sets the threshold in such a way that the proportion of predicted positives (or predicted positive rate), simply known as `rate' and denoted by $r$, equals the operating condition.
Using these three different \thresholdchoicemethods for the models $A$ and $B$ we get the expected losses shown in Table \ref{tab:models2}.

\begin{table}[htdp]
\begin{center}
\begin{tabular}{cccc}\hline \thresholdchoicemethod & expected loss model $A$ & expected loss model $B$ \\ \hline 
Fixed ($T=0.5$)              & 0.510       & 0.375 \\
\Scoredriven ($T=c$)        & 0.328     & 0.231 \\   
\Ratedriven ($T$ s.t. $r=c$) & 0.188 & 0.248 \\   
\hline\end{tabular}
\end{center}
\caption{Extension of Table \ref{tab:models1} where two models are applied with three different \thresholdchoicemethods each, leading to six different \classifiers and corresponding expected losses. In all cases, the expected loss is calculated over a range of cost proportions (operating conditions), which is assumed to be uniformly distributed. We denote the threshold by $T$, the cost proportion by $c$ and the predicted positive rate by $r$). 
 }
\label{tab:models2}
\end{table}%

In other words, only when we specify or assume a \thresholdchoicemethod can we convert a \model into a \classifier for which it makes sense to consider its expected loss. In fact, as we can see in Table \ref{tab:models2}, very different expected losses are obtained for the same model with different \thresholdchoicemethods. And this is the case even assuming the same uniform cost distribution for all of them.

Once we have made this (new) dimension explicit, we are ready to ask new questions. How many \thresholdchoicemethods are there? Table \ref{tab:thresholdchoicemethods} shows six of the \thresholdchoicemethods we will analyse in this work, along with their notation. Only the \scorefixed and the \scoredriven methods have been analysed in previous works in the area of proper scoring rules. In addition, a seventh \thresholdchoicemethod, known as \optimal \thresholdchoicemethod, denoted by $\To$, has been (implicitly) used in a few works \cite{DH00,drummond-and-Holte2006,hand2009measuring}. 

\begin{table}[htdp]
\begin{center}
		\begin{tabular}{c|ccc}	 
			\Thresholdchoicemethod              & Fixed & Chosen uniformly & Driven by o.c.  \\\hline
			
			Using scores                          & \scorefixed ($\Tsf$)                              & \scoreuniform ($\Tsu$)         & \scoredriven ($\Tsd$)          \\
      Using \rates                          & \ratefixed ($\Trf$)                               & \rateuniform ($\Tru$)          & \ratedriven ($\Trd$)           \\		
			\hline
		\end{tabular}
\end{center}
	\caption{Non-\optimal \thresholdchoicemethods. The first family uses scores (as they were probabilities) and the second family uses rates (using scores as rank indicators). For both families we can fix a threshold or assume them ranging uniformly, which makes the \thresholdchoicemethod independent from the operating condition. Only the last column takes the operating condition (o.c.) into account, and hence are the most interesting \thresholdchoicemethods.  }
	\label{tab:thresholdchoicemethods}
\end{table}

We will see that each \thresholdchoicemethod is linked to a specific \performancemetric.
This means that if we decide (or are forced) to use a \thresholdchoicemethod then there is a recommended \performancemetric for it. The results in this paper show that accuracy is the appropriate \performancemetric for the \scorefixed method, \mae fits the \scoreuniform method, \bs is the appropriate \performancemetric for the \scoredriven method, and \auc fits both the \rateuniform and the \ratedriven methods. All these results assume a uniform cost distribution. 

The good news is that inter-comparisons are still possible: given a \thresholdchoicemethod we can calculate expected loss from the relevant \performancemetric. 
The results in Table \ref{tab:models2} allow us to conclude that model $A$ achieves the lowest expected loss for uniformly sampled cost proportions, {\em if} we are wise enough to choose the appropriate \thresholdchoicemethod (in this case the \ratedriven method) to turn \model $A$ into a successful \classifier. Notice that this cannot be said by just looking at Table \ref{tab:models1} because the metrics in this table are not comparable to each other. In fact, there is no single \performancemetric that ranks the models in the correct order, because, as already said, expected loss cannot be calculated for \models, only for \classifiers.

\subsection{Contributions and structure of the paper}

The contributions of this paper to the subject of \model evaluation for classification can be summarised as follows.  

\begin{enumerate}
\item The expected loss of a model can only be determined if we select a distribution of operating conditions and a \thresholdchoicemethod. We need to set a point in this two-dimensional space. Along the second (usually neglected) dimension, several new \thresholdchoicemethods are introduced in this paper.
\item We answer the question: ``if one is choosing thresholds in a particular way, which \performancemetric is appropriate?" by giving an explicit expression for the expected loss for each \thresholdchoicemethod. We derive linear relationships between expected loss and many common \performancemetrics. The most remarkable one is the vindication of \auc as a measure of expected classification loss for both the \rateuniform and \ratedriven methods, contrary to recent claims in the literature \cite{hand2009measuring}. 
\item One fundamental and novel result shows that the refinement loss of the convex hull of a ROC curve is equal to expected \emph{optimal} loss as measured by the area under the optimal cost curve. This sets an optimistic (but also unrealistic) bound for the expected loss.
\item Conversely, from the usual calculation of several well-known \performancemetrics we can derive expected loss. Thus, \classifiers and performance metrics become easily comparable. With this we do not choose the best \model (a concept that does not make sense) but we choose the best \classifier (a \model with a particular \thresholdchoicemethod).
\item By cleverly manipulating scores we can connect several of these \performancemetrics, either by the notion of evenly-spaced scores or perfectly calibrated scores. This provides an additional way of analysing the relation between \performancemetrics and, of course, \thresholdchoicemethods.
\item We use all these connections to better understand which \thresholdchoicemethod should be used, and in which cases some are better than others. The analysis of calibration plays a central role in this understanding, and also shows that non-proper scoring rules do have their role and can lead to lower expected loss than proper scoring rules, which are, as expected, more appropriate when the model is well-calibrated.
\end{enumerate}

\noindent This set of contributions provides an integrated perspective on \performancemetrics for classification around the `missing link' which we develop in this paper: the notion of \thresholdchoicemethod. 

The remainder of the paper is structured as follows.
Section \ref{sec:background} 
introduces some notation, the basic definitions for operating condition, threshold, expected loss, and particularly the notion of \thresholdchoicemethod, which we will use throughout the paper.
Section \ref{sec:fixed} investigates expected loss for fixed \thresholdchoicemethods (\scorefixed and
\ratefixed), which are the base for the rest. We show that, not surprisingly, the expected loss for these \thresholdchoicemethod are the 0-1 loss (accuracy or macro-accuracy depending on whether we use cost proportions or skews). 
%
Section \ref{sec:scores} presents the results that the \scoreuniform \thresholdchoicemethod has \mae as associate performance metric and the \scoredriven \thresholdchoicemethod leads to the Brier score. We also show that one dominates over the other.
Section \ref{sec:rates} analyses the non-fixed methods based on \rates. Somewhat surprisingly, both the \rateuniform \thresholdchoicemethod and the \ratedriven \thresholdchoicemethod lead to linear functions of \auc, with the latter always been better than the former. All this vindicates the \ratedriven  \thresholdchoicemethod but also \auc as a performance metric for classification. 
Section \ref{sec:optimal} uses the \optimal \thresholdchoicemethod, connects the expected loss in this case with the area under the \optimal cost curve, and derives its corresponding metric, which is refinement loss, one of the components of the Brier score decomposition.
%
%
Section \ref{sec:calibration} analyses the connections between the previous \thresholdchoicemethods and metrics by considering several properties of the scores: evenly-spaced scores and perfectly calibrated scores. This also helps to understand which \thresholdchoicemethod should be used depending on how good scores are. 
Finally, Section \ref{sec:conclusion} closes the paper with a thorough discussion of results, related work, and an overall conclusion with future work and open questions.
There is an appendix which includes some technical results for the \optimal \thresholdchoicemethod and some examples.



\section{Background}\label{sec:background}



In this section we introduce some basic notation and definitions we will need throughout the paper. Some other definitions will be delayed and introduced when needed. The most important definitions we will need are introduced below: the notion of \thresholdchoicemethod and the expression of expected loss.

\subsection{Notation and basic definitions}\label{sec:notation}


A \emph{\classifier} is a function that maps instances $x$ from an instance space $X$ to classes $y$ from an output space $Y$. 
For this paper we will assume binary classifiers, i.e., $Y = \{0, 1\}$. 
A \emph{\model} is a function $m:X \rightarrow \mathbb{R}$ that maps examples to real numbers (scores) on an unspecified scale. 
We use the convention that higher scores express a stronger belief that the instance is of class 1. 
A \emph{probabilistic \model} is a function $m:X \rightarrow [0,1]$ that maps examples to estimates $\hat{p}(1|x)$ of the probability of example $x$ to be of class $1$.
In order to make predictions in the $Y$ domain, a \model can be converted to a \classifier by fixing a decision threshold $t$ on the scores. Given a predicted score $s=m(x)$, the instance $x$ is classified in class $1$ if $s > t$, and in class $0$ otherwise. 

For a given, unspecified \model and population from which data are drawn, we denote the score density for class $k$ by $f_k$ and the cumulative distribution function by $F_k$. Thus, 
$F_{0}(t) = \int_{-\infty}^{t} f_{0}(s) ds = P(s\leq t|0)$ 
is the proportion of class 0 points correctly classified if the decision threshold is $t$, which is the sensitivity or true positive rate at $t$. Similarly, 
$F_{1}(t) = \int_{-\infty}^{t} f_{1}(s) ds = P(s\leq t|1)$ 
is the proportion of class 1 points incorrectly classified as 0 or the false positive rate at threshold $t$; $1-F_{1}(t)$ is the true negative rate or specificity.
Note that we use 0 for the positive class and 1 for the negative class, but scores increase with $\hat{p}(1|x)$. That is, $F_{0}(t)$ and $F_{1}(t)$ are monotonically non-decreasing with $t$. This has some notational advantages and is the same convention as used by, e.g., Hand \cite{hand2009measuring}.

Given a dataset $D \subset \langle X, Y\rangle$ of size $n = |D|$, we denote by $D_k$ the subset of examples in class $k \in \{0,1\}$,
and set $n_k = |D_k|$ and $\pi_k = n_k / n$. Clearly $\pi_0 + \pi_1 = 1$. We will use the term \emph{class proportion} for $\pi_0$ (other terms such as `class ratio' or `class prior' have been used in the literature).
Given a \model and a threshold $t$, we denote by $R(t)$ the predicted positive rate, i.e., the proportion of examples that will be predicted positive (class 0) is threshold is set at $t$. 
This can also be defined as $R(t) = \pi_0F_0(t) + \pi_1F_1(t)$.
The average score of actual class $k$ is 
$\m{s}_k= \int_{0}^{1}  s f_k(s)ds$.
Given any strict order for a dataset of $n$ examples we will use the index $i$ on that order to refer to the $i$-th example. Thus, $s_i$ denotes the score of the $i$-th example and $y_i$ its true class.  


We define partial class accuracies as $Acc_0(t) = F_0(t)$ and $Acc_1(t) = 1-F_1(t)$. From here, (micro-average) accuracy is defined as $Acc(t) = \pi_0 Acc_0(t) + \pi_1 Acc_1(t)$ and macro-average accuracy ${\macro}Acc(t) = (Acc_0(t) + Acc_1(t))/2$.

We denote by $U_S(x)$ the continuous uniform distribution of variable $x$ over an interval $S \subset {\mathds{R}}$. If this interval $S$ is $[0,1]$ then $S$ can be omitted. 
The family of continuous distributions Beta is denoted by $B_{\alpha,\beta}$. The Beta distributions are always defined in the interval $[0,1]$. Note that the continuous distribution is a special case of the Beta family, i.e., $B_{1,1}=U$. 

\subsection{Operating conditions and expected loss}\label{sec:operating}

When a \model is deployed for classification, the conditions might be different to those during training. In fact, a \model can be used in several deployment contexts, with different results. A context can entail different class distributions, different classification-related costs (either for the attributes, for the class or any other kind of cost), or some other details about the effects that the application of a model might entail and the severity of its errors.
In practice, a deployment context or \emph{operating condition} is usually defined by a misclassification cost function and a class distribution. 
Clearly, there is a difference between operating when the cost of misclassifying $0$ into $1$ is equal to the cost of misclassifying $1$ into $0$ and doing so when the former is ten times the latter. Similarly, operating when classes are balanced is different from when there is an overwhelming majority of instances of one class.

One general approach to cost-sensitive learning assumes that the cost does not depend on the example but only on its class. In this way, misclassification costs are usually simplified by means of cost matrices, where we can express that some misclassification costs are higher than others \cite{Elk01}. Typically, the costs of correct classifications are assumed
to be 0. This means that for binary \models we can describe the cost matrix by two values $c_k \geq 0$, representing the misclassification cost of an example of class $k$. Additionally, we can normalise the costs by setting $b = c_0 + c_1$ and $c = c_0 / b$; 
we will refer to $c$ as the \emph{cost proportion}. 
Since this can also be expressed as $c=(1 + c_1/c_0)^{-1}$, it is often called `cost ratio' even though, technically, it is a proportion ranging between $0$ and $1$. 

The loss which is produced at a decision threshold $t$ and a cost proportion $c$ is then given by the formula:
\begin{align}\label{eqQcost}
\Qcost(t; c) & \triangleq c_{0} \pi_0 (1 -F_0(t)) + c_{1} \pi_1 F_1(t) \\
 & = b\{c \pi_0 (1 -F_0(t)) + (1-c) \pi_1 F_1(t)\} \nonumber
\end{align}
%
This notation assumes the class distribution to be fixed. In order to take both class proportion and cost proportion into account we introduce the notion of \emph{skew}, which is a normalisation of their product:
\begin{align}\label{eqSkew}
\sk & \triangleq \frac{c_0\pi_0}{c_0\pi_0 + c_1\pi_1} = \frac{c\pi_0}{c\pi_0 + (1-c)(1-\pi_0)}
\end{align}
From equation~(\ref{eqQcost}) we obtain
\begin{align}\label{eqQsk}
\frac{\Qcost(t; c)}{c_0\pi_0 + c_1\pi_1} = \sk (1 -F_0(t)) + (1-\sk) F_1(t) & \triangleq \Qsk(t; \sk)
\end{align}
This gives an expression for loss at a threshold $t$ and a skew $\sk$. 
We will assume that the operating condition is either defined by the cost proportion (using a fixed class distribution) or by the skew.
We then have the following simple but useful result.
\begin{lemma}\label{lemma-balance}
If $\pi_0 = \pi_1$ then $z=c$ and $\Qsk(t; \sk) = {2 \over b} \Qcost(t; c)$. 
\end{lemma}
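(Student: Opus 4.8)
The plan is to verify the two claims directly from the definitions, since the lemma is essentially an elementary substitution. I would start from the hypothesis $\pi_0 = \pi_1$, which by $\pi_0 + \pi_1 = 1$ forces $\pi_0 = \pi_1 = 1/2$.

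First I would establish $z = c$. Plugging $\pi_0 = 1/2$ into the second expression for skew in~(\ref{eqSkew}), the factor $\pi_0$ cancels top and bottom: $z = \frac{c\cdot\frac12}{c\cdot\frac12 + (1-c)\cdot\frac12} = \frac{c}{c + (1-c)} = c$. So the skew collapses to the cost proportion exactly when the classes are balanced.

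Next I would relate the two loss expressions. Starting from~(\ref{eqQsk}), the normalising denominator is $c_0\pi_0 + c_1\pi_1 = \frac12(c_0 + c_1) = \frac{b}{2}$ using $b = c_0 + c_1$. Hence $\Qsk(t;z) = \frac{2}{b}\,\Qcost(t;c)$, provided the arguments match — and they do, since we have just shown $z = c$. Alternatively one can check this from the second line of~(\ref{eqQcost}): with $\pi_0 = \pi_1 = \frac12$ and $z = c$, the bracketed term $c\pi_0(1-F_0(t)) + (1-c)\pi_1 F_1(t)$ equals $\frac12\{z(1-F_0(t)) + (1-z)F_1(t)\} = \frac12 \Qsk(t;z)$, so $\Qcost(t;c) = b\cdot\frac12\Qsk(t;z) = \frac{b}{2}\Qsk(t;z)$, which rearranges to the claim.

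There is no real obstacle here; the only thing to be careful about is bookkeeping — making sure the skew identity $z=c$ is established \emph{before} invoking it to align the arguments of $\Qsk$ and $\Qcost$, and correctly identifying $c_0\pi_0 + c_1\pi_1$ with $b/2$ under the balanced-class assumption. Everything follows by direct substitution into equations~(\ref{eqQcost}),~(\ref{eqSkew}) and~(\ref{eqQsk}).
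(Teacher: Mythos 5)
Your proposal is correct and follows essentially the same route as the paper's own proof: balanced classes give $c_0\pi_0 + c_1\pi_1 = b/2$, and the identities $z=c$ and $\Qsk(t;\sk) = \frac{2}{b}\Qcost(t;c)$ then follow by direct substitution into Equations~(\ref{eqSkew}) and~(\ref{eqQsk}). You merely spell out the algebra that the paper leaves implicit, which is fine.
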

\begin{proof}
If classes are balanced we have $c_0\pi_0 + c_1\pi_1 = b/2$, and the result follows from Equation~(\ref{eqSkew}) and Equation~(\ref{eqQsk}). 
\end{proof}
\noindent This justifies taking $b=2$, which means that $\Qsk$ and $\Qcost$ are expressed on the same 0-1 scale, and are also commensurate with error rate which assumes $c_{0}=c_{1}=1$. 
The upshot of Lemma~\ref{lemma-balance} is that we can transfer any expression for loss in terms of cost proportion to an equivalent expression in terms of skew by just setting $\pi_0 = \pi_1=1/2$ and $z=c$. 
Notice that if $c_{0}=c_{1}=1$ then $\sk=\pi_{0}$, so in that case skew denotes the class distribution as operating condition. 



It is important to distinguish the information we may have available at each stage of the process. At {evaluation time} we may not have some information that is available later, at {deployment time}. In many real-world problems, when we have to evaluate or compare \models, we do not know the cost proportion or skew that will apply during deployment. One general approach is to evaluate the \model on a range of possible operating points. In order to do this, we have to set a weight or distribution on cost proportions or skews. In this paper, we will mostly consider the continuous uniform distribution $U$ (but other distribution families, such as the Beta distribution could be used).

A key issue when applying a \model under different operating conditions is how the threshold is chosen in each of them. If we work with a \classifier, this question vanishes, since the threshold is already settled. However, in the general case when we work with a \model, we have to decide how to establish the threshold. The key idea proposed in this paper is the notion of a \thresholdchoicemethod, a function which converts an operating condition 
into an appropriate threshold for the classifier. 
\begin{definition}
{\bf \Thresholdchoicemethod}. 
A \thresholdchoicemethod is a (possibly non-deterministic) function $T:[0,1] \rightarrow \mathds{R}$ such that given an operating condition it returns a decision threshold.
The operating condition can be either a skew $\sk$ or a cost proportion $c$; to differentiate these we use the subscript $\sk$ or $c$ on $T$. 
Superscripts are used to identify particular threshold choice methods. 
Some \thresholdchoicemethods we consider in this paper take additional information into account, such as a default threshold or a target predicted positive rate; such information is indicated by square brackets. 
So, for example, the score-fixed \thresholdchoicemethod for cost proportions considered in the next section is indicated thus: 
$\Tcostsf[t](c)$. 
\end{definition}
\noindent When we say that $T$ may be non-deterministic, it means that the result may depend on a random variable and hence may itself be a random variable according to some distribution. 

We introduce the \thresholdchoicemethod as an abstract concept since there are several reasonable options for the function $T$,
essentially because there may be different degrees of information about the model and the operating conditions at evaluation time. We can set a fixed threshold ignoring the operating condition; we can set the threshold by looking at the ROC curve (or its convex hull) and using the cost proportion or the skew to intersect the ROC curve (as ROC analysis does); we can set a threshold looking at the estimated scores, especially when they represent probabilities; or we can set a threshold independently from the rank or the scores. The way in which we set the threshold may dramatically affect performance. But, not less importantly, the \performancemetric used for evaluation must be in accordance with the \thresholdchoicemethod.

In the rest of this paper, we explore a range of different methods to choose the threshold (some deterministic and some non-deterministic).
We will give proper definitions of all these \thresholdchoicemethods in its due section.

Given a threshold choice function $\Tcost$, the loss for a particular cost proportion is given by $\Qcost(\Tcost(c); c)$. 
Following Adams and Hand \cite{AdamsHand1999} we define expected loss as a weighted average over operating conditions. 
\begin{definition}\label{def:Lcost}
Given a \thresholdchoicemethod for cost proportions $\Tcost$ and a probability density function over cost proportions $\wcost$, \emph{expected loss} $\Lcost$ is defined as 
\begin{equation}\label{eqLcost}
\Lcost \triangleq \int^1_0{} \Qcost(\Tcost(c); c) \wcost(c) dc 
\end{equation}
%
Incorporating the class distribution into the operating condition we obtain expected loss over a distribution of skews:
\begin{equation}\label{eqLsk}
\Lsk \triangleq \int^1_0{} \Qsk(\Tsk(\sk); \sk) \wsk(\sk) d\sk
\end{equation}
\end{definition}
It is worth noting that if we plot $\Qcost$ or $\Qsk$ against $c$ and $\sk$, respectively, we obtain \emph{cost curves} as defined by \cite{DH00,drummond-and-Holte2006}. Cost curves are also known as risk curves (see, e.g. \cite{reid2011information}, where the plot can also be shown in terms of {\em priors}, i.e., class proportions).

Equations (\ref{eqLcost}) and (\ref{eqLsk}) illustrate the space we explore in this paper. Two parameters determine the expected loss: $\wcost(c)$ and $\Tcost(c)$ (respectively $\wsk(\sk)$ and $\Tsk(\sk)$). While much work has been done on a first dimension, by changing $\wcost(c)$ or $\wsk(\sk)$, particularly in the area of proper scoring rules, no work has systematically analysed what happens when changing the second dimension, $\Tcost(c)$ or $\Tsk(\sk)$.

%

\section{Expected loss for fixed-threshold \classifiers}\label{sec:fixed}





The easiest way to choose the threshold is to set it to a pre-defined value $t_{fixed}$, independently from the \model and also from the operating condition. 
%
%
This is, in fact, what many classifiers do (e.g. Naive Bayes chooses $t_{fixed} = 0.5$ independently from the \model and independently from the operating condition). 
We will see the straightforward result that this \thresholdchoicemethod corresponds to 0-1 loss (either micro-average accuracy, \acc, or macro-average accuracy, \macro\acc). Part of these results will be useful to better understand some other \thresholdchoicemethods.

\begin{definition}\label{def:scorefixed}
The {\em \scorefixed \thresholdchoicemethod} is defined as follows:
\begin{equation}\label{eqTf}
\Tcostsf[t](c) \triangleq \Tsksf[t](\sk) \triangleq  t
\end{equation}
\end{definition}


This choice has been criticised in two ways, but is still frequently used. Firstly, choosing 0.5 as a threshold is not generally the best choice even for balanced datasets or for applications where the test distribution is equal to the training distribution (see, e.g. \cite{lachiche2003improving} on how to get much more from a Bayes classifier by simply changing the threshold). Secondly, even if we are able to find a better value than 0.5, this does not mean that this value is best for every skew or cost proportion ---this is precisely one of the reasons why ROC analysis is used \cite{provost2001robust}. 
Only when we know the deployment operating condition at evaluation time is it reasonable to fix the threshold according to this information.
So either by common choice or because we have this latter case, consider then that we are going to use the same threshold $t$ independently of skews or cost proportions. Given this  \thresholdchoicemethod, then the question is: {\em if we must evaluate a \model before application for a wide range of skews and cost proportions, which \performancemetric should be used?}
This is what we answer below.

If we plug $\Tcostsf$ (Equation (\ref{eqTf})) into the general formula of the expected loss for a range of cost proportions (Equation (\ref{eqLcost})) we have:
\begin{equation}\label{eqLcostf}
\Lcostsf(t) \triangleq \int^1_0{} \Qcost(\Tcostsf[t](c); c) \wcost(c) dc 
\end{equation}
We obtain the following straightforward result.

\begin{theorem}\label{thm:LcostsfU}
If a \classifier sets the decision threshold at a fixed value irrespective of the operating condition or the \model, then expected loss under a uniform distribution of cost proportions is equal to the error rate at that decision threshold. 
\end{theorem}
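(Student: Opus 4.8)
The plan is to substitute the relevant definitions and evaluate an elementary integral. First I would plug the score-fixed threshold choice method $\Tcostsf[t](c) \triangleq t$ into Equation~(\ref{eqLcostf}) and use the uniform density $\wcost(c) = 1$ on $[0,1]$, which reduces the expected loss to $\Lcostsf(t) = \int_0^1 \Qcost(t; c)\, dc$. Then I would expand the integrand using the second line of Equation~(\ref{eqQcost}), namely $\Qcost(t; c) = b\{c\,\pi_0(1-F_0(t)) + (1-c)\,\pi_1 F_1(t)\}$, observing that $t$ is constant so that $F_0(t)$ and $F_1(t)$ can be pulled outside the integral.

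Next I would compute the two trivial integrals $\int_0^1 c\, dc = \int_0^1 (1-c)\, dc = 1/2$, which gives $\Lcostsf(t) = \tfrac{b}{2}\{\pi_0(1-F_0(t)) + \pi_1 F_1(t)\}$. Adopting the normalisation $b = 2$ justified after Lemma~\ref{lemma-balance} (so that losses lie on the same 0--1 scale and are commensurate with error rate), this simplifies to $\Lcostsf(t) = \pi_0(1-F_0(t)) + \pi_1 F_1(t)$.

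Finally I would identify the right-hand side with the error rate at threshold $t$. Using the partial class accuracies $Acc_0(t) = F_0(t)$ and $Acc_1(t) = 1 - F_1(t)$ introduced earlier, we have $\pi_0(1-F_0(t)) + \pi_1 F_1(t) = \pi_0(1 - Acc_0(t)) + \pi_1(1 - Acc_1(t)) = 1 - \big(\pi_0 Acc_0(t) + \pi_1 Acc_1(t)\big) = 1 - Acc(t)$, which is exactly the 0--1 loss at that decision threshold, completing the proof. The corresponding statement in terms of skews follows immediately from the transfer principle noted after Lemma~\ref{lemma-balance} by setting $\pi_0 = \pi_1 = 1/2$ and $z = c$.

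There is essentially no substantive obstacle here; the only points requiring care are bookkeeping ones — respecting the convention that class $0$ is the positive class (so that $1-F_0(t)$ is the false negative rate and $F_1(t)$ the false positive rate), and fixing $b = 2$ so that the conclusion holds literally as "equal to the error rate" rather than merely up to a positive constant factor.
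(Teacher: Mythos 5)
Your proof is correct and follows essentially the same route as the paper: substitute the fixed threshold into the expected-loss integral, pull $F_0(t)$ and $F_1(t)$ out, evaluate $\int_0^1 c\,dc = \int_0^1(1-c)\,dc = 1/2$ with the normalisation $b=2$, and identify $\pi_0(1-F_0(t)) + \pi_1 F_1(t)$ with $1-\acc(t)$. The remarks on the $b=2$ convention and the skew corollary via Lemma~\ref{lemma-balance} match the paper's treatment as well.
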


\begin{proof}
\begin{eqnarray*}
\LcostsfU(t) & = & \int^1_0{} \Qcost(\Tcostsf[t](c); c) U(c) dc = \int^1_0{} \Qcost(t; c) dc  \\
            & = &  \int^1_0{} 2 \{c \pi_0 (1 -F_0(t)) + (1-c)\pi_1 F_1(t)\} dc \\
%
%
& = & 2\pi_0 (1 -F_0(t)) \int^1_0{} cdc  + 2\pi_1 F_1(t) \int^1_0{}(1-c)dc  \\
            & = & 2\pi_0 (1 -F_0(t)) (1/2)  + 2\pi_1 F_1(t) (1/2)  
             =  \pi_0 (1-F_0(t)) + \pi_1 F_1(t)  = 1 - \acc(t)
\end{eqnarray*}
In words, the expected loss is equal to the class-weighted average of false positive rate and false negative rate, which is the (micro-average) error rate. 
\end{proof}

So, the expected loss under a uniform distribution of cost proportions for the {\em \scorefixed \thresholdchoicemethod} is the error rate of the classifier at that threshold. That means that accuracy can be seen as a measure of classification performance in a range of costs proportions when we choose a fixed threshold. This interpretation is reasonable, since accuracy is a \performancemetric which is typically applied to \classifiers (where the threshold is fixed) and not to \models outputting scores.  This is exactly what we did in Table \ref{tab:models2}. We calculated the expected loss for the fixed threshold at 0.5 for a uniform distribution of cost proportions, and we got $1 - \acc$ = $0.51$ and $0.375$ for models $A$ and $B$ respectively.

Similarly, if we plug $\Tsksf$ (Equation (\ref{eqTf})) into the general formula of the expected loss for a range of skews (Equation (\ref{eqLsk})) we have:
\begin{equation}\label{eqLskf}
\Lsksf(t) \triangleq \int^1_0{} \Qsk(\Tsksf[t](\sk); \sk) \wsk(\sk) d\sk 
\end{equation}
Using Lemma \ref{lemma-balance} we obtain the equivalent result for skews: 

\begin{corollary}\label{thm:LsksfU}
If a \classifier sets the decision threshold at a fixed value irrespective of the operating condition or the \model, then expected loss under a uniform distribution of skews is equal to the macro-average error rate at that decision threshold: 
$\LsksfU(t) =  (1-F_0(t))/2 + F_1(t)/2 = 1 - \macro\acc(t)$. 
\end{corollary}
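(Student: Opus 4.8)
The plan is to mirror the proof of Theorem~\ref{thm:LcostsfU} one level down, replacing cost proportions by skews. I would start from Equation~(\ref{eqLskf}), substitute the uniform density $\wsk = U$ together with the score-fixed rule $\Tsksf[t](\sk) = t$ from Equation~(\ref{eqTf}), so that $\LsksfU(t) = \int_0^1 \Qsk(t;\sk)\,d\sk$. Expanding $\Qsk(t;\sk)$ via Equation~(\ref{eqQsk}) as $\sk(1-F_0(t)) + (1-\sk)F_1(t)$ and noting that $F_0(t)$ and $F_1(t)$ are constant in $\sk$, the computation reduces to the two elementary integrals $\int_0^1 \sk\,d\sk = 1/2$ and $\int_0^1 (1-\sk)\,d\sk = 1/2$, which give $\LsksfU(t) = (1-F_0(t))/2 + F_1(t)/2$.

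To close, I would recall the definitions $Acc_0(t) = F_0(t)$, $Acc_1(t) = 1 - F_1(t)$, and ${\macro}Acc(t) = (Acc_0(t)+Acc_1(t))/2$, so that $1 - {\macro}Acc(t) = 1 - (F_0(t) + 1 - F_1(t))/2 = (1-F_0(t))/2 + F_1(t)/2$, matching the expression obtained above. This identifies the expected loss with the unweighted average of the false negative rate $1-F_0(t)$ and the false positive rate $F_1(t)$, i.e. the macro-average error rate.

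An even shorter route, which the surrounding text seems to prefer, avoids re-integrating: Theorem~\ref{thm:LcostsfU} already gives $\LcostsfU(t) = \pi_0(1-F_0(t)) + \pi_1 F_1(t)$, and by the remark following Lemma~\ref{lemma-balance} the skew version is obtained simply by setting $\pi_0 = \pi_1 = 1/2$ (and $\sk = c$), which immediately yields $(1-F_0(t))/2 + F_1(t)/2 = 1 - {\macro}Acc(t)$. Either way there is no genuine obstacle; the only point requiring care is the paper's convention that class $0$ is the positive class, so that $1-F_0(t)$ plays the role of the false negative rate and $F_1(t)$ that of the false positive rate, and it is their \emph{unweighted} (rather than class-weighted) average that appears --- precisely the difference between macro- and micro-average error rate.
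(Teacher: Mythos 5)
Your proposal is correct, and your preferred ``shorter route'' via Lemma~\ref{lemma-balance} (setting $\pi_0=\pi_1=1/2$ and $\sk=c$ in Theorem~\ref{thm:LcostsfU}) is exactly the paper's own one-line argument. The direct re-integration you also sketch is a valid but redundant alternative.
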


The previous results show that 0-1 losses are appropriate to evaluate \models in a range of operating conditions if the threshold is fixed for all of them. In other words, accuracy and macro-accuracy can be the right \performancemetrics for \classifiers even in a cost-sensitive learning scenario. The situation occurs when one assumes a particular operating condition at evaluation time while the classifier has to deal with a range of operating conditions in deployment time.




In order to prepare for later results we also define a particular way of setting a fixed classification threshold, namely to achieve a particular predicted positive rate. 
One could say that such a method {\em quantifies} the proportion of positive predictions made by the \classifier.
For example, we could say that our threshold is fixed to achieve a \rate of 30\% positive predictions and the rest negatives.
This of course involves ranking the examples by their scores and setting a cutting point at the appropriate position, something which is frequently known as `screening'.


\newcommand{\Tcostrfi}{R^{-1}}
\newcommand{\Tskrfi}{R^{-1}_{\sk}}

\begin{definition}\label{def:Tq2}
Define the predicted positive rate at threshold $t$ as 
$R(t) = \pi_0F_0(t) + \pi_1F_1(t)$,
and assume the cumulative distribution functions $F_{0}$ and $F_{1}$ are invertible, then we define the {\em \ratefixed \thresholdchoicemethod} for rate $r$ as: 
\begin{equation}\label{eqTq2}
\Tcostrf[\ra](c) \triangleq \Tcostrfi(\ra)
\end{equation}
If $F_{0}$ and $F_{1}$ are not invertible, they have plateaus and so does $R$. This can be handled by deriving $t$ from the centroid of a plateau. 

The \ratefixed \thresholdchoicemethod for skews is defined as:
\begin{equation}\label{eqTskewrf}
\Tskrf[\ra](\sk) \triangleq \Tskrfi(\ra)
\end{equation}
where $R_{\sk}(t) = F_0(t)/2 + F_1(t)/2$.
\end{definition}
%
%
%
The corresponding expected loss for cost proportions is
\begin{equation}\label{eqLcostrf}
\Lcostrf \triangleq \int^1_0{} \Qcost(\Tcostrf[\ra](c); c) \wcost(c) dc = \int^1_0{} \Qcost(\Tcostrfi(\ra); c) \wcost(c) dc 
\end{equation}

The notion of setting a threshold based on a rate is closely related to the problem of quantification \cite{DBLP:journals/datamine/Forman08,bella2010quantification} where the goal is to correctly estimate the proportion for each of the classes (in the binary case, the positive rate is sufficient). This \thresholdchoicemethod allows the user to set the quantity of positives, which can be known (from a sample of the test) or can be estimated using a quantification method. In fact, some quantification methods can be seen as methods to determine an absolute fixed threshold $t$ that ensures a correct proportion for the test set.

Note that Equation (\ref{eqLcostrf}) is closely related to Theorem \ref{thm:LcostsfU}. If we determine the threshold which produces a rate, i.e., if we determine $\Tcostrfi(\ra)$, we get the expected loss as an accuracy. Formally, we have:

\begin{equation}\label{eqLcostrfU}
\LcostrfU = 1 - \acc(\Tcostrfi(\ra))
\end{equation}

Fortunately, it is immediate to get the threshold which produces a rate; it can just be derived by sorting the examples by their scores and placing the cutpoint where the rate equals the rank divided by the number of examples (e.g. if we have $n$ examples, the cutpoint $i$ makes $\ra=i/n$).

\section{\Thresholdchoicemethods using scores} \label{sec:scores}


In the previous section we looked at accuracy and error rate as \performancemetrics for \classifiers and gave their interpretation as expected losses. In this and the following sections we consider \performancemetrics for \models that do not require fixing a \thresholdchoicemethod in advance. Such metrics include \auc which evaluates ranking performance and the Brier score or mean squared error which evaluates the quality of probability estimates. We will deal with the latter in this section. 
We will therefore assume that scores range between 0 and 1 and represent posterior probabilities for class 1. 
This means that we can sample thresholds uniformly or derive them from the operating condition. 
We first introduce two \performancemetrics that are applicable to probabilistic scores.


The Brier score is a well-known \performancemetric for probabilistic \models. It is an alternative name for the Mean Squared Error or MSE loss \cite{brier1950verification}, especially for binary classification. 
\begin{definition}\label{def:BS}
$\bs(m,D)$ denotes the Brier score of \model $m$ on data $D$; we will usually omit $m$ and $D$ when clear from the context. 
\bs is defined as follows:
\begin{align*}\label{eqBS}
\bs & \triangleq   
 \pi_0 \bs_0 + \pi_1 \bs_1 \\
\bs_{0} & \triangleq \int_{0}^{1} s^{2}f_{0}(s) ds \\
\bs_{1} & \triangleq \int_{0}^{1} (1-s)^{2}f_{1}(s) ds 
\end{align*}
From here, we can define a prior-independent version of the Brier score (or a macro-average Brier score) as follows:           
\begin{equation}\label{eqMBS}
\macro\bs \triangleq   \frac{ \bs_0 +  \bs_1}{2}
\end{equation}
\end{definition}

The Mean Absolute Error (\mae) is another simple \performancemetric which has been rediscovered many times under different names. 
\begin{definition}\label{def:MAE}
$\mae(m,D)$ denotes the Mean Absolute Error of \model $m$ on data $D$; we will again usually omit $m$ and $D$ when clear from the context. 
\mae is defined as follows:
\begin{align*}\label{eqMAE}
\mae & \triangleq    
\pi_0 \mae_0 + \pi_1 \mae_1 \\
\mae_{0} & \triangleq \int_{0}^{1} s  f_{0}(s) ds = \m{s}_0\\
\mae_{1} & \triangleq \int_{0}^{1} (1-s)  f_{1}(s) ds = 1 - \m{s}_1
\end{align*}
%
We can define a macro-average \mae as follows:
\begin{equation}\label{eqmacroMAE}
{\macro}\mae \triangleq    \frac{\mae_0 + \mae_1}{2} = \frac{\m{s}_0 + (1 - \m{s}_1)}{2}
\end{equation}
\end{definition}
\noindent
It can be shown that \mae is equivalent to the Mean Probability Rate (MPR) \cite{LL02} for discrete classification  \cite{PRL09}.
%



\subsection{The \scoreuniform \thresholdchoicemethod leads to \mae}\label{sec:scoreuniform}

We now demonstrate how varying a \model's threshold leads to an expected loss that is different from accuracy. 
First, we explore a \thresholdchoicemethod which considers that we have no information at all about the operating condition, neither at evaluation time nor at deployment time. We just  employ the interval between the maximum and minimum value of the scores, and we randomly select the threshold using a uniform distribution over this interval.

%

\begin{definition}\label{def:Tcostsu}
Assuming a \model's scores are expressed on a bounded scale $[l,u]$, the {\em \scoreuniform \thresholdchoicemethod} is defined as follows:
\begin{equation}\label{eqTs}
\Tcostsu(c) \triangleq \Tsksu(\sk) \triangleq  \Tcostsf[U_{l,u}](c)
\end{equation}
\end{definition}

Given this \thresholdchoicemethod, then the question is: {\em if we must evaluate a \model before application for a wide range of skews and cost proportions, which \performancemetric should be used?}

\begin{theorem}\label{thm:LcostsU}
Assuming probabilistic scores and the \scoreuniform \thresholdchoicemethod, expected loss under a uniform distribution of cost proportions is equal to the \model's mean absolute error. 
\end{theorem}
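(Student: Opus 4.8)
The plan is to substitute the \scoreuniform \thresholdchoicemethod into the expected-loss integral of Definition~\ref{def:Lcost} and reduce it to a double integral over the threshold and the cost proportion. Since the scores are assumed probabilistic we take the bounded scale to be $[l,u]=[0,1]$, so by definition $\Tcostsu(c)=\Tcostsf[U_{0,1}](c)$; that is, the threshold is a random variable distributed as $U_{[0,1]}$, drawn independently of $c$. Hence $\Qcost(\Tcostsu(c);c)$ is itself a random variable, and — reading expected loss as also averaging over this internal randomness, as announced after the definition of \thresholdchoicemethod — its value at cost proportion $c$ is $\int_0^1 \Qcost(t;c)\,dt$. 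Averaging over a uniform distribution of cost proportions and invoking Fubini gives
\[
\LcostsuU = \int_0^1\!\!\int_0^1 \Qcost(t;c)\,dt\,dc .
\]

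Next I would plug in $\Qcost(t;c)=2\{c\,\pi_0(1-F_0(t))+(1-c)\,\pi_1 F_1(t)\}$ (taking $b=2$, as justified after Lemma~\ref{lemma-balance}) and perform the $c$-integration first, using $\int_0^1 c\,dc=\int_0^1(1-c)\,dc=1/2$. This collapses the expression to $\int_0^1\{\pi_0(1-F_0(t))+\pi_1 F_1(t)\}\,dt=\int_0^1(1-\acc(t))\,dt$, i.e.\ the threshold-integral of the very error rate that appears in Theorem~\ref{thm:LcostsfU}. The remaining work is to evaluate $\int_0^1(1-F_0(t))\,dt$ and $\int_0^1 F_1(t)\,dt$ by integration by parts: because all scores lie in $[0,1]$ we have $F_0(1)=1$ and the boundary term $t\,F_1(t)$ vanishes at $t=0$, so $\int_0^1(1-F_0(t))\,dt=\int_0^1 t f_0(t)\,dt=\m{s}_0=\mae_0$ and $\int_0^1 F_1(t)\,dt=1-\int_0^1 t f_1(t)\,dt=1-\m{s}_1=\mae_1$. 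Combining, $\LcostsuU=\pi_0\mae_0+\pi_1\mae_1=\mae$.

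I do not anticipate a serious obstacle; the two points that need genuine care are (i) correctly handling the non-deterministic threshold, so that the expectation over $t\sim U_{[0,1]}$ enters as an extra integral in Equation~(\ref{eqLcost}) — this is legitimate precisely because the drawn threshold is independent of $c$, which lets Fubini exchange the order of integration — and (ii) verifying that the integration-by-parts boundary terms vanish, which is exactly where the hypothesis of a bounded $[0,1]$ score scale (probabilistic scores) is used. An alternative route integrates over $t$ before $c$ and never uses Theorem~\ref{thm:LcostsfU}, but the order above makes the connection to the fixed-threshold result transparent.
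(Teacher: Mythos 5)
Your proof is correct and follows essentially the same route as the paper's: treat the random threshold as an extra uniform integration, average $\Qcost(t;c)$ over $c$ (the paper does this implicitly by noting the method ignores $c$ and working at $c=1/2$, which is the same computation since $\Qcost$ is affine in $c$), and then use integration by parts on $\int_0^1 F_k(t)\,dt$ to obtain $\pi_0\m{s}_0+\pi_1(1-\m{s}_1)=\mae$. The only cosmetic difference is that the paper keeps a general score interval $[l,u]$ until the final step, whereas you specialise to $[0,1]$ at the outset.
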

\begin{proof}
First of all we note that the \thresholdchoicemethod does not take the operating condition $c$ into account, and hence we can work with $c=1/2$. 
Then, 
\begin{align*}
\LcostsuU & = \Qcost(\Tcostsu(1/2); 1/2)  = \Qcost(\Tcostsf[U_{l,u}](1/2); 1/2)  = \int^u_l{} \Qcost(\Tcostsf[t](1/2); 1/2)\frac{1}{u-l} dt \\
 & = \frac{1}{u-l}\int^u_l{} \Qcost(t; 1/2) dt 
 = \frac{1}{u-l}\int^u_l{} \{\pi_0(1-F_0(t)) + \pi_1F_1(t)\} dt  =  \frac{\pi_0(\m{s}_0-l) + \pi_1(u-\m{s}_1)}{(u-l)} 
\end{align*}
%
%
%
%
The last step makes use of the following useful property. 
\begin{eqnarray*}
\int_{l}^{u} F_k(t)dt
 = [tF_k(t)]^{u}_{l} - \int_{l}^{u} tf_k(t)dt
 = uF_k(u) - lF_k(l) - \m{s}_k
 = u-\m{s}_k\label{eq:prop62}
\end{eqnarray*}
%
Setting $l=0$ and $u=1$ for probabilistic scores, we obtain the final result: 
\begin{eqnarray*}
\LcostsuU = {{\pi_0 \m{s}_0 + \pi_1 (1 - \m{s}_1)} = \mae }
\end{eqnarray*}
\end{proof}


This gives a baseline loss if we choose thresholds randomly and independently of the model. 
Using Lemma \ref{lemma-balance} we obtain the equivalent result for skews: 

\begin{corollary}\label{thm:LsksuU}
Assuming probabilistic scores and the \scoreuniform \thresholdchoicemethod, expected loss under a uniform distribution of skews is equal to the \model's macro-average mean absolute error: 
\begin{eqnarray*}
\LsksuU & = & {{\m{s}_0 + (1 - \m{s}_1)} \over 2} = \mmae \label{eq:baselineskews}
\end{eqnarray*}
\end{corollary}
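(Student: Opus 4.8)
The plan is to derive this as an immediate corollary of Theorem~\ref{thm:LcostsU}, using the transfer principle of Lemma~\ref{lemma-balance} in exactly the same way that Corollary~\ref{thm:LsksfU} follows from Theorem~\ref{thm:LcostsfU}. The crucial point is that the \scoreuniform \thresholdchoicemethod ignores the operating condition: $\Tsksu(\sk)$ is the same uniform draw from $U_{l,u}$ for every skew, so it is a legitimate target for the substitution $\pi_0 = \pi_1 = 1/2$, $\sk = c$ that Lemma~\ref{lemma-balance} licenses. Recalling $b = 2$ (so that $\Qsk = \Qcost$ when classes are balanced), $\LsksuU$ is then just $\LcostsuU$ evaluated with $\pi_0 = \pi_1 = 1/2$; by Theorem~\ref{thm:LcostsU} the latter equals $\pi_0 \m{s}_0 + \pi_1(1 - \m{s}_1)$, which collapses to $\tfrac12 \m{s}_0 + \tfrac12(1 - \m{s}_1) = \mmae$ by Equation~(\ref{eqmacroMAE}).

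Alternatively, and perhaps more transparently, I would reprove it directly in one line, mirroring the proof of Theorem~\ref{thm:LcostsU}: since the method is independent of the operating condition we may fix $\sk = 1/2$, so that $\LsksuU = \Qsk(\Tsksu(1/2); 1/2) = \int_0^1 \bigl\{\tfrac12(1 - F_0(t)) + \tfrac12 F_1(t)\bigr\}\,dt$; applying the identity $\int_0^1 F_k(t)\,dt = 1 - \m{s}_k$ (established inside the proof of Theorem~\ref{thm:LcostsU}) to each term gives $\tfrac12 \m{s}_0 + \tfrac12(1 - \m{s}_1)$, i.e.\ $\mmae$.

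There is essentially no real obstacle; the only care needed is bookkeeping. One must keep the $b = 2$ normalisation in force so that $\Qsk$ and $\Qcost$ are genuinely on the same $0$--$1$ scale, and one must invoke the bounded-scale assumption $[l,u] = [0,1]$ for probabilistic scores, which is precisely what reduces $\int_l^u F_k(t)\,dt = u - \m{s}_k$ to $1 - \m{s}_k$. Both points are already settled in the surrounding text, so the corollary is a one- or two-line consequence of what precedes it.
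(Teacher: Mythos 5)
Your proposal is correct and matches the paper's own route: the paper derives this corollary precisely by transferring Theorem~\ref{thm:LcostsU} through Lemma~\ref{lemma-balance} (setting $\pi_0=\pi_1=1/2$, $\sk=c$, $b=2$), which is your first argument. Your alternative direct computation with $\sk=1/2$ and the identity $\int_0^1 F_k(t)\,dt = 1-\m{s}_k$ is also sound, but it is just an unfolding of the same reasoning rather than a genuinely different approach.
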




\subsection{The \scoredriven \thresholdchoicemethod leads to the Brier score}\label{sec:scoredriven}

We will now consider the first \thresholdchoicemethod to take the operating condition into account. 
Since we are dealing with probabilistic scores, this method 
simply sets the threshold equal to
the operating condition (cost proportion or skew). 
This is a natural criterion as it has been used especially when the \model is a probability estimator and we expect to have perfect information about the operating condition at deployment time.
In fact, this is a direct choice when working with proper scoring rules, since when rules are proper, scores are assumed to be a probabilistic assessment. The use of this \thresholdchoicemethod can be traced back to Murphy \cite{murphy1966note} and, perhaps, implicitly, much earlier.
More recently, and in a different context from proper scoring rules, Drummond and Holte \cite{drummond-and-Holte2006} say it is a common example of a ``performance independence criterion''. Referring to figure 22 in their paper which uses the \scoredriven \thresholdchoice they say: ``the performance independent criterion, in this case, is to set the threshold to correspond to the operating
conditions. For example, if $PC(+) = 0.2$ the Naive Bayes threshold is set to 0.2". The term $PC(+)$ is equivalent to our `skew'.

\begin{definition}\label{def:Tcostp}
Assuming \model's scores are expressed on a probability scale $[0,1]$, the {\em \scoredriven \thresholdchoicemethod} is defined for cost proportions as follows:
\begin{equation}\label{eqTcostp}
\Tcostsd(c) \triangleq  c
\end{equation}
and for skews as
\begin{equation}\label{eqTskp}
\Tsksd(\sk) \triangleq \sk 
\end{equation}
\end{definition}


Given this \thresholdchoicemethod, then the question is: {\em if we must evaluate a \model before application for a wide range of skews and cost proportions, which \performancemetric should be used?} This is what we answer below.

\begin{theorem}[\cite{ICML11Brier}]\label{thm:LcostpUequalsBS2}
Assuming probabilistic scores and the \scoredriven \thresholdchoicemethod, expected loss under a uniform distribution of cost proportions is equal to the \model's Brier score. 
\end{theorem}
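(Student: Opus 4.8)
The plan is to substitute the \scoredriven rule $\Tcostsd(c) = c$ into the definition of expected loss under a uniform distribution of cost proportions (Equation~(\ref{eqLcost}) with $\wcost = U$) and then collapse the resulting one-dimensional integral onto the definition of the Brier score (Definition~\ref{def:BS}). Using the normalisation $b = 2$ justified after Lemma~\ref{lemma-balance}, the loss at threshold $t = c$ and cost proportion $c$ is $\Qcost(c; c) = 2\{c\,\pi_0(1 - F_0(c)) + (1 - c)\,\pi_1 F_1(c)\}$, so that
\[
\LcostsdU = \int_0^1 \Qcost(c; c)\,dc = 2\pi_0\!\int_0^1\! c\,(1 - F_0(c))\,dc \;+\; 2\pi_1\!\int_0^1\! (1 - c)\,F_1(c)\,dc .
\]
It therefore suffices to establish the two scalar identities $2\int_0^1 c\,(1 - F_0(c))\,dc = \bs_0$ and $2\int_0^1 (1 - c)\,F_1(c)\,dc = \bs_1$; recombining them with the class weights $\pi_0$ and $\pi_1$ then gives $\bs$.

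For each identity I would use Fubini's theorem, which is cleaner here than integration by parts. Since the scores are assumed to lie in $[0,1]$ (the standing assumption of Section~\ref{sec:scores}), we may write $1 - F_0(c) = \int_c^1 f_0(s)\,ds$ and $F_1(c) = \int_0^c f_1(s)\,ds$. Swapping the order of integration yields $\int_0^1 c\,(1 - F_0(c))\,dc = \int_0^1 f_0(s)\big(\int_0^s c\,dc\big)ds = \int_0^1 f_0(s)\,\tfrac{s^2}{2}\,ds = \tfrac12\bs_0$, and likewise $\int_0^1 (1 - c)\,F_1(c)\,dc = \int_0^1 f_1(s)\big(\int_s^1 (1 - c)\,dc\big)ds = \int_0^1 f_1(s)\,\tfrac{(1 - s)^2}{2}\,ds = \tfrac12\bs_1$. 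Multiplying by $2$ and substituting back gives $\LcostsdU = \pi_0\bs_0 + \pi_1\bs_1 = \bs$.

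There is no genuinely hard step here: the argument is essentially one application of Fubini (equivalently, one integration by parts, in which case one needs $F_0(1) = 1$ and $F_1(0) = 0$ to make the boundary terms vanish). The only point requiring care is exactly this boundary behaviour — it rests on the probabilistic-score assumption that no score mass falls outside $[0,1]$; without it the surviving endpoint terms $\tfrac12(1 - F_0(1))$ and $\tfrac12 F_1(0)$ would break the identity. As with the earlier theorems, the corresponding statement for skews (expected loss equal to the macro-average Brier score $\mbs$) follows at once from Lemma~\ref{lemma-balance} by setting $\pi_0 = \pi_1 = 1/2$ and $\sk = c$.
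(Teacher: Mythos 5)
Your proposal is correct and follows essentially the same route as the paper: plug $\Tcostsd(c)=c$ into the expected-loss integral and then match the resulting per-class terms $2\int_0^1 c\,(1-F_0(c))\,dc$ and $2\int_0^1 (1-c)\,F_1(c)\,dc$ to $\bs_0$ and $\bs_1$. The only difference is that you verify these identities by Fubini whereas the paper uses integration by parts on $\bs_0$ and $\bs_1$; as you note yourself, these are equivalent and both rely on the probabilistic-score assumption that all mass lies in $[0,1]$.
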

\begin{proof}
If we plug $\Tcostsd$ (Equation (\ref{eqTcostp})) into the general formula of the expected loss (Equation (\ref{eqLcost})) we have the expected \scoredriven loss:
\begin{equation}\label{eqLcostp}
\Lcostsd \triangleq \int^1_0{} \Qcost(\Tcostsd(c); c) \wcost(c) dc =  \int^1_0{} \Qcost(c; c) \wcost(c) dc
\end{equation}
And if we use the uniform distribution and the definition of $\Qcost$ (Equation (\ref{eqQcost})):
\begin{eqnarray}\label{eqLcostpU}
\LcostsdU & = & \int^1_0{} \Qcost(c; c) U(c) dc  
				  =           \int^1_0{} 2 \{ c \pi_0 (1 -F_0(c)) + (1-c)\pi_1 F_1(c) \} dc 
			 \label{eqLcostpUlast}
\end{eqnarray}
In order to show this is equal to the Brier score, we expand the definition of $\bs_0$ and $\bs_1$ using integration by parts: 
\begin{align*}
\bs_{0} & = \int_{0}^{1} s^{2}f_{0}(s) ds 
  = \left[ s^{2}F_{0}(s) \right]_{s=0}^{1} - \int_{0}^{1} 2sF_{0}(s) ds  = 1 - \int_{0}^{1} 2sF_{0}(s) ds\\
 & = \int_{0}^{1} 2s ds - \int_{0}^{1} 2sF_{0}(s) ds  = \int_{0}^{1} 2s(1-F_{0}(s)) ds \\
\bs_{1} & = \int_{0}^{1} (1-s)^{2}f_{1}(s) ds 
 = \left[ (1-s)^{2}F_{1}(s) \right]_{s=0}^{1} + \int_{0}^{1} 2(1-s)F_{1}(s) ds = \int_{0}^{1} 2(1-s)F_{1}(s) ds
\end{align*}
Taking their weighted average, we obtain
\begin{align}
\bs & = \pi_0 \bs_0 + \pi_1 \bs_1  = \int_{0}^{1} \{\pi_{0}2s(1-F_{0}(s)) + \pi_{1}2(1-s)F_{1}(s)\} ds \label{eq:bsloss}
\end{align}
which, after reordering of terms and change of variable, is the same expression as 
Equation~(\ref{eqLcostpUlast}).

\end{proof}

It is now clear why we just put the Brier score from Table \ref{tab:models1} as the expected loss in Table \ref{tab:models2}. We calculated the expected loss for the \scoredriven \thresholdchoicemethod for a uniform distribution of cost proportions as its Brier score.

Theorem \ref{thm:LcostpUequalsBS2} was obtained in \cite{ICML11Brier} (the \thresholdchoicemethod there was called `probabilistic') but it is not completely new in itself. 
In \cite{murphy1966note} we find a similar relation to expected utility (in our notation, $-(1/4)PS+(1/2)(1+\pi_0)$, where the so-called probability score $PS = 2\bs$). Apart from the sign (which is explained because Murphy works with utilities and we work with costs), the difference in the second constant term is explained because Murphy's utility (cost) model is based on a cost matrix where we have a cost for one of the classes (in meteorology the class `protect') independently of whether we have a right or wrong prediction (`adverse' or `good' weather). The only case in the matrix with a 0 cost is when we have `good' weather and `no protect'.
It is interesting to see that the result only differs by a constant term, which supports the idea that whenever we can express the operating condition with a cost proportion or skew, the results will be portable to each situation with the inclusion of some constant terms (which are the same for all classifiers).
In addition to this result, it is also worth mentioning another work by Murphy \cite{murphy1969measures} where he makes a general derivation for the Beta distribution. 


After Murphy, in the last four decades, there has been extensive work on the so-called proper scoring rules, where several utility (cost) models have been used and several distributions for the cost have been used. 
This has led to relating Brier score (square loss), logarithmic loss, 0-1 loss and other losses which take the scores into account. For instance, in \cite{BSS05} we have a comprehensive account of how all these losses can be obtained as special cases of the Beta distribution. The result given in Theorem \ref{thm:LcostpUequalsBS2} would be a particular case for the uniform distribution (which is a special case of the Beta distribution) and a variant of Murphy's results. Nonetheless, it is important to remark that the results we have just obtained in Section \ref{sec:scoreuniform} (and those we will get in Section \ref{sec:rates}) are new because they are not obtained by changing the cost distribution but rather by changing the \thresholdchoicemethod. The \thresholdchoicemethod used (the \scoredriven one) is not put into question in the area of proper scoring rules. But Theorem \ref{thm:LcostpUequalsBS2} can now be seen as a result which connects these two different dimensions: cost distribution and threshold choice method, so placing the Brier score at an even more predominant role.


We can derive an equivalent result using empirical distributions \cite{ICML11Brier}. In that paper we show how the loss can be plotted in cost space, leading to the \emph{Brier curve} whose area below is the Brier score. 

Finally, using skews we arrive at the prior-independent version of the Brier score.  

%
%

\begin{corollary}\label{thm:LskpUequalsBS2}
$\LsksdU = \mbs = {(\bs_0 + \bs_1) / 2}$. 
\end{corollary}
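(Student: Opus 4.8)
The plan is to obtain this as an immediate consequence of Theorem~\ref{thm:LcostpUequalsBS2} via the transfer principle of Lemma~\ref{lemma-balance}. Recall that Lemma~\ref{lemma-balance} tells us that any expression for expected loss in terms of cost proportions can be turned into the corresponding expression in terms of skews by setting $\pi_0 = \pi_1 = 1/2$ (and $z = c$). Theorem~\ref{thm:LcostpUequalsBS2} states $\LcostsdU = \bs = \pi_0\bs_0 + \pi_1\bs_1$. The key observation is that $\bs_0 = \int_0^1 s^2 f_0(s)\,ds$ and $\bs_1 = \int_0^1 (1-s)^2 f_1(s)\,ds$ are defined purely in terms of the class-conditional score densities and carry \emph{no} dependence on the priors $\pi_0,\pi_1$; the priors enter $\bs$ only through the outer weighting. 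Hence substituting $\pi_0 = \pi_1 = 1/2$ yields $\LsksdU = (\bs_0 + \bs_1)/2$, which is exactly $\mbs$ by Equation~(\ref{eqMBS}).

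For completeness I would also include the short direct verification, which does not even invoke Lemma~\ref{lemma-balance}. Plugging $\Tsksd(\sk) = \sk$ into Equation~(\ref{eqLsk}) with $\wsk = U$ and expanding $\Qsk$ via Equation~(\ref{eqQsk}) gives
\begin{equation*}
\LsksdU = \int_0^1 \{\sk(1 - F_0(\sk)) + (1-\sk)F_1(\sk)\}\,d\sk .
\end{equation*}
Then I would reuse the integration-by-parts identities established inside the proof of Theorem~\ref{thm:LcostpUequalsBS2}, namely $\bs_0 = \int_0^1 2s(1 - F_0(s))\,ds$ and $\bs_1 = \int_0^1 2(1-s)F_1(s)\,ds$, to recognise the right-hand side as $\tfrac12(\bs_0 + \bs_1)$, after a trivial change of the dummy variable from $\sk$ to $s$.

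I do not anticipate any real obstacle here: the corollary is genuinely routine once Theorem~\ref{thm:LcostpUequalsBS2} is in hand. The only point worth stating explicitly (and the only thing that could trip up a careless reader) is the bookkeeping of which quantities are prior-dependent — one must note that $\bs_0$ and $\bs_1$ are prior-free so that the substitution $\pi_0 = \pi_1 = 1/2$ produces precisely the macro-average $\mbs$ of Equation~(\ref{eqMBS}) and not some other reweighting.
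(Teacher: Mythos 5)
Your proposal is correct and is essentially the paper's own (implicit) argument: the corollary is obtained from Theorem~\ref{thm:LcostpUequalsBS2} via the transfer principle of Lemma~\ref{lemma-balance}, setting $\pi_0=\pi_1=1/2$ and noting that $\bs_0$ and $\bs_1$ are prior-free so the result is exactly $\mbs$. Your supplementary direct verification using the integration-by-parts identities is also sound, but adds nothing beyond the transfer argument.
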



It is interesting to analyse the relation between $\LcostsuU$ and $\LcostsdU$  
(similarly between $\LsksuU$ and $\LsksdU$).
Since the former gives the \mae and the second gives the Brier score (which is the MSE), from the definitions of \mae and Brier score, we get that, assuming scores are between $0$ and $1$ we have:
$$\mae = \LcostsuU \geq \LcostsdU = \bs$$  
$$\mmae = \LsksuU \geq \LsksdU = \mbs$$  
Since \mae and \bs have the same terms but the second squares them, and all the values which are squared are between 0 and 1, then the \bs must be lower or equal.
This is natural, since the expected loss is lower if we get reliable information about the operating condition at deployment time. 
So, the difference between the Brier score and \mae is precisely the gain we can get by having (and using) the information about the operating condition at deployment time.
Notice that all this holds regardless of the quality of the probability estimates.

\section{\Thresholdchoicemethods using \rates} \label{sec:rates}


We show in this section that \auc can be translated into expected loss for varying operating conditions in more than one way, depending on the \thresholdchoicemethod used. We consider two \thresholdchoicemethods, where each of them sets the threshold to achieve a particular predicted positive rate: 
the \rateuniform method, which sets the rate in a uniform way; 
and the \ratedriven method, which sets the rate equal to the operating condition. 


We recall the definition of a ROC curve and its area first.
\begin{definition}\label{def:ROC}
The ROC curve \cite{SDM00,Fawcett06} is defined as a plot of $F_1(t)$ (i.e., false positive rate at decision threshold $t$) on the \xaxis against $F_0(t)$ (true positive rate at $t$) on the \yaxis, with both quantities monotonically non-decreasing with increasing $t$ (remember that scores increase with $\hat{p}(1|x)$ and 1 stands for the negative class).
The Area Under the ROC curve (\auc) is defined as:
\begin{eqnarray}
\auc & \triangleq & \int_{0}^{1} F_0(s) d F_1(s) = \int_{-\infty}^{+\infty} F_0(s) f_1(s) ds = \int_{-\infty}^{+\infty} \int_{-\infty}^{s} f_0(t) f_1(s) dt ds  \label{eq:AUC}\\
     & = & \int_{0}^{1} (1-F_1(s)) d F_0(s) = \int_{-\infty}^{+\infty} (1-F_1(s)) f_0(s) ds = \int_{-\infty}^{+\infty} \int_{s}^{+\infty} f_1(t) f_0(s) dt ds \nonumber
\end{eqnarray}
%
\end{definition}
%

\subsection{The \rateuniform \thresholdchoicemethod leads to \auc} \label{sec:rateuniform}

The \ratefixed \thresholdchoicemethod places the threshold in such a way that a given predictive positive rate is achieved.
However, if this proportion may change easily or we are not going to have (reliable) information about the operating condition at deployment time, an alternative idea is to consider a non-deterministic choice or a distribution for this quantity. One reasonable choice can be a uniform distribution. 

\begin{definition}\label{def:Ti-continuous}
The \rateuniform \thresholdchoicemethod non-deterministically sets the threshold to achieve a uniformly randomly selected rate:
\begin{align}\label{eqTi-continuous}
\Tcostru(c) & \triangleq \Tcostrf[U_{0,1}](c)  \\
\Tskru(\sk) & \triangleq \Tskrf[U_{0,1}](\sk) 
\end{align}
\end{definition}
\noindent
In other words, it sets a relative quantity (from 0\% positives to 100\% positives) in a uniform way, and obtains the threshold from this uniform distribution over \rates.
Note that for a large number of examples, this is the same as defining a uniform distribution over examples or, alternatively, over cutpoints (between examples), as explored in \cite{ICML11CoherentAUC}. 

There are reasons for considering this threshold a reasonable method.
It is a generalisation of the  \ratefixed \thresholdchoicemethod which considers  all the imbalances (class proportions) equally likely whenever we make a classification.
It assumes that we will not have any information about the operating condition at deployment time.

As done before for other \thresholdchoicemethods, we analyse the question: given this \thresholdchoicemethod, {\em if we must evaluate a \model before application for a wide range of skews and cost proportions, which \performancemetric should be used?}

The corresponding expected loss for cost proportions is
\begin{equation}\label{eqLcostru}
\Lcostru \triangleq \int^1_0 \Qcost(\Tcostru(c); c) \wcost(c) dc = \int^1_0 \int^1_0 \Qcost(\Tcostrfi(\ra); c)U(r) \wcost(c) dr\; dc 
\end{equation}
We then have the following result.

%
\begin{theorem}[\cite{ICML11CoherentAUC}]\label{th::LcostiUUU} 
Assuming the \ratefixed \thresholdchoicemethod, 
expected loss for uniform cost proportion and uniform rate decreases linearly with \auc as follows: 
$$\LcostruU = 
\pi_0\pi_1(1-2\auc) + 1/2$$
\end{theorem}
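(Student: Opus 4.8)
\section*{Proof proposal}

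The plan is to collapse the double integral in Equation~(\ref{eqLcostru}) to a single integral over the rate $r$, then to change variables from $r$ to the threshold $t$, at which point the integrand factors into four elementary pieces: two that are trivial and two that are copies of \auc.

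First, for a fixed rate $r$ the threshold is held at $t_r = \Tcostrfi(r)$, so the inner integral over $c$ is precisely the setting of Theorem~\ref{thm:LcostsfU}: a fixed threshold averaged against a uniform distribution of cost proportions. Hence $\int_0^1 \Qcost(t_r;c)\,U(c)\,dc = 1 - \acc(t_r) = \pi_0(1 - F_0(t_r)) + \pi_1 F_1(t_r)$, which is exactly Equation~(\ref{eqLcostrfU}). This reduces the goal to showing that $\int_0^1 \{\pi_0(1 - F_0(t_r)) + \pi_1 F_1(t_r)\}\,dr = \pi_0\pi_1(1 - 2\auc) + 1/2$. Next I would substitute $r = R(t) = \pi_0 F_0(t) + \pi_1 F_1(t)$, so that $dr = (\pi_0 f_0(t) + \pi_1 f_1(t))\,dt$ and $r$ ranging over $[0,1]$ corresponds to $t$ ranging over the whole score range. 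The integral becomes $\int \{\pi_0(1-F_0(t)) + \pi_1 F_1(t)\}(\pi_0 f_0(t) + \pi_1 f_1(t))\,dt$. Expanding the product gives four terms: $\int (1-F_0(t)) f_0(t)\,dt = 1/2$ and $\int F_1(t) f_1(t)\,dt = 1/2$ by direct substitution, whereas $\int (1-F_0(t)) f_1(t)\,dt = 1 - \auc$ and $\int F_1(t) f_0(t)\,dt = 1 - \auc$ follow immediately from the two equivalent integral forms of \auc in Definition~\ref{def:ROC}. Collecting these, $\LcostruU = \frac{1}{2}(\pi_0^2 + \pi_1^2) + 2\pi_0\pi_1(1-\auc)$, and substituting $\pi_0^2 + \pi_1^2 = 1 - 2\pi_0\pi_1$ (valid since $\pi_0 + \pi_1 = 1$) yields the claimed $\pi_0\pi_1(1 - 2\auc) + 1/2$.

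The only delicate step is the change of variables from $r$ to $t$: one needs $R$ to be a bijection between the score range and $[0,1]$ (the invertibility assumption of Definition~\ref{def:Tq2}, with plateaus handled by the centroid convention there), and one needs the fact that the non-deterministic \rateuniform method contributes a genuine uniform density over rates, so that the expected loss is indeed the $r$-integral written above. Everything after that substitution is routine: substitution in each of the four integrals, together with the defining identities of \auc.
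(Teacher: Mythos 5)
Your proof is correct and follows essentially the same route as the paper's: collapse the $c$-integral (the paper shortcuts this by observing $\Qcost$ is linear in $c$ and evaluating at $c=1/2$, you route through Theorem~\ref{thm:LcostsfU}, but it is the same observation), then substitute $r = R(t)$ with $dr = (\pi_0 f_0 + \pi_1 f_1)\,dt$, expand the product, and identify the two cross terms with $1-\auc$ and the two pure terms with $1/2$. The only cosmetic difference is that the paper groups the two cross terms together with coefficient $\pi_0\pi_1$ before integrating, whereas you evaluate all four terms separately; the content is identical.
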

\begin{proof}
First of all we note that the \thresholdchoicemethod does not take the operating condition $c$ into account, and hence we can work with $c=1/2$. 
Furthermore, $r = R(t)$ and hence $dr = R'(t)dt = \{\pi_{0}f_{0}(t) + \pi_{1}f_{1}(t)\}dt$. 
Then, 
\begin{align*}
\LcostruU & = \int^1_0 \Qcost(\Tcostrfi(\ra); 1/2)U(r) dr\\
& = \int_{-\infty}^{\infty} \{\pi_0 (1 -F_0(t)) + \pi_1 F_1(t)\} \{\pi_{0}f_{0}(t) + \pi_{1}f_{1}(t)\} dt \\
& =  \pi_{0}\pi_{1} \int_{-\infty}^{\infty} \{(1 -F_0(t))f_{1}(t) + F_1(t)f_{0}(t)\} dt \\
& +  \pi_{0}^{2} \int_{-\infty}^{\infty} (1 -F_0(t))f_{0}(t)\ dt + \pi_{1}^{2} \int_{-\infty}^{\infty} F_1(t)f_{1}(t)\ dt 
\end{align*}
The first term can be related to \auc:
\begin{align*}
\int_{-\infty}^{\infty} \{(1 -F_0(t))f_{1}(t) + F_1(t)f_{0}(t)\} dt 
& = \int_{-\infty}^{\infty} f_{1}(t)dt - \int_{-\infty}^{\infty} F_0(t)f_{1}(t) dt + \int_{-\infty}^{\infty} F_1(t)f_{0}(t) dt \\
& = 1 - \auc + (1-\auc) = 2(1-\auc)
\end{align*}
The remaining two terms are easily solved:
\begin{align*}
\pi_{0}^{2} \int_{-\infty}^{\infty} (1 -F_0(t))f_{0}(t)\ dt
& =-\pi_{0}^{2} \int_{1}^{0} (1 -F_0(t))\ d(1-F_{0}(t))
  = \pi_{0}^{2}/2 \\
\pi_{1}^{2} \int_{-\infty}^{\infty} F_1(t)f_{1}(t)\ dt 
& = \pi_{1}^{2} \int_{0}^{1} F_1(t)\ dF_{1}(t) 
  = \pi_{1}^{2}/2 
\end{align*}
Putting everything together we obtain 
$\LcostruU = 2\pi_0\pi_1(1-\auc) + (\pi_{0}^{2}+\pi_{1}^{2})/2$. 
Since $\pi_0\pi_1+ (\pi_{0}^{2}+\pi_{1}^{2})/2 = (\pi_{0} + \pi_{1})^{2}/2 = 1/2$, 
this can be rewritten to
$\LcostruU = \pi_0\pi_1(1-2\auc) + 1/2$.%
\footnote{If we do not assume a uniform distribution for cost proportions $U(c)$ we would obtain a different integral, but expected loss would still be linear in AUC (David Hand, personal communication). }
\end{proof}

\begin{corollary}\label{th::LskiUUU}
Assuming the \ratefixed \thresholdchoicemethod, 
expected loss for uniform skew and uniform rate decreases linearly with \auc as follows: 
$$\LskruU = 
(1-2\auc)/4 + 1/2$$
\end{corollary}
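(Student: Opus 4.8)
The plan is to derive this corollary directly from Theorem~\ref{th::LcostiUUU} via Lemma~\ref{lemma-balance}, in exactly the same spirit as Corollary~\ref{thm:LsksfU} and Corollary~\ref{thm:LsksuU} were obtained from their cost-proportion counterparts. Recall that Lemma~\ref{lemma-balance} says that, with the convention $b=2$, any expression for loss in terms of cost proportions carries over to skews by substituting $\pi_0 = \pi_1 = 1/2$ and $z=c$, since then $\Qsk(t;\sk) = \Qcost(t;c)$ and the uniform distribution over cost proportions becomes the uniform distribution over skews.

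Before applying the substitution I would check that the \rateuniform \thresholdchoicemethod is compatible with it. By Definition~\ref{def:Ti-continuous} and Definition~\ref{def:Tq2}, $\Tskru$ is built from $R_{\sk}^{-1}$, where $R_{\sk}(t) = F_0(t)/2 + F_1(t)/2$; this is precisely $R(t) = \pi_0 F_0(t) + \pi_1 F_1(t)$ evaluated at $\pi_0 = \pi_1 = 1/2$. Hence under the balanced-class substitution the threshold assigned to a given sampled rate $r$ is the same in the skew setting as in the cost setting. Moreover, since the method samples $r$ uniformly and independently of the operating condition, the entire argument of Theorem~\ref{th::LcostiUUU} (reduce to $c=1/2$, change variables via $dr = R'(t)\,dt$, identify the three resulting integrals) goes through verbatim with $\pi_0 = \pi_1 = 1/2$. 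Finally, \auc, as defined in Definition~\ref{def:ROC}, depends only on $F_0$ and $F_1$ and not on the class proportions, so it is unchanged. Plugging $\pi_0\pi_1 = 1/4$ into the conclusion $\LcostruU = \pi_0\pi_1(1-2\auc) + 1/2$ of Theorem~\ref{th::LcostiUUU} then yields $\LskruU = (1-2\auc)/4 + 1/2$.

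I do not expect a genuine obstacle here; the only point needing care is confirming that $R_\sk$ really is the balanced-class instance of $R$, so that Lemma~\ref{lemma-balance} applies not merely to the loss $\Qsk$ but also to the threshold-choice function. As a fully self-contained alternative, one could simply rerun the computation from the proof of Theorem~\ref{th::LcostiUUU} with $R_\sk$ in place of $R$: using $dr = R_\sk'(t)\,dt = \frac12\{f_0(t)+f_1(t)\}\,dt$ and $\Qsk(t;1/2) = \frac12(1-F_0(t)) + \frac12 F_1(t)$, the cross term is $\frac14\int\{(1-F_0)f_1 + F_1 f_0\}\,dt = \frac14\cdot 2(1-\auc)$, and the two square terms are $\frac14\int(1-F_0)f_0\,dt = 1/8$ and $\frac14\int F_1 f_1\,dt = 1/8$, which sum to $(1-\auc)/2 + 1/4 = (1-2\auc)/4 + 1/2$, as claimed.
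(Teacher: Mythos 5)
Correct, and this is exactly the paper's (implicit) route: the corollary follows from Theorem~\ref{th::LcostiUUU} by applying Lemma~\ref{lemma-balance} with $\pi_0=\pi_1=1/2$, just as the other skew corollaries are obtained from their cost-proportion counterparts. Your extra check that $R_\sk$ is the balanced-class instance of $R$ (so the threshold choice itself respects the substitution), and the self-contained recomputation, are both sound.
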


\noindent We see that expected loss for uniform skew ranges from 1/4 for a perfect ranker that is harmed by sub-\optimal \thresholdchoices, to 3/4 for the worst possible ranker that puts positives and negatives the wrong way round, yet gains some performance by putting the threshold at or close to one of the extremes. 

Intuitively, these formulae can be understood as follows. Setting a randomly sampled rate is equivalent to setting the decision threshold to the score of a randomly sampled example. 
With probability $\pi_{0}$ we select a positive and with probability $\pi_{1}$ we select a negative. 
If we select a positive, then the expected true positive rate is $1/2$ (as on average we select the middle one); and the expected false positive rate is $1-\auc$ (as one interpretation of \auc\ is the expected proportion of negatives ranked correctly wrt.\ a random positive).   
Similarly, if we select a negative then the expected true positive rate is $\auc$ and the expected false positive rate is $1/2$.  
Put together, the expected true positive rate is
$\pi_{0}/2 + \pi_{1}\auc$
and the expected false positive rate is
$\pi_{1}/2 + \pi_{0}(1-\auc)$.
The proportion of true positives among all examples is thus
$$\pi_{0}\left(\pi_{0}/2 + \pi_{1}\auc\right) = {\pi_{0}^{2} \over 2} + \pi_{0}\pi_{1}\auc$$
and the proportion of false positives is
$$\pi_{1}\left(\pi_{1}/2 + \pi_{0}(1-\auc)\right) = {\pi_{1}^{2} \over 2} + \pi_{0}\pi_{1}(1-\auc)$$
We can summarise these expectations in the following contingency table (all numbers are proportions relative to the total number of examples): 
\begin{center}
\begin{tabular}{l|l|l|l}
 & Predicted $+$ & Predicted $-$ & \\
\hline
Actual $+$ & $\pi_{0}^{2} / 2 + \pi_{0}\pi_{1}\auc$ & $\pi_{0}^{2} / 2 + \pi_{0}\pi_{1}(1-\auc)$ & $\pi_{0}$ \\
\hline
Actual $-$ & $\pi_{1}^{2} / 2 + \pi_{0}\pi_{1}(1-\auc)$ & $\pi_{1}^{2} / 2 + \pi_{0}\pi_{1}\auc$ & $\pi_{1}$ \\
\hline
 & $1/2$ & $1/2$ & 1 \\
\end{tabular}
\end{center}
%
The column totals are, of course, as expected: if we randomly select an example to split on, then the expected split is in the middle.



While in this paper we concentrate on the case where we have access to population densities $f_{k}(s)$ and distribution functions $F_{k}(t)$, in practice we have to work with empirical estimates.
In \cite{ICML11CoherentAUC} we provide an alternative formulation of the main results in this section, relating empirical loss to the \auc of the empirical ROC curve.
For instance, the expected loss for uniform skew and uniform instance selection is
calculated in \cite{ICML11CoherentAUC} to be $ \left({n \over {n+1}}\right){{1-2\auc} \over 4} + {1 \over 2}$, showing that for smaller samples the reduction in loss due to \auc is somewhat smaller.

\subsection{The \ratedriven \thresholdchoicemethod leads to \auc} \label{sec:ratedriven}

Naturally, if we can have precise information of the operating condition at deployment time, we can use the information about the skew or cost to adjust the \rate of positives and negatives to that proportion.
This leads to a new threshold selection method: if we are given skew (or cost proportion) $\sk$ (or $c$), we choose the threshold $t$ in such a way that we get a proportion of $\sk$ (or $c$) positives.
This is an elaboration of the \ratefixed \thresholdchoicemethod which {\em does} take the operating condition into account.

\begin{definition}\label{def:Tcostrd}
The {\em \ratedriven \thresholdchoicemethod} for cost proportions is defined as
\begin{equation}\label{eqTcostrd}
\Tcostrd(c) \triangleq \Tcostrf[c](c) = \Tcostrfi(c) 
\end{equation}
The \ratedriven \thresholdchoicemethod for skews is defined as
\begin{equation}\label{eqTskrd}
\Tskrd(\sk) \triangleq \Tskrf[\sk](\sk) =  \Tskrfi(\sk) 
\end{equation}
\end{definition}

Given this \thresholdchoicemethod, the question is again: {\em if we must evaluate a \model before application for a wide range of skews and cost proportions, which \performancemetric should be used?}
This is what we answer below.

If we plug $\Tcostrd$ (Equation (\ref{eqTcostrd})) into the general formula of the expected loss for a range of cost proportions (Equation (\ref{eqLcost})) we have:
\begin{equation}\label{eqLcostn}
\Lcostrd \triangleq \int^1_0{} \Qcost(\Tcostrd(c); c) \wcost(c) dc 
\end{equation}

And now, from this definition, if we use the uniform distribution for $\wcost(c)$, we obtain this new result.

\begin{theorem}
Expected loss for uniform cost proportions using the \ratedriven \thresholdchoicemethod is linearly related to \auc as follows: 
\[ \LcostrdU
  =  \pi_1\pi_0  (1  - 2\auc) + 1/3   \]
\end{theorem}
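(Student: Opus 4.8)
The plan is to substitute the \ratedriven threshold choice into Equation~(\ref{eqLcostn}) with $\wcost = U$ and $b = 2$, then reduce the resulting integral to quantities that are either elementary or identifiable with \auc. Writing $\Tcostrd(c) = \Tcostrfi(c)$, the expected loss becomes
\[
\LcostrdU = \int_0^1 2\bigl\{c\,\pi_0\bigl(1-F_0(\Tcostrfi(c))\bigr) + (1-c)\,\pi_1 F_1(\Tcostrfi(c))\bigr\}\,dc .
\]
As in the proof of Theorem~\ref{th::LcostiUUU}, and under the invertibility assumptions of Definition~\ref{def:Tq2}, I would change the integration variable from the cost proportion $c$ to the threshold $t$ through $c = R(t) = \pi_0 F_0(t) + \pi_1 F_1(t)$, so that $dc = R'(t)\,dt = \bigl(\pi_0 f_0(t) + \pi_1 f_1(t)\bigr)\,dt$ and the limits $c = 0, 1$ become $t = -\infty, +\infty$. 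The essential difference with the \rateuniform case is that the factors $c$ and $1-c$ inside the braces are now themselves replaced by $R(t)$ and $1 - R(t) = \pi_0(1-F_0(t)) + \pi_1(1-F_1(t))$, instead of by the constant $1/2$.

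Next I would expand the integrand $2\bigl\{R(t)\,\pi_0(1-F_0(t)) + (1-R(t))\,\pi_1 F_1(t)\bigr\}\bigl(\pi_0 f_0(t) + \pi_1 f_1(t)\bigr)$ fully into monomials in $\pi_0, \pi_1$. This produces three families of one-dimensional integrals. The ``diagonal'' ones have the form $\int_{-\infty}^{\infty} F_k(t)(1-F_k(t))f_k(t)\,dt$, which equal $\int_0^1 u(1-u)\,du = 1/6$ via $u = F_k(t)$. The linear ``cross'' ones are $\int F_0 f_1 = \auc$ and $\int F_1 f_0 = 1 - \auc$ (Definition~\ref{def:ROC}). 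The quadratic ones, $\int F_0^2 f_1$, $\int F_1^2 f_0$, $\int F_0 F_1 f_0$ and $\int F_0 F_1 f_1$, are the delicate ones; integration by parts gives $\int_{-\infty}^{\infty} F_0^2 f_1\,dt = 1 - 2\int F_0 F_1 f_0\,dt$ and $\int_{-\infty}^{\infty} F_1^2 f_0\,dt = 1 - 2\int F_0 F_1 f_1\,dt$, which leaves only $\int F_0 F_1 f_0\,dt$ and $\int F_0 F_1 f_1\,dt$ as genuinely model-dependent, \auc-independent unknowns.

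The key observation, which I expect to be the crux of the argument, is that once all contributions are assembled with their $\pi_0, \pi_1$ coefficients, these two stubborn integrals occur with coefficients that cancel exactly, so that $\LcostrdU$ depends on the model only through \auc. After this cancellation the remainder is a routine polynomial simplification: collecting terms gives $\LcostrdU = (\pi_0^3 + \pi_1^3)/3 + 2\pi_0\pi_1(1-\auc)$, and then using $\pi_0 + \pi_1 = 1$ (hence $\pi_0^3 + \pi_1^3 = 1 - 3\pi_0\pi_1$) collapses this to $\pi_1\pi_0(1 - 2\auc) + 1/3$, as claimed. The only real obstacles are bookkeeping --- tracking the roughly six monomial types and their coefficients and verifying the cancellation --- so a sanity check at $\pi_0 = \pi_1 = 1/2$, where the formula reads $(1-2\auc)/4 + 1/3$ and should sit below the corresponding \rateuniform value $(1-2\auc)/4 + 1/2$ from Theorem~\ref{th::LcostiUUU}, is worth carrying out.
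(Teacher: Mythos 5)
Your proposal is correct, and after the shared first step it reduces the integral differently from the paper. Both proofs start identically, substituting $\Tcostrd(c)=\Tcostrfi(c)$ and changing variables via $c=R(t)$, $dc=R'(t)\,dt$. The paper then avoids your ``stubborn'' quadratic integrals altogether: it rewrites the integrand as $2\{c\pi_0 - c(\pi_0F_0(t)+\pi_1F_1(t))\}R'(t) + 2\pi_1F_1(t)R'(t)$, recognises $\pi_0F_0(t)+\pi_1F_1(t)=R(t)=c$ so the first group collapses back into the elementary $c$-integral $\int_0^1 2(c\pi_0-c^2)\,dc=\pi_0-\tfrac23$, and the only model-dependent integral left is $\int F_1(t)R'(t)\,dt=\pi_0(1-\auc)+\pi_1/2$, whence the result. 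Your route instead expands $2\{R\pi_0(1-F_0)+(1-R)\pi_1F_1\}R'$ fully into monomials and relies on the cancellation of $\int F_0F_1f_0\,dt$ and $\int F_0F_1f_1\,dt$; I checked this, and it does go through: after your integration-by-parts identities, $\int F_0F_1f_0\,dt$ enters with coefficients $+2\pi_0^2\pi_1$ (from the $F_0(1-F_0)f_1$ family) and $-2\pi_0^2\pi_1$ (from the $F_1(1-F_0)f_0$ family), and $\int F_0F_1f_1\,dt$ with $\pm 2\pi_0\pi_1^2$, so both drop out, leaving exactly your intermediate expression $(\pi_0^3+\pi_1^3)/3+2\pi_0\pi_1(1-\auc)$, which simplifies to $\pi_0\pi_1(1-2\auc)+\tfrac13$ via $\pi_0^3+\pi_1^3=1-3\pi_0\pi_1$. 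The trade-off: the paper's manipulation is shorter and needs only one link to \auc, while your brute-force bookkeeping makes explicit \emph{why} no \auc-independent, model-dependent quantity can survive, which is the very point at issue in the theorem; your sanity check at $\pi_0=\pi_1=\tfrac12$ against $\LcostruU=(1-2\auc)/4+\tfrac12$ is consistent with Corollary~\ref{cor:LskrdU}.
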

\begin{proof}
\begin{eqnarray*}
\LcostrdU    & = & \int^1_0{} \Qcost(\Tcostrd(c); c) U(c) dc = \int^1_0{} \Qcost(\Tcostrfi(c); c) dc \\
\end{eqnarray*}
By a change of variable we have $c=R(t)$ and hence $dc = R'(t)dt = \{\pi_{0}f_{0}(t)+\pi_{1}f_{1}(t)\}dt = R'(t)dt$, and thus
\begin{eqnarray*}
\LcostrdU    & = &  \int_{-\infty}^{+\infty}{} \Qcost(t; c) R'(t) dt =   \int_{-\infty}^{+\infty}{} 2 \{c \pi_0 (1 -F_0(t)) + (1-c)\pi_1 F_1(t)\}  R'(t) dt \\
            & = &  \int_{-\infty}^{+\infty}{} 2 \{c \pi_0  - c ( \pi_0 F_0(t) + c\pi_1 F_1(t)) + \pi_1 F_1(t)  \} R'(t) dt \\
            & = &  \int_{-\infty}^{+\infty}{} 2 \{c \pi_0  - c ( \pi_0 F_0(t) + \pi_1 F_1(t)) \} R'(t) dt + \int_{-\infty}^{+\infty}{} 2 \{ \pi_1 F_1(t)  \} R'(t) dt 
\end{eqnarray*}
%
%
All terms in the first integral can be reduced to $R(t)=c$:
\begin{eqnarray*}
\int_{-\infty}^{+\infty}{} 2 \{c \pi_0  - c ( \pi_0 F_0(t) + \pi_1 F_1(t)) \} R'(t) dt    & = & \int^1_0{} 2 \{c \pi_0  - c^{2} \} dc  \\
            & = & [ c^2 \pi_0  - \frac{2 c^3}{3} ]^1_0  =  \pi_0  - \frac{2}{3}  \\
\end{eqnarray*}
%
%
The second integral provides the link to \auc: 
\begin{eqnarray*}
\int_{-\infty}^{+\infty}{} F_1(t) R'(t) dt & = & \int^\infty_{-\infty}{} F_1(t) \{\pi_0 f_0(t) + \pi_1 f_1(t)\} dt  =  \pi_0 \int^\infty_{-\infty}{} F_1(t) f_0(t) dt + \pi_1 \int^\infty_{-\infty}{} F_1(t) f_1(t) dt \\
	& = &  \pi_0  (1  - \auc) + \pi_1 \int^1_0{} F_1(t) dF_1(t) 
             =   \pi_0  (1  - \auc) + \frac{\pi_1}{2} 
\end{eqnarray*}
And now we can plug this into the expression for the expected loss:
\begin{eqnarray*}
\LcostrdU    & = & \pi_0  - \frac{2}{3}  + 2 \pi_1 (\pi_0  (1  - \auc) + \frac{\pi_1}{2} ) = \pi_0  - \frac{2}{3}  + 2 \pi_1\pi_0  (1  - \auc) + \pi_1 \pi_1  \\						
            & = & 2 \pi_1\pi_0  (1  - \auc) + \pi_1 (1- \pi_0)  + \pi_0  - \frac{2}{3}   							
             =  \pi_1\pi_0  (1  - 2\auc) + \frac{1}{3}   						
\end{eqnarray*}

\end{proof}

Now we can unveil and understand how we obtained the results for the expected loss in Table \ref{tab:models2} for the \ratedriven method. We just took the \auc of the models and applied the previous formula: $\pi_1\pi_0  (1  - 2\auc) + \frac{1}{3}$.


\begin{corollary} \label{cor:LskrdU}
Expected loss for uniform skews using the \ratedriven \thresholdchoicemethod is linearly related to \auc as follows: 
\[ \LskrdU
  =  (1-2\auc)/4 + 1/3 \]
\end{corollary}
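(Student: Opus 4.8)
The plan is to obtain this corollary as an immediate consequence of the preceding theorem (the expression $\LcostrdU = \pi_1\pi_0(1-2\auc) + 1/3$) together with Lemma~\ref{lemma-balance}, following exactly the transfer recipe described just after that lemma: any expression for expected loss in terms of cost proportions becomes the corresponding expression in terms of skews upon setting $\pi_0 = \pi_1 = 1/2$ and identifying $\sk$ with $c$. This is the same pattern used to obtain the earlier skew corollaries in the paper.

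First I would check that the \ratedriven \thresholdchoicemethod for skews genuinely coincides with the one for cost proportions under this substitution. By Definition~\ref{def:Tq2}, $\Tskrd$ inverts $R_\sk(t) = F_0(t)/2 + F_1(t)/2$, whereas $\Tcostrd$ inverts $R(t) = \pi_0 F_0(t) + \pi_1 F_1(t)$; when $\pi_0 = \pi_1 = 1/2$ these two rate functions are literally the same, so $\Tskrfi = \Tcostrfi$ and hence $\Tskrd(\sk) = \Tcostrd(c)$ whenever $\sk = c$. Moreover \auc\ as given in Definition~\ref{def:ROC} depends only on $F_0$ and $F_1$ and not on the class proportions, so it is unaffected by the substitution.

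Next, using Lemma~\ref{lemma-balance} with the convention $b=2$, we have $\Qsk(t;\sk) = \Qcost(t;c)$ when $\pi_0 = \pi_1 = 1/2$ and $\sk = c$. Plugging this into the definition of $\LskrdU$ (Equation~(\ref{eqLsk}) with a uniform density $\wsk$) and renaming the integration variable, $\LskrdU$ equals $\LcostrdU$ evaluated at $\pi_0 = \pi_1 = 1/2$. Substituting $\pi_0\pi_1 = 1/4$ into the theorem's formula then gives $\LskrdU = \tfrac14(1-2\auc) + \tfrac13 = (1-2\auc)/4 + 1/3$, as claimed.

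There is essentially no obstacle here: the only point requiring a moment's care is the bookkeeping that the two rate-based threshold choice functions agree in the balanced case, and this is immediate from the definitions. As an alternative to invoking Lemma~\ref{lemma-balance}, one could simply rerun the proof of the theorem verbatim with $\Qsk$ and $R_\sk$ in place of $\Qcost$ and $R$; the integrals are identical and again produce the coefficient $\pi_0\pi_1 = 1/4$ in front of $(1-2\auc)$, with the same additive constant $1/3$.
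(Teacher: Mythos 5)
Your proposal is correct and follows essentially the same route the paper intends: the corollary is meant to be an immediate consequence of the preceding theorem via the transfer recipe of Lemma~\ref{lemma-balance} (set $\pi_0=\pi_1=1/2$, identify $\sk$ with $c$, so $\pi_0\pi_1=1/4$), which is exactly what you do. Your extra check that $\Tskrd$ and $\Tcostrd$ coincide in the balanced case because $R_\sk(t)=R(t)$ when $\pi_0=\pi_1=1/2$ is a sensible piece of bookkeeping that the paper leaves implicit.
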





If we compare Corollary \ref{th::LskiUUU} with Corollary \ref{cor:LskrdU},
we see that $\LskruU > \LskrdU$, more precisely:

\[ \LskruU = (1-2\auc)/4 + 1/2 =  \LskrdU + 1/6  \]

\begin{figure}
\centering
\includegraphics[width=0.4\textwidth]{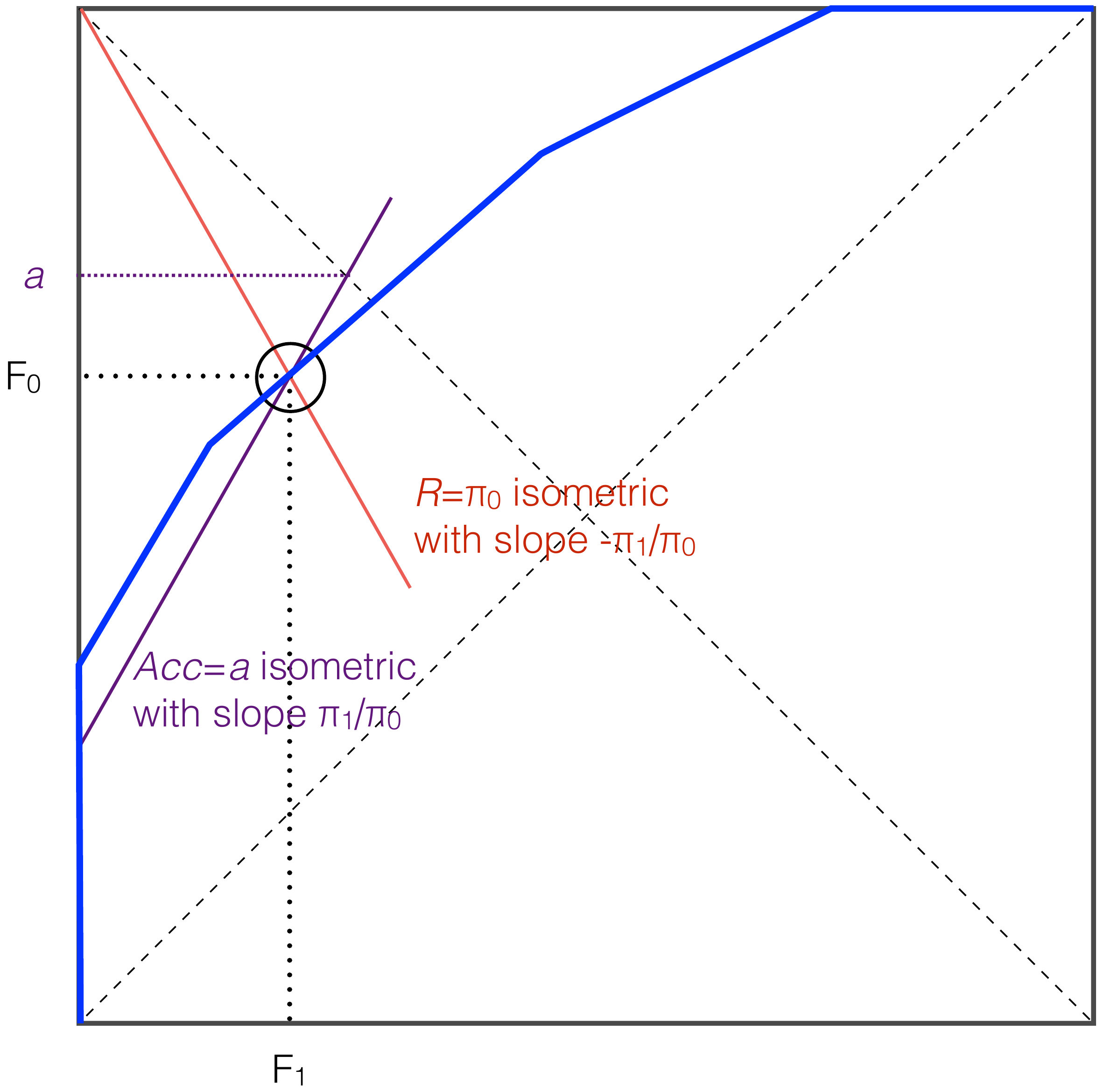} \hfill
\includegraphics[width=0.4\textwidth]{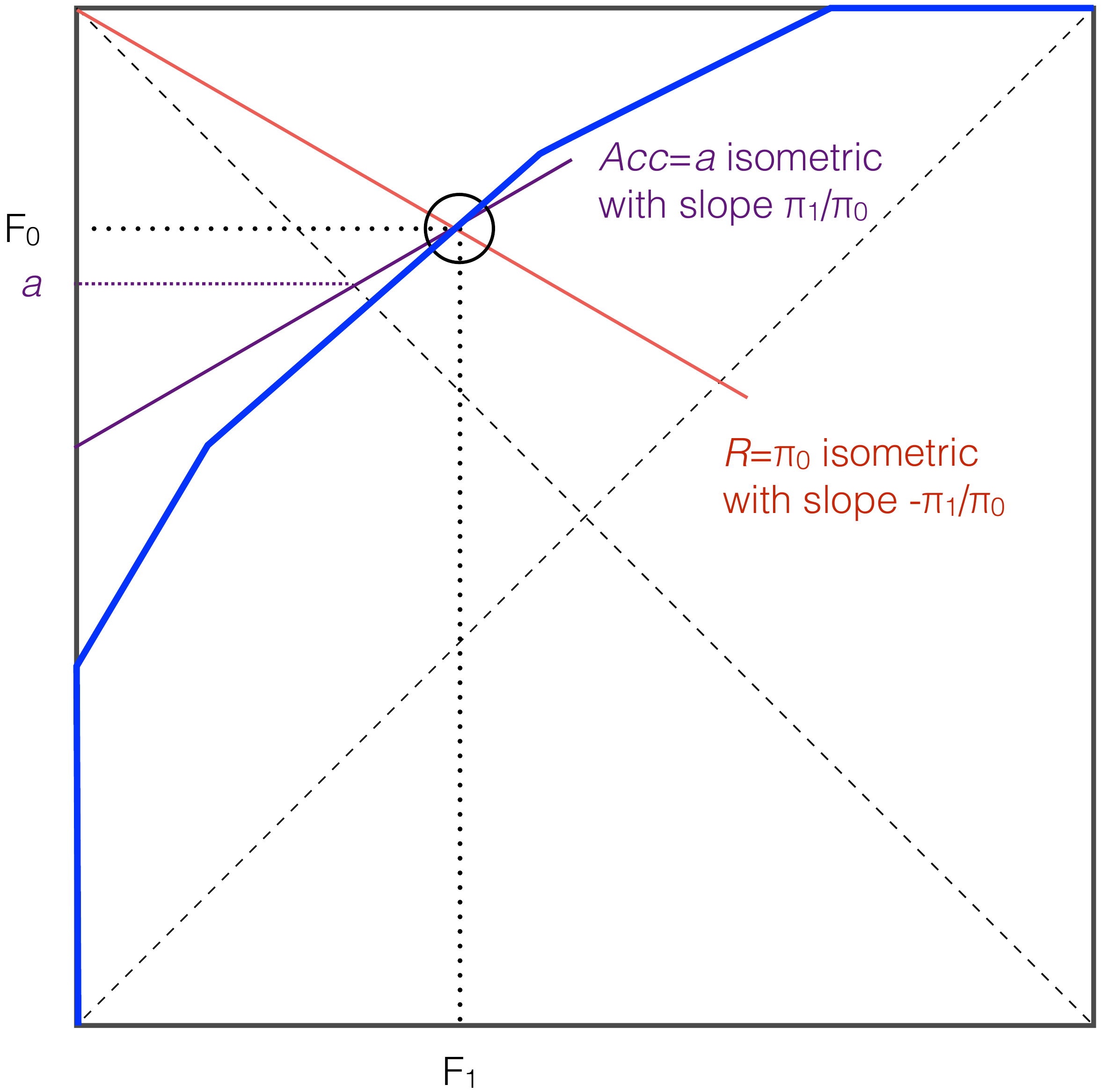}
\caption{Illustration of the \ratedriven \thresholdchoicemethod. We assume uniform misclassification costs ($c_{0} = c_{1} = 1$), and hence skew is equal to the proportion of positives ($z=\pi_{0}$). The majority class is class 1 on the left and class 0 on the right. Unlike the \rateuniform method, the \ratedriven method is able to take advantage of knowing the majority class, leading to a lower expected loss.}
\label{fig:RateDriven}
\end{figure}

So we see that taking the operating condition into account when choosing thresholds based on rates reduces the expected loss with $1/6$, {\em regardless of the quality of the \model} as measured by \auc. 
This term is clearly not negligible and demonstrates that our novel \ratedriven \thresholdchoicemethod is superior to the \rateuniform method. Figure \ref{fig:RateDriven} illustrates this. 
Logically, $\LcostrdU$ and $\LskrdU$ work upon information about the operating condition at deployment time, while $\LcostruU$ and $\LskruU$ may be suited when this information is unavailable or unreliable.

\section{The \optimal \thresholdchoicemethod}\label{sec:optimal}


The last \thresholdchoicemethod we investigate is based on the optimistic assumption
that (1) we are having complete information about the operating condition (class proportions and costs) at deployment time and (2) we are able to use that information (also at deployment time) to choose the threshold that will minimise the loss using the current model.
ROC analysis is precisely based on these two points since we can calculate the threshold which gives the smallest loss by using the skew and the convex hull.

This \thresholdchoicemethod, denoted by $\Tcosto$, is defined as follows:
\begin{definition} The \optimal \thresholdchoicemethod is defined as:
\begin{eqnarray}
\Tcosto(c) & \triangleq  & \argmin_{t}\ \{\Qcost(t; c)\} =  \argmin_{t}\ 2\{c\pi_0(1-F_0(t)) + (1-c)\pi_1 F_1(t)\}  \label{eqTcosto}
\end{eqnarray}
and similarly for skews:
%
\begin{align} \label{eqTsko}
\Tsko(\sk) & \triangleq \argmin_{t}\ \{\Qsk(t; \sk)\} \nonumber 
\end{align}
\end{definition}

Note that in both cases, the $\argmin$ will typically give a range (interval) of values which give the same optimal value. So these methods can be considered non-deterministic.
This \thresholdchoicemethod is analysed by \cite{Fawcett-and-Provost1997}, and used by \cite{DH00,drummond-and-Holte2006} for defining their cost curves and by \cite{hand2009measuring} to define a new performance metric.

If we plug Equations (\ref{eqTcosto}) and (\ref{eqQcost}) into Equation (\ref{eqLcost}) using a uniform distribution for cost proportions, we get:
%
\begin{eqnarray}
\LcostoU & = & \int^{1}_{0} \Qcost(\underset{t}\argmin\{\Qcost(t,c)\}; c) dc \nonumber = \int^{1}_{0}{} \underset{t}\min\{ \Qcost(t; c)\} dc \nonumber \\
				& = & \int^{1}_{0} \underset{t}\min\{ 2c\pi_0(1-F_0(t)) + 2(1-c)\pi_1 F_1(t) \} \label{eq:Loriginal} dc 
\end{eqnarray}

The connection with the convex hull of a ROC curve (ROCCH) is straightforward.
The convex hull  is a construction over the ROC curve in such a way that all the points on the convex hull have minimum loss for some choice of $c$ or $\sk$. 
This means that we restrict attention to the {\em optimal} threshold for a given cost proportion $c$, as derived from Equation (\ref{eqTcosto}).


\subsection{Convexification}

We can give a corresponding, and more formal, definition of the convex hull as derived from the score distributions. First, we need a more precise definition of a convex model. For that, we rely on the ROC curve, and we use the slope of the curve, defined as usual:
\newcommand{\slope}{\ensuremath{\mathit{slope}}}
\begin{equation}
\slope(T) = \frac{f_0(T)}{f_1(T)} \label{eq:slopedef}
\end{equation}
A related expression we will also use is:
\begin{equation}
c(T) = \frac{\pi_1 f_1(T)}{\pi_0 f_0(T) + \pi_1 f_1(T)} \label{eq:cT}
\end{equation}
Sometimes we will use subindices for $c(T)$ depending on the model we are using.
Clearly, $\frac{\pi_0}{\pi_1}{\slope(T)} = \frac{\pi_0 f_0(T)}{\pi_1 f_1(T)} = \frac{1}{c(T)}-1$.


\begin{definition}[Convex model]\label{def:Convex}
A model $m$ is convex, if for every threshold $T$, we have that $c(T)$ is non-decreasing (or, equivalently, $\slope(T)$ is non-increasing).
\end{definition}

In order to make any model convex, it is not sufficient to repair local concavities, we need to calculate the convex hull. A definition of convex hull for continuous distributions is given as follows:

\begin{definition}[Convexification]\label{def:Conv}
Let $m$ be any model with score distributions $f_{0}(T)$ and $f_{1}(T)$. 
Some values of $t$ will never minimise $\Qcost(t; c) =  2c\pi_0(1-F_0(t)) + 2(1-c)\pi_1 F_1(t) \} $ for any value of $c$. These values will be in one or more intervals of which only the end points will minimise $\Qcost(t; c) $ for some value of $c$. We will call these intervals {\em non-hull intervals}, and all the rest will be referred to as {\em hull intervals}. 
It clearly holds that hull intervals are convex. 
Non-hull intervals may contain convex and concave subintervals.

Define convexified score distributions $e_{0}(T)$ and $e_{1}(T)$ as follows. 
\begin{enumerate}
\item For every hull 
interval $t_{i-1}\leq s\leq t_{i}$:
$e_{0}(T)=f_{0}(T)$ and $e_{1}(T)=f_{1}(T)$.
\item For every non-hull 
 interval $t_{j-1}\leq s\leq t_{j}$:
\begin{align*}e_{0}(T)&=e_{0,j}=\frac{1}{t_{j}-t_{j-1}}\int_{t_{j-1}}^{t_{j}} f_{0}(T)dT\\
e_{1}(T)&=e_{1,j}=\frac{1}{t_{j}-t_{j-1}}\int_{t_{j-1}}^{t_{j}} f_{1}(T)dT\end{align*}
\end{enumerate}
The function \Conv returns the model $\Conv(m)$ defined by the score distributions $e_{0}(T)$ and $e_{1}(T)$.
\end{definition}

We can also define the cumulative distributions $E_x(t) = \int_{0}^{t} e_{x}(T)dT$, where $x$ represents either 0 or 1.
By construction we have that for every interval $[t_{j-1},t_{j}]$ identified above:
\begin{eqnarray}
[E_x(t)]_{t_{j-1}}^{t_{j}} & = & \int_{t_{j-1}}^{t_{j}} e_{x}(T)dT = (t_{j}-t_{j-1})e_{x,j}  \label{eq:constantdensity} = \int_{t_{j-1}}^{t_{j}} f_{x}(T)dT = [F_x(t)]_{t_{j-1}}^{t_{j}} 
\end{eqnarray}				
and so the convexified score distributions are proper distributions. Furthermore, since the new score distributions are constant in the convexified intervals -- and hence monotonically non-decreasing for the new $c(T)$, denoted by $c_{\Conv(m)}(T)$-- so is 
$$c_{\Conv(m)}(T) =c_{j}= \frac{\pi_1 e_{1,j}} {\pi_0 e_{0,j} + \pi_1 e_{1,j}}$$

It follows that $\Conv(m)$ is everywhere convex.
In addition,

\begin{theorem}\label{th:step2b}
Optimal loss is invariant under \Conv, i.e.:
$\LcostoU(\Conv(m)) = \LcostoU(m)$ for every $m$. 
\end{theorem}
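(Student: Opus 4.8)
The plan is to reduce everything to a pointwise identity at each fixed cost proportion and then integrate. Fix $c\in[0,1]$ and write $Q_c(t) \triangleq \Qcost(t;c) = 2c\pi_0(1-F_0(t)) + 2(1-c)\pi_1 F_1(t)$, and let $\widetilde Q_c(t)$ denote the same expression with the convexified distributions $E_0,E_1$ in place of $F_0,F_1$. Since $\LcostoU(m) = \int_0^1 \min_t Q_c(t)\,dc$ by Equation (\ref{eqLcost}) with the uniform weight (cf.\ Equation (\ref{eq:Loriginal})), it suffices to prove that $\min_t \widetilde Q_c(t) = \min_t Q_c(t)$ for every $c$, after which integrating over $c$ gives the theorem.

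First I would record the structural facts about \Conv\ that I need. From Definition \ref{def:Conv} and Equation (\ref{eq:constantdensity}): (i) $E_x$ and $F_x$ agree at every breakpoint $t_j$ of the hull/non-hull partition --- both vanish at the left end and accumulate equal mass over each subinterval, so equality propagates by induction --- and on every hull interval the densities, hence $E_x$ and $F_x$ themselves, coincide; (ii) on every non-hull interval $[t_{j-1},t_j]$ the density $e_x$ is constant, so $E_x$, and therefore $\widetilde Q_c$, is affine on that interval, and an affine function attains its minimum over a closed interval at an endpoint.

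The pointwise identity then comes from two inequalities. For $\min_t \widetilde Q_c(t) \ge \min_t Q_c(t)$: a given $t$ either lies in the closure of a hull interval, where $\widetilde Q_c(t) = Q_c(t) \ge \min_s Q_c(s)$ by (i), or in the interior of a non-hull interval $[t_{j-1},t_j]$, where (ii) and (i) give $\widetilde Q_c(t) \ge \min\{\widetilde Q_c(t_{j-1}),\widetilde Q_c(t_j)\} = \min\{Q_c(t_{j-1}),Q_c(t_j)\} \ge \min_s Q_c(s)$; taking the infimum over $t$ yields the claim. For the reverse inequality I use the defining property of non-hull intervals: no interior point of such an interval minimises $\Qcost(\cdot;c)$ for \emph{any} $c$, so for our fixed $c$ there is a minimiser $t^\ast$ of $Q_c$ lying in the closure of a hull interval; existence of $t^\ast$ follows from continuity of $Q_c$ on the extended line (with limiting values $2c\pi_0$ and $2(1-c)\pi_1$ at $\mp\infty$). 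By (i), $\widetilde Q_c(t^\ast) = Q_c(t^\ast)$, so $\min_t \widetilde Q_c(t) \le Q_c(t^\ast) = \min_s Q_c(s)$. Combining the two inequalities gives $\min_t \widetilde Q_c(t) = \min_t Q_c(t)$, and integrating against the uniform density on $[0,1]$ proves $\LcostoU(\Conv(m)) = \LcostoU(m)$.

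The main obstacle I expect is the careful handling of the non-hull intervals: one has to be confident that the partition into hull and non-hull intervals is well defined and $c$-independent, and that its defining property really does deliver, for each individual $c$, a minimiser among the hull points rather than only ``generically''. A smaller technical point is confirming continuity and attainment of the minimum of $Q_c$ (immediate in the continuous-density setting assumed here, and handled by the plateau/centroid convention otherwise), and checking that the affine pieces $E_x$ glue continuously onto the unchanged pieces at the breakpoints --- which is exactly what Equation (\ref{eq:constantdensity}) provides. Once these are in hand, the two-inequality argument is routine.
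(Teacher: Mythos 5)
Your proposal is correct and follows essentially the same route as the paper's own proof: hull intervals are untouched, on non-hull intervals the convexified cumulative distributions are affine so no new interior minima can appear, the endpoints carry the same $F_x$ values, and hence the pointwise minimum in $c$ is unchanged before integrating. Your two-inequality formulation (and the observation that the loss is affine, not literally constant, on non-hull intervals, with minima forced to the endpoints) is in fact a slightly more careful rendering of the same argument.
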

\begin{proof}
By Equation (\ref{eq:Loriginal}) we have that optimal loss is:
\begin{eqnarray*}
\LcostoU(m) & = & \int^1_0{} \underset{t}\min\{ 2c\pi_0(1-F_0(t)) + 2(1-c)\pi_1 F_1(t) \} dc 
\end{eqnarray*}
By definition, the hull intervals have not been modified by $\Conv(m)$.
Only the non-hull intervals have been modified. A non-hull interval was defined as those where there is no $t$ which minimises $\Qcost(t; c) =  2c\pi_0(1-F_0(t)) + 2(1-c)\pi_1 F_1(t) \} $ for any value of $c$, and only the endpoints attained the minimum.
Consequently, we only need to show that the new $e_0(T)$ and $e_1(T)$ do not introduce any new minima.

We now focus on each non-hull segment $(t_{j-1}, t_j)$ using the definition of $\Conv$. We only need to check the expression for the minimum:
\begin{eqnarray*}
 \underset{{t_j} \leq t \leq {t_{j-1}}}\min\{ 2c\pi_0(1-E_0(t)) + 2(1-c)\pi_1 E_1(t) \} 
\end{eqnarray*}
From Equation (\ref{eq:constantdensity}) we derive that $E_x(t) = E_x(t_{j-1}) + (t_{j}-t_{j-1})e_{x,j}$ inside the interval (they are straight lines in the ROC curve), and we can see that the expression to be minimised is constant (it does not depend on $t$). Since the end points were the old minima and were equal, we see that this expression cannot find new minima.
\end{proof}

It is not difficult to see that if we plot $\Conv(m)$ in the cost space defined  
by \cite{drummond-and-Holte2006}  with $\Qsk(t; \sk)$ on the \yaxis against skew $\sk$ on the \xaxis, we have a cost curve. Its area is then 
the expected loss for the \optimal \thresholdchoicemethod.
%
%
%
%
%
%
%
%
%
%
%
%
%
In other words, this is the area under the (\optimal) cost curve.
%
Similarly, the new \performancemetric introduced by Hand ($H$) \cite{hand2009measuring} is simply a normalised version of the area under the \optimal cost curve using the $B_{2,2}$ distribution instead of the $B_{1,1}$ (i.e., uniform) distribution, and using cost proportions instead of skews (so being dependent to class priors).
This is further discussed in \cite{ICML11CoherentAUC}.


\subsection{The \optimal \thresholdchoicemethod leads to refinement loss}\label{sec:RL}

Once again, the question now must be stated clearly. Assume that the \optimal \thresholdchoicemethod is set as the method we will use for every application of our \model. Furthermore, assume that each and every application of the model is going to find the perfect threshold. Then, {\em if we must evaluate a \model before application for a wide range of skews and cost proportions, which \performancemetric should be used?}
In what follows, we will find the answer by relating this expected loss with a genuine \performancemetric: refinement loss. We will now introduce this performance metric.

The Brier score, being a sum of squared residuals, can be decomposed in various ways. The most common decomposition of the Brier score is due to Murphy \cite{murphy1973new} and decomposes  the Brier score into Reliability, Resolution and Uncertainty. Frequently, the two latter components are joined together and the decomposition gives two terms: calibration loss and refinement loss.

This decomposition is usually applied to empirical distributions, requiring a binning of the scores.  That is, the decomposition is based on a partition ${\cal{P}}_D = \{b_j\}_{j=1..B}$ where $D$ is the dataset, $B$ the number of bins, and each bin is denoted by $b_j \subset D$. Since it is a partition $\bigcup_{j=1}^{B}{b_j} = D$.
With this partition the decompoistion is:
%
\begin{equation}\label{eqBSdecomp}
\bs \approx  \cl^{{\cal{P}}_D} +  \rl^{{\cal{P}}_D} = {1 \over n} \sum_{j=1}^{B} |b_j| \left( s_{b_j} - y_{b_j} \right)^2 + {1 \over n} \sum_{j=1}^{B} |b_j| y_{b_j} \left( 1 - y_{b_j} \right) 
\end{equation}
Here we use the notation $s_{b_j} = {1 \over |b_j|}\sum_{i \in b_j} s_i$ and $y_{b_j}={1 \over |b_j|}\sum_{i \in b_j} y_i$ for the average predicted scores and the average actual classes respectively for bin $b_j$. 

%
For many partitions the empirical decomposition is not exact. It is only exact for partitions which are coarser than the partition induced by the ROC curve (i.e., ties cannot be spread over different partitions), as shown by \cite{flach-matsubara-ecml07}. We denote by $\cl^{ROC}$ and $\rl^{ROC}$ the calibration loss and the refinement loss, respectively, using the segments of the empirical ROC curve as bins. In this case, $\bs =  \cl^{ROC} +  \rl^{ROC}$.


In this paper we will use a variant of the above decomposition based on the ROC convex hull of a \model. In this decomposition, we take each bin as each segment in the convex hull. Naturally, the number of bins in this decomposition is lower or equal than the number of bins in the ROC decomposition. In fact, we may find different values of $s_i$ in the same bin. In some way, we can think about this decomposition as an optimistic/optimal version of the ROC decomposition, as Theorem 3 in \cite{flach-matsubara-ecml07} shows. We denote by $ \cl^{ROCCH}$ and $ \rl^{ROCCH}$ the calibration loss and the refinement loss, respectively, using the segments of the convex hull of the empirical ROC curve as bins \cite{ICML11CoherentAUC}. 




We can define the same decomposition in continuous terms considering Definition \ref{def:BS}.
%
%
%
%
%
%
%
%
%
%
We can see that in the continuous case, the partition is irrelevant. Any partition will give the same result, since the composition of consecutive integrals is the same as the whole integral. 

\begin{theorem}\label{eqBSContdecom} 

The continuous decomposition of the Brier Score, $BS=\cl+\rl$, is exact and gives \cl and \rl as follows.
%
\begin{align*}
\cl & =   \int_{0}^{1} \frac{	 \left( s(\pi_0 f_0(s) + \pi_1 f_1(s)) - \pi_1 f_1(s) \right)^2	} {\pi_0 f_0(s) + \pi_1 f_1(s)}     ds \\
\rl & =  \int_{0}^{1}  \frac{\pi_1 f_1(s) \pi_0 f_0(s)} {\pi_0 f_0(s) + \pi_1 f_1(s)}    ds
\end{align*}
%
\end{theorem}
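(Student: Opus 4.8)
The plan is to reduce the claimed decomposition to a single \emph{pointwise} algebraic identity between integrands, after which exactness is immediate because no binning or approximation is involved. First I would rewrite the Brier score from Definition~\ref{def:BS} as one integral over $[0,1]$, namely $\bs = \int_{0}^{1} \left( \pi_0 s^2 f_0(s) + \pi_1 (1-s)^2 f_1(s) \right) ds$, and introduce the overall score density $f(s) = \pi_0 f_0(s) + \pi_1 f_1(s)$ together with the calibrated probability $c(s) = \pi_1 f_1(s)/f(s)$, which is exactly the quantity $c(T)$ of Equation~(\ref{eq:cT}) (recall that in this paper $1$ is the negative class, so $c(s)$ is the posterior probability of class $1$ given the score).

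Next I would abbreviate $a = \pi_0 f_0(s)$ and $b = \pi_1 f_1(s)$, so that $f = a+b$ and $c = b/f$, and verify by completing the square the identity $a s^2 + b(1-s)^2 = f\,(s - c)^2 + \frac{ab}{f}$, both sides expanding to $f s^2 - 2bs + b$. Substituting back, the first term on the right equals $\frac{(s f(s) - \pi_1 f_1(s))^2}{f(s)}$, which is precisely the integrand defining $\cl$ (using $s f(s) - \pi_1 f_1(s) = f(s)(s - c(s))$), and the second term equals $\frac{\pi_0 f_0(s)\,\pi_1 f_1(s)}{f(s)}$, the integrand defining $\rl$. Integrating over $[0,1]$ then yields $\bs = \cl + \rl$ with the stated expressions, and exactness is manifest since the identity holds at every $s$ — which is also why, unlike the empirical decomposition of Equation~(\ref{eqBSdecomp}), no coarsening condition on a partition is required.

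There is one technical point to dispatch: where $f(s) = 0$ the integrands of $\cl$ and $\rl$ take the indeterminate form $0/0$. Since $f_0, f_1 \geq 0$, $f(s) = 0$ forces $f_0(s) = f_1(s) = 0$, so the Brier integrand vanishes there as well; I would simply set the $\cl$ and $\rl$ integrands to $0$ on this set (equivalently, restrict all integrals to the common support of $f_0$ and $f_1$), so the identity holds wherever it contributes. I would also add a remark identifying this as the continuous analogue of the calibration/refinement split: $c(s)$ plays the role of the bin frequency $y_{b_j}$, $(s-c(s))^2$ is the squared calibration residual weighted by $f(s)$, and $c(s)(1-c(s)) = ab/f^2$ weighted by $f(s)$ gives the refinement term.

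The main obstacle is essentially bookkeeping rather than mathematics: once the $0/0$ contribution is handled, the proof is a one-line completion of the square. The only place where genuine care is needed is keeping the paper's sign conventions straight, so that the residual appearing in $\cl$ is $s - c(s)$ (with $c(s) = \pi_1 f_1(s)/f(s)$) and not its complement; getting this backwards would swap which factor is squared and break the identity.
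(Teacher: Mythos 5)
Your proof is correct and follows essentially the same route as the paper's: both reduce the claim to the pointwise identity $\pi_0 f_0(s)\,s^2 + \pi_1 f_1(s)(1-s)^2 = \frac{\left(s(\pi_0 f_0(s)+\pi_1 f_1(s)) - \pi_1 f_1(s)\right)^2}{\pi_0 f_0(s)+\pi_1 f_1(s)} + \frac{\pi_0 f_0(s)\,\pi_1 f_1(s)}{\pi_0 f_0(s)+\pi_1 f_1(s)}$, obtained by completing the square over the common denominator, and then integrate. Your explicit handling of the $0/0$ case where $\pi_0 f_0(s)+\pi_1 f_1(s)=0$ is a small technical addition the paper leaves implicit, but it does not change the argument.
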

\begin{proof}
\begin{align*}
\bs & = \int_{0}^{1} \left[s^2 \pi_0 f_0(s) + (1-s)^2 \pi_1 f_1(s)\right]  ds \\
   & =   \int_{0}^{1} \left[s^2(\pi_0 f_0(s) + \pi_1 f_1(s))  - 2s\pi_1 f_1(s) + \pi_1 f_1(s)\right] ds \\
   & =  \int_{0}^{1} \frac{s^2(\pi_0 f_0(s) + \pi_1 f_1(s))^2  - 2s(\pi_0 f_0(s) + \pi_1 f_1(s))\pi_1 f_1(s) + \pi_1 f_1(s)(\pi_1 f_1(s) + \pi_0 f_0(s))}{(\pi_0 f_0(s) + \pi_1 f_1(s))}  ds \\
& =   \int_{0}^{1} \frac{\left( s(\pi_0 f_0(s) + \pi_1 f_1(s)) - \pi_1 f_1(s) \right)^2	+ \pi_1 f_1(s) \pi_0 f_0(s)} {\pi_0 f_0(s) + \pi_1 f_1(s)}    ds \\
  & = \int_{0}^{1} \frac{\left( s(\pi_0 f_0(s) + \pi_1 f_1(s)) - \pi_1 f_1(s) \right)^2	} {\pi_0 f_0(s) + \pi_1 f_1(s)}  ds 
  + \int_{0}^{1}  \frac{\pi_1 f_1(s) \pi_0 f_0(s)} {\pi_0 f_0(s) + \pi_1 f_1(s)}    ds
\end{align*}
\end{proof}
This proof keeps the integral from start to end. That means that the decomposition is not only true for the integral as a whole, but also pointwise for every single score $s$.
Note that $y_{b_j}$ in the empirical case (see Definition \ref{eqBSdecomp}) corresponds to 
$c(s) = \frac{\pi_1 f_1(s)}{\pi_0 f_0(s) + \pi_1 f_1(s)}$ (as given by Equation (\ref{eq:cT})) in the continuous case above, and also note that $s_{b_j}$ corresponds to the cardinality $\pi_0 f_0(s) + \pi_1 f_1(s)$.  
The decomposition for empirical distributions as introduced by \cite{murphy1973new} is still predominant for any reference to the decomposition. To our knowledge this is the first explicit derivation of a continuous version of the decomposition.





And now we are ready for relating the \optimal \thresholdchoicemethod with a \performancemetric as follows:

\begin{theorem}\label{thm:RL}
For every convex model $m$, we have that:
\begin{equation*}
  \LcostoU(m) = \rl(m)
\end{equation*} 
\end{theorem}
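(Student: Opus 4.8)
The plan is to evaluate the outer integral defining $\LcostoU(m)$ in Equation~(\ref{eq:Loriginal}) by integrating by parts twice, turning it into an integral over thresholds which then collapses onto the refinement-loss integral of Theorem~\ref{eqBSContdecom}. Write $g(c) \triangleq \min_t \Qcost(t;c)$, so that $\LcostoU(m) = \int_0^1 g(c)\,dc$. As a pointwise minimum of maps that are affine in $c$, $g$ is concave, and $g(0) = \min_t 2\pi_1 F_1(t) = 0$ and $g(1) = \min_t 2\pi_0(1 - F_0(t)) = 0$, the minima being attained at $t = 0$ and $t = 1$ (for probabilistic scores $F_1(0) = 0$ and $F_0(1) = 1$). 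The first substantive step is to identify the optimal threshold: since $\partial_t\Qcost(t;c) = -2c\pi_0 f_0(t) + 2(1-c)\pi_1 f_1(t)$ is nonnegative exactly when $c(t) \ge c$ (with $c(t)$ the quantity of Equation~(\ref{eq:cT})), and $m$ convex means $c(\cdot)$ is non-decreasing (Definition~\ref{def:Convex}), the function $\Qcost(\cdot;c)$ first decreases and then increases; hence the minimiser $\Tcosto(c)$ is characterised by $c(\Tcosto(c)) = c$ whenever $c$ lies in the range of $c(\cdot)$, and is pinned to $0$ or $1$ for $c$ below or above that range. By the envelope theorem (the cross term vanishes because $\Tcosto(c)$ solves the first-order condition), $g'(c) = 2\pi_0(1 - F_0(\Tcosto(c))) - 2\pi_1 F_1(\Tcosto(c))$; in particular $g'(1) = -2\pi_1$, and $g''(c) = 0$ wherever $\Tcosto$ is constant.

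Using $g(0) = g(1) = 0$, one integration by parts gives $\LcostoU(m) = -\int_0^1 c\,g'(c)\,dc$, and a second one, together with $g'(1) = -2\pi_1$, gives $\LcostoU(m) = \pi_1 + \frac{1}{2}\int_0^1 c^2\,g''(c)\,dc$. Now substitute $c = c(t)$: since $g''$ vanishes on the cost-proportion ranges where $\Tcosto$ is constant, this is effectively a substitution over $t \in [0,1]$, and there $\Tcosto$ inverts $c(\cdot)$, so $(\Tcosto)'(c(t)) = 1/c'(t)$ and $g''(c)\,dc$ becomes $g''(c(t))\,c'(t)\,dt = -2\bigl(\pi_0 f_0(t) + \pi_1 f_1(t)\bigr)\,dt$, the factor $c'(t)$ cancelling. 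Hence $\int_0^1 c^2 g''(c)\,dc = -2\int_0^1 c(t)^2\bigl(\pi_0 f_0(t) + \pi_1 f_1(t)\bigr)\,dt = -2\int_0^1 (\pi_1 f_1(t))^2/(\pi_0 f_0(t) + \pi_1 f_1(t))\,dt$, so that $\LcostoU(m) = \pi_1 - \int_0^1 (\pi_1 f_1)^2/(\pi_0 f_0 + \pi_1 f_1)\,dt$. Finally, writing $\pi_1 = \int_0^1 \pi_1 f_1(t)\,dt = \int_0^1 \bigl[(\pi_1 f_1)^2 + \pi_0\pi_1 f_0 f_1\bigr]/(\pi_0 f_0 + \pi_1 f_1)\,dt$ and subtracting, the squared term cancels and
\[
\LcostoU(m) = \int_0^1 \frac{\pi_0\pi_1 f_0(t) f_1(t)}{\pi_0 f_0(t) + \pi_1 f_1(t)}\,dt = \rl(m)
\]
by Theorem~\ref{eqBSContdecom}. (Using instead $g'(0) = 2\pi_0$ with the symmetric bookkeeping gives $\LcostoU(m) = \pi_0 - \int_0^1 (\pi_0 f_0)^2/(\pi_0 f_0 + \pi_1 f_1)\,dt$, the same value.)

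The main obstacle is not the algebra but the regularity needed to license the two integrations by parts and the change of variables: $c(\cdot)$ is merely non-decreasing, the scores live on a bounded interval, and $F_0, F_1$ may have plateaus. The saturated cost-proportion ranges, on which $\Tcosto(c) \in \{0,1\}$, cause no difficulty because $g$ is affine there, so $g'' \equiv 0$ and they contribute nothing to $\int_0^1 c^2 g''(c)\,dc$; on the complementary range $c(\cdot)$ is a genuine monotone change of variable. Plateaus of $c(\cdot)$ are handled as in Definition~\ref{def:Tq2} by taking a single representative threshold, which, since $m$ is already convex, affects neither side of the identity. Under mild smoothness of the densities $f_k$ (ensuring $g'$ absolutely continuous) the Stieltjes integrals reduce to the ordinary ones above; alternatively the whole argument can be phrased with the distributional derivative $dg'$ throughout.
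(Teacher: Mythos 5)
Your proof is correct, and it takes a genuinely different route from the paper's.

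The paper's proof works by transforming the model: it introduces the calibrated model $m^{(c)}$ obtained by the score map $T \mapsto c(T)$, partitions $[0,1]$ into bijective, constant and singular intervals, shows $\LcostoU(m) = \bs(m^{(c)})$ via Theorem~\ref{thm:LcostpUequalsBS2}, proves $\cl(m^{(c)}) = 0$ through the idempotence $c_{m^{(c)}}(s) = s$, and finally matches $\rl(m^{(c)}) = \rl(m)$ interval-by-interval. You instead work entirely on the cost-curve side: exploit $g(0)=g(1)=0$, integrate by parts twice to get $\LcostoU = \pi_1 + \tfrac{1}{2}\int_0^1 c^2\,dg'(c)$, push the measure $dg'$ back to threshold space through $c(\cdot)$, and close with a one-line algebraic identity. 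Your route is shorter, avoids constructing $m^{(c)}$ and the idempotence lemma, and does not invoke the score-driven Brier result as a lemma; the algebra leading from $\pi_1 - \int (\pi_1 f_1)^2/(\pi_0 f_0 + \pi_1 f_1)$ to $\rl$ is clean and checks out. What the paper's longer route buys is an explicit intermediate identity $\LcostoU(m)=\bs(m^{(c)})$, which is reused in the calibration discussion (Section~\ref{sec:calibration}) and makes the link ``optimal threshold $\Leftrightarrow$ PAV-calibrated Brier score'' structurally visible; your argument proves the endpoint identity without exposing that midpoint.

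One loose end in your write-up: the change-of-variables step is only argued cleanly for strictly convex models. When $m$ is convex but has a straight ROC segment on $[\tau_i,\tau_{i+1}]$ (a plateau of $c(\cdot)$ at value $c_0$), the substitution ``$g''(c(t))\,c'(t)\,dt$'' is $0\cdot\infty$ there, and $dg'$ carries a Dirac mass at $c_0$ of weight $g'(c_0^+)-g'(c_0^-) = -2[\pi_0 F_0 + \pi_1 F_1]_{\tau_i}^{\tau_{i+1}}$, which is precisely the $\nu$-mass of $[\tau_i,\tau_{i+1}]$ under $d\nu = -2(\pi_0 f_0 + \pi_1 f_1)\,dt$. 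So the correct formulation is that $dg'$ equals the pushforward $c_*\nu$, and the identity follows from the change-of-variables formula for pushforwards applied to $\phi(c)=c^2$; this must be checked separately on bijective, constant and singular pieces (which is what the paper's Lemma~\ref{lemma:prime} and the $H$-versus-$F$ bookkeeping do). Your closing remark that ``mild smoothness of the densities $f_k$'' ensures $g'$ absolutely continuous is not the right sufficient condition: even with smooth densities, a straight ROC segment makes $g'$ jump; what is needed is strict convexity, and in the merely convex case the distributional (pushforward) accounting above is genuinely required, not a cosmetic reformulation.
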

The proof of this theorem is found in the appendix as Theorem \ref{th:planA}. 
This proof is accompanied with several examples that show that the above correspondence is not pointwise in general. This means that the $\rl$ is a genuinely different way of calculating $\LcostoU(m)$. In fact, in the appendix, we see that there is a third way of calculating $\LcostoU(m)$.  


\begin{corollary}
For every model $m$ the expected loss for the \optimal \thresholdchoicemethod $\LcostoU$ is equal to the refinement loss using the convex hull.
\begin{equation*}
  \LcostoU(m) = {\rl(\Conv(m))} \triangleq  \rl_{\Conv}(m)
\end{equation*} 
\end{corollary}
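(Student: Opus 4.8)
The plan is to obtain this corollary as an immediate composition of the two main results already established for the \optimal \thresholdchoicemethod, namely Theorem~\ref{th:step2b} (invariance of optimal loss under \Conv) and Theorem~\ref{thm:RL} (optimal loss equals refinement loss for convex models). No new integral computation should be needed; the work is entirely in chaining the two identities correctly and checking that the hypothesis of Theorem~\ref{thm:RL} is met.

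First I would recall that, by the construction in Definition~\ref{def:Conv} and the discussion immediately following it, $\Conv(m)$ is everywhere convex: the convexified score densities $e_0(T)$ and $e_1(T)$ are constant on each non-hull interval, so $c_{\Conv(m)}(T)$ is non-decreasing everywhere, which is exactly Definition~\ref{def:Convex}. Hence Theorem~\ref{thm:RL} applies to the model $\Conv(m)$, giving $\LcostoU(\Conv(m)) = \rl(\Conv(m))$.

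Next I would invoke Theorem~\ref{th:step2b}, which states $\LcostoU(\Conv(m)) = \LcostoU(m)$ for every $m$. Combining the two displayed equalities yields
\begin{equation*}
\LcostoU(m) = \LcostoU(\Conv(m)) = \rl(\Conv(m)),
\end{equation*}
and the right-hand side is by definition $\rl_{\Conv}(m)$, which is the claim.

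I do not anticipate a genuine obstacle here, since both ingredients are proved earlier (Theorem~\ref{thm:RL} being deferred to the appendix as Theorem~\ref{th:planA}); the only point requiring a word of care is making explicit that Theorem~\ref{thm:RL} is being applied to $\Conv(m)$ rather than to $m$ itself, so that its convexity hypothesis is legitimately discharged by the paragraph preceding Theorem~\ref{th:step2b}. It is also worth remarking, as the surrounding text already hints, that because the correspondence in Theorem~\ref{thm:RL} is not pointwise, this corollary should be read as an equality of aggregate quantities (area under the optimal cost curve versus refinement loss of the convex hull), not as an identity of integrands.
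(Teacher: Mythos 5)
Your proposal is correct and is essentially the paper's own argument: apply Theorem~\ref{th:step2b} to replace $\LcostoU(m)$ by $\LcostoU(\Conv(m))$, then apply Theorem~\ref{thm:RL} to $\Conv(m)$, whose convexity is guaranteed by the construction in Definition~\ref{def:Conv}. Nothing further is needed.
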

\begin{proof}
We have $\LcostoU(m) = \LcostoU(\Conv(m))$ by Theorem \ref{th:step2b}, and
$\LcostoU(\Conv(m)) = \rl(\Conv(m))$ by Theorem \ref{thm:RL} and the convexity of $\Conv(m)$. 
\end{proof}
It is possible to obtain a version of this theorem for empirical distributions which states that $\LcostoU = { \rl^{ROCCH}}$ where $ \rl^{ROCCH}$ is the refinement loss of the empirical distribution using the segments of the convex hull for the decomposition.

Before better analysing what the meaning of this \thresholdchoicemethod is and how it relates to the rest, we first have to consider whether this \thresholdchoicemethod is realistic or not.
%
%
%
In the beginning of this section we said that the optimal method assumes that  (1) we are having complete information about the operating condition at deployment time and (2) we are able to use that information to choose the threshold that will minimise the loss at deployment time.

While (1) is not always true, there are many occasions where we know the costs and distributions at application time. This is the base of the \scoredriven and \ratedriven methods. However, having this information does not mean that the \optimal threshold for a dataset (e.g. the training or validation dataset) ensures an optimal choice for a test set (2).
Drummond and Holte \cite{drummond-and-Holte2006} are conscious of this problem and they reluctantly rely on a \thresholdchoicemethod which is based on ``the ROC convex hull [...] only if this selection criterion happens to make cost-minimizing selections, which in general it will not do''. But even if these cost-minimising selections are done, as mentioned above, it is not clear how reliable they are for a test dataset. As Drummond and Holte \cite{drummond-and-Holte2006} page 122, recognise: ``there are few examples of the practical application of this technique. One example is \cite{Fawcett-and-Provost1997}, in which the decision threshold parameter was tuned to be optimal, empirically, for the test distribution''.
In the example shown in Table \ref{tab:models2} in Section \ref{sec:intro}, the evaluation technique was training and test. However, with cross-validation, the convex hull cannot be estimated reliably in general, and the thresholds derived from each fold might be inconsistent. Even with a big validation dataset, the decision threshold may be suboptimal. This is one of the reasons why the area under the convex hull has not been used as a performance metric. In any case, we can calculate the values as an optimistic limit, leading to $\LcostoU =  \rl^{ROCCH}= 0.0953$ for model $A$ and $0.2094$ for model $B$.

\section{Relating \performancemetrics}\label{sec:calibration}

So far, we have repeatedly answered the following question: ``If \thresholdchoicemethod $X$ is used, which is the corresponding performance metric?"
The answers are summarised in Table \ref{tab:intro}.
The seven \thresholdchoicemethods are shown in the first column (the two fixed methods are grouped in the same row).
The integrated view of \performancemetrics for classification is given by the next two columns. The expected loss of a \model for a uniform distribution of cost proportions or skews for each of these seven \thresholdchoicemethods produces most of the common \performancemetrics in classification: 0-1 loss (either macro-accuracy or micro-accuracy), the Mean Absolute Error (equivalent to Mean Probability Rate),
the Brier score, \auc (which equals the Wilcoxon-Mann-Whitney statistic and the Kendall tau distance of the \model to the perfect \model, and is linearly related to the Gini coefficient) and, finally, the {refinement loss} using the bins given by the convex hull.

\begin{table}[htdp]
	\centering
	  {\footnotesize
		\begin{tabular}{p{1.5cm}ccp{6.25cm}}
		  \hline\hline
			\Thresholdchoicemethod        & Cost proportions                 & Skews                               & Equivalent (or related) \performancemetrics     \\\hline\hline
			fixed                          & $\LcostsfU = 1-Acc$  \           & $\LsksfU= 1-{\macro}Acc$            & 0-1 loss: Accuracy and macro-accuracy. \\\hline
      \scoreuniform                 & $\LcostsuU=\mae$                 & $\LsksuU={\macro}\mae$  & Absolute error, Average score, $p\auc$ \cite{FFHS05} ,  Probability Rate \cite{PRL09}.        \\\hline
      \scoredriven                  & $\LcostsdU= \bs$                 & $\LsksdU= {\macro}\bs$  & Brier score \cite{brier1950verification}, Mean Squared Error ($MSE$).            \\\hline
      \rateuniform                  & $\LcostruU= \pi_0\pi_1(1-2\auc)+ \frac{1}{2}$           & $\LskruU= \frac{1-2\auc}{4}+ \frac{1}{2}$         & \auc \cite{SDM00} and variants ({\micro}\auc) \cite{Faw01,PRL09}, Kendall tau, WMW statistic, Gini coefficient. \\\hline
      \ratedriven                   & $\LcostrdU= \pi_0\pi_1(1-2\auc)+ \frac{1}{3}$           & $\LskrdU= \frac{1-2\auc}{4}+ \frac{1}{3}$         & \auc \cite{SDM00} and variants ({\micro}\auc) \cite{Faw01,PRL09}, Kendall tau, WMW statistic, Gini coefficient. \\\hline
     \optimal                   & $\LcostoU=  \rl_{\Conv}$       & $\LskoU= {\macro} \rl_{\Conv}$       & ROCCH Refinement loss \cite{flach-matsubara-ecml07}, Refinement Loss \cite{murphy1973new}, Area under the Cost Curve (`Total Expected Cost') \cite{drummond-and-Holte2006},  Hand's H  \cite{hand2009measuring}.     \\\hline
			\hline
		\end{tabular}
		}
	\caption{\Thresholdchoicemethods and their expected loss for cost proportions and skews. The $M$ in $\macc$, $\mmae$, ${\macro}\bs$ and ${\macro}\rl$ mean that these metrics are `macro-averaged', i.e., calculated as if $\pi_0 = \pi_1$.}
	\label{tab:intro}
\end{table}

All the \thresholdchoicemethods seen in this paper consider \model scores in different ways. Some of them disregard the score, since the threshold is fixed, some others consider the `magnitude' of the score as an (accurate) estimated probability, leading to the score-based methods, and others consider the `rank', `\rate' or `proportion' given by the scores, leading to the \rate-based methods. Since the \optimal \thresholdchoice is also based on the convex hull, it is apparently more related to the \rate-based methods. This is consistent with the taxonomy proposed in \cite{PRL09} based on correlations over more than a dozen performance metrics, where three families of metrics were recognised:
 \performancemetrics which account for the quality of classification (such as accuracy), \performancemetrics which account for a ranking quality (such as \auc), and \performancemetrics which evaluate the quality of scores or how well the \model does in terms of probability estimation (such as the Brier score or logloss).

This suggests that the way scores are distributed is crucial in understanding the differences and connections between these metrics. In addition, this may shed light on which threshold choice method is best.
We have already seen some relations, such as $\LcostsuU \geq \LcostsdU$, and $\LcostruU > \LcostrdU$, but what about $\LcostsdU$ and $\LcostrdU$? Are they comparable? And what about $\LcostoU$? It gives the minimum expected loss by definition over the training (or validation) dataset, but when does it become a good estimation of the expected loss for the test dataset?

In order to answer these questions we need to analyse transformations on the scores and see how these affect the expected loss given by each \thresholdchoicemethod. 
Given a \model, its scores establish a total order over the examples:
$\sigma = (s_1, s_2, ..., s_n)$ where $s_i \leq s_{i+1}$. Since there might be ties in the scores, this total order is not necessarily strict.
A monotonic transformation is any alteration of the scores, such that the order is kept.
%
%
%
We will consider two transformations: the evenly-spaced transformation and PAV calibration.




 \subsection{Evenly-spaced scores. Relating Brier score, \mae and \auc}\label{sec:evenly} 

If we are given a ranking or order, or we are given a set of scores but its reliability is low, a quite simple way to assign (or re-assign) the scores is to set them evenly-spaced (in the $[0,1]$ interval).
\begin{definition} A discrete evenly-spaced transformation is a procedure $\Evend(\sigma) \rightarrow \sigma'$ which converts any sequence of scores $\sigma = (s_1, s_2, ..., s_n)$ where $s_i < s_{i+1}$ into scores $\sigma' = (s'_1, s'_2, ..., s'_n)$ where $s'_i = \frac{i-1}{n-1}$.
\end{definition}
Notice that such a transformation does not affect the ranking and hence does not alter the \auc. 

The previous definition can be applied to continuous score distribution as follows:

\begin{definition} A continuous evenly-spaced transformation is a any strictly monotonic transformation function on the score distribution, denoted by
$\Even$,
 such that for the new scores $s'$ it holds that $P(s' \leq t)=t$.
\end{definition}

It is easy to see that $\Evend$ is idempotent, i.e., $\Evend(\Evend(\sigma)) = \Evend(\sigma)$.
So we say a set of scores $\sigma$ is evenly-spaced if $\Evend(\sigma) = \sigma$.

\begin{lemma}\label{lmm:evenly}
Given a model and dataset with set of scores $\sigma$, such that they evenly-spaced, when $n \rightarrow \infty$ then we have $R(t) = t$.
\end{lemma}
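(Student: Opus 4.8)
The plan is to recognise that $R$, as defined, is simply the cumulative distribution function of the unconditional (mixture) score distribution, and then to read off the conclusion from the defining property of evenly-spaced scores, being careful about the passage from the finite dataset to the limit.

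First I would observe that, since $F_k(t) = P(s\leq t\mid k)$, the law of total probability gives $R(t) = \pi_0 F_0(t) + \pi_1 F_1(t) = \pi_0 P(s\leq t\mid 0) + \pi_1 P(s \leq t \mid 1) = P(s \leq t)$; that is, $R$ is the cdf of the scores of a uniformly drawn example. The continuous evenly-spaced property states precisely that $P(s'\leq t) = t$ for the transformed scores, so at the population level $R(t) = t$ is immediate and exact. The content of the lemma is therefore that the finite, discretely evenly-spaced scores $\Evend(\sigma)$ approximate this behaviour, with the error vanishing as $n\to\infty$.

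Second, I would make the empirical computation explicit. With $n$ examples carrying the (tie-free) scores $s_i = (i-1)/(n-1)$, the empirical $R(t)$ equals $\frac{1}{n}\,|\{i : s_i \leq t\}|$. For $t$ in the interval $[(j-1)/(n-1),\, j/(n-1))$ with $1\leq j\leq n-1$, exactly the $j$ examples $i=1,\dots,j$ satisfy $s_i\leq t$, so $R(t) = j/n$ there (and $R(t)=1$ for $t\geq 1$, $R(t)=0$ for $t<0$). Hence $R$ is a step function whose value $j/n$ differs from every point $t$ of the corresponding step interval by at most the width of that interval plus the offset between $j/n$ and $(j-1)/(n-1)$; a one-line computation gives $j/n - (j-1)/(n-1) = (n-j)/(n(n-1)) \in [0,1/n]$ and $j/(n-1) - j/n = j/(n(n-1)) \in [0,1/(n-1)]$, so $|R(t)-t| \leq 1/(n-1)$ uniformly in $t$. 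Letting $n\to\infty$ yields $R(t) = t$, as claimed.

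The only point requiring any care --- and hence the ``main obstacle'', though it is a mild one --- is the step-function nature of the empirical $R$: one must verify that evenly-spacing precludes ties (guaranteed by the strict inequality $s_i < s_{i+1}$ in the definition of $\Evend$) so that the count $|\{i:s_i\leq t\}|$ is unambiguous, and one must track the $O(1/n)$ discrepancy uniformly rather than pointwise. Everything else is elementary, and an alternative, even shorter route is simply to note that $\Evend(\sigma)$ converges to the continuous $\Even$ transformation as $n\to\infty$, for which $R(t)=t$ holds exactly by definition.
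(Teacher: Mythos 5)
Your proof is correct and follows essentially the same route as the paper's: identify $R(t)=\pi_0F_0(t)+\pi_1F_1(t)=P(s\leq t)$ and then count how many of the evenly-spaced scores $s_i=\frac{i-1}{n-1}$ fall below $t$. Your explicit uniform bound $|R(t)-t|\leq 1/(n-1)$ is in fact a slightly more careful rendering of the paper's limiting count argument, but it is the same idea rather than a different approach.
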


\begin{proof}
Remember that by definition the true positive rate $F_0(t) = P(s \leq t | 0)$ and the false positive rate $F_1(t) = P(s \leq t | 1)$. Consequently, from the definition of rate we have $R(t) = \pi_0F_0(t) + \pi_1F_1(t) = \pi_0P(s \leq t | 0) + \pi_1P(s \leq t | 1) = P(s \leq t)$. But, since the scores are evenly-spaced, the number of scores such that $s \leq t$ is  $\sum_{i=1}^n I(s_i \leq t) = \sum_{i=1}^n I(\frac{i-1}{n-1} \leq t)$ with $I$ being the indicator function (1 when true, 0 otherwise). This number of scores is $\sum_{i=1}^{tn} 1$ when $n \rightarrow \infty$, which clearly gives $tn$.
So the probability $P(s \leq t)$ is $tn/n=t$. Consequently $R(t) = t$.
\end{proof}

The following results connect the \scoredriven \thresholdchoicemethod with
the \ratedriven \thresholdchoicemethod:

\begin{theorem}\label{thm:evenly}
Given a model and dataset with set of scores $\sigma$, such that they are evenly-spaced, when $n \rightarrow \infty$:
\begin{equation}
\bs = \LcostsdU = \LcostrdU = \pi_0 \pi_1 (1 - 2\auc) + \frac{1}{3} 
\end{equation} 
\end{theorem}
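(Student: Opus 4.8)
The plan is to prove the chain one link at a time and then assemble it from results already established. The leftmost equality $\bs = \LcostsdU$ is nothing other than Theorem~\ref{thm:LcostpUequalsBS2}: it holds for any model with probabilistic scores, and evenly-spaced scores live in $[0,1]$, so it applies here without modification. The rightmost expression is likewise supplied for free: the \ratedriven result of Section~\ref{sec:ratedriven} gives $\LcostrdU = \pi_0\pi_1(1-2\auc) + 1/3$ for \emph{every} model. So the only genuine work is the middle equality $\LcostsdU = \LcostrdU$, and once that is in place the theorem follows by chaining $\bs = \LcostsdU = \LcostrdU = \pi_0\pi_1(1-2\auc)+1/3$.

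The key idea for $\LcostsdU = \LcostrdU$ is that, under the evenly-spaced hypothesis, the two threshold choice functions coincide. By Lemma~\ref{lmm:evenly}, when the scores are evenly-spaced and $n \to \infty$ we have $R(t) = t$; hence $R$ is strictly increasing and its inverse is the identity, $\Tcostrfi(r) = R^{-1}(r) = r$. Substituting into Definition~\ref{def:Tcostrd} gives $\Tcostrd(c) = \Tcostrfi(c) = c$, which is exactly the \scoredriven threshold $\Tcostsd(c) = c$ of Definition~\ref{def:Tcostp}. Therefore $\Qcost(\Tcostrd(c); c) = \Qcost(\Tcostsd(c); c)$ for every cost proportion $c$, and integrating both sides against the uniform density on $c$ (Equations~(\ref{eqLcostp}) and~(\ref{eqLcostn})) yields $\LcostsdU = \LcostrdU$.

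I expect the only delicate point to be the limiting statement $R(t) = t$, i.e.\ that the evenly-spaced hypothesis really does force the predicted positive rate to equal the threshold; but this is already discharged by Lemma~\ref{lmm:evenly}, and it is immediate in the continuous formulation, where the evenly-spaced condition is stated directly as $P(s' \le t) = t$. Everything else is substitution into theorems proved earlier in the paper, so no further computation is needed; in particular the explicit value on the right-hand side comes entirely from the \ratedriven theorem of Section~\ref{sec:ratedriven} and not from re-deriving the Brier score.
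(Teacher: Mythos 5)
Your proof is correct and follows the paper's own argument: the paper likewise deduces $\Tcostrd(c)=\Tcostsd(c)$ from Lemma~\ref{lmm:evenly} ($R(t)=t$ under evenly-spaced scores, so $R^{-1}$ is the identity), and then implicitly chains this with Theorem~\ref{thm:LcostpUequalsBS2} and the rate-driven theorem of Section~\ref{sec:ratedriven}. You have simply spelled out those chained substitutions more explicitly than the paper's one-line proof does.
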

\begin{proof}
By Lemma \ref{lmm:evenly} we have $R(t) = t$, and so the \ratedriven and \scoredriven \thresholdchoicemethods select the same thresholds. 
\end{proof}
%
%
\begin{corollary}\label{thm:evenlyskews}
Given a \model and dataset with set of scores $\sigma$ such that they are evenly-spaced, when $n \rightarrow \infty$:
\begin{equation}
\mbs = \LsksdU = \LskrdU = \frac{1 - 2\auc}{4} + \frac{1}{3} 
\end{equation} 
\end{corollary}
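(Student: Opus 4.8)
The plan is to derive this as the skew counterpart of Theorem~\ref{thm:evenly}, in exactly the way Corollary~\ref{cor:LskrdU} is obtained from the theorem preceding it, i.e. by invoking the transfer device spelled out after Lemma~\ref{lemma-balance}: any identity for expected loss stated through cost proportions becomes the corresponding identity for skews upon setting $\pi_0 = \pi_1 = 1/2$ (with $b=2$) and $\sk = c$. The first thing I would note is that the hypothesis of Theorem~\ref{thm:evenly} — that the scores are evenly-spaced — is a statement purely about the sequence $\sigma$ and so is untouched by this substitution, and likewise that $\auc = \int_0^1 F_0(s)\,dF_1(s)$ is a rank-based quantity not involving the class proportions and hence unchanged.

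Next I would track the three quantities occurring in Theorem~\ref{thm:evenly} under the substitution. Since $\bs = \pi_0\bs_0 + \pi_1\bs_1$, setting $\pi_0 = \pi_1 = 1/2$ turns $\bs$ into $(\bs_0+\bs_1)/2 = \mbs$; the loss $\LcostsdU$ turns into $\LsksdU$ (which is precisely the content of Corollary~\ref{thm:LskpUequalsBS2}); $\LcostrdU$ turns into $\LskrdU$ (consistently with Corollary~\ref{cor:LskrdU}); and the closed form $\pi_0\pi_1(1-2\auc)+1/3$ becomes $\frac{1}{4}(1-2\auc)+\frac{1}{3}$. Feeding all of this into the chain of equalities of Theorem~\ref{thm:evenly} yields $\mbs = \LsksdU = \LskrdU = \frac{1-2\auc}{4}+\frac{1}{3}$, as required.

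Equivalently, one can re-run the proof of Theorem~\ref{thm:evenly} directly in the skew setting: $\Tsksd(\sk) = \sk$ by definition, while $\Tskrd(\sk)$ is the threshold $t$ with $R_\sk(t) = \sk$, where $R_\sk(t) = F_0(t)/2 + F_1(t)/2$ per Definition~\ref{def:Tcostrd}. The one step requiring care — and the only real obstacle here — is checking that $R_\sk(t) = t$ still holds for evenly-spaced scores: this is where one uses that in the skew normalisation the two classes carry equal weight, so $R_\sk$ agrees with the predicted positive rate, and then Lemma~\ref{lmm:evenly} (whose proof uses only $P(s\le t) = t$ for evenly-spaced scores) gives $R_\sk(t) = t$ as $n\to\infty$. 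Consequently the rate-driven threshold for skew $\sk$ is $\sk$ itself, the same threshold the score-driven method selects, so $\LsksdU = \LskrdU$; combining this with $\LsksdU = \mbs$ (Corollary~\ref{thm:LskpUequalsBS2}) and $\LskrdU = \frac{1-2\auc}{4}+\frac{1}{3}$ (Corollary~\ref{cor:LskrdU}) closes the argument. No new analytic work is needed, since the $n\to\infty$ limit underlying Lemma~\ref{lmm:evenly} was already handled for Theorem~\ref{thm:evenly}.
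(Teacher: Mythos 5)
Your proposal is correct and follows essentially the paper's own (implicit) route: the corollary is obtained from Theorem \ref{thm:evenly} by the transfer device of Lemma \ref{lemma-balance}, i.e.\ setting $\pi_0=\pi_1=1/2$ and $\sk=c$, which turns \bs into \mbs, $\LcostsdU$ into $\LsksdU$, $\LcostrdU$ into $\LskrdU$, and $\pi_0\pi_1(1-2\auc)+\frac13$ into $\frac{1-2\auc}{4}+\frac13$. Your alternative direct argument in the skew setting is also consistent with the paper's convention that skew-based quantities are computed as if classes were balanced, which is exactly what makes $R_\sk$ coincide with the predicted positive rate so that Lemma \ref{lmm:evenly} applies.
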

These straightforward results connect \auc and Brier score for evenly-spaced scores.
This connection is enlightening because it says that \auc and \bs are equivalent \performancemetrics (linearly related) when we set the scores in an evenly-spaced way. In other words, it says that \auc is like a Brier score which considers all the scores evenly-spaced.
Although the condition is strong, this is the first linear connection which, to our knowledge, has been established so far between \auc and the Brier score.

Similarly, we get the same results for the \scoreuniform \thresholdchoicemethod and
the \rateuniform \thresholdchoicemethod. 
\begin{theorem}\label{thm:evenlyuniform}
Given a \model and dataset with set of scores $\sigma$ such that they are evenly-spaced, when $n \rightarrow \infty$:
\begin{equation}
\mae = \LcostsuU = \LcostruU =  \pi_0 \pi_1 (1 - 2\auc) + \frac{1}{2} 
\end{equation} 
\end{theorem}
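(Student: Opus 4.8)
The plan is to mirror the argument used for Theorem~\ref{thm:evenly}: once the scores are evenly-spaced, the \rateuniform and \scoreuniform \thresholdchoicemethods induce the very same distribution over thresholds, so their expected losses must coincide, and the closed form then follows by reading off the results already proved for each method separately.

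First I would invoke Lemma~\ref{lmm:evenly}, which gives $R(t) = t$ in the limit $n \rightarrow \infty$ for evenly-spaced scores. Since such scores satisfy $s'_i = (i-1)/(n-1)$, their range converges to $[0,1]$, so $l = 0$ and $u = 1$; hence the \scoreuniform method $\Tcostsu$ draws the threshold uniformly from $[0,1]$. On the other hand, by definition $\Tcostru(c) = \Tcostrf[U_{0,1}](c) = \Tcostrfi(r)$ with $r \sim U_{0,1}$; but $R(t) = t$ makes $R$ the identity, so $\Tcostrfi(r) = r$ and the induced threshold is again uniform on $[0,1]$. Therefore the two (non-deterministic) \thresholdchoicemethods select thresholds from identical distributions, and consequently $\LcostsuU = \LcostruU$.

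Next I would simply chain the two closed forms already established: $\LcostsuU = \mae$ by Theorem~\ref{thm:LcostsU}, and $\LcostruU = \pi_0\pi_1(1-2\auc) + 1/2$ by Theorem~\ref{th::LcostiUUU} (the latter holding for every model, independently of how its scores are distributed). Combining these with $\LcostsuU = \LcostruU$ yields $\mae = \LcostsuU = \LcostruU = \pi_0\pi_1(1-2\auc) + 1/2$, which is the claim.

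The only delicate point --- and the nearest thing to a real obstacle --- is the asymptotic nature of Lemma~\ref{lmm:evenly}: one should confirm that in the limit $R$ is invertible with no plateaus, which is exactly the regime in which $\Tcostrfi$ is well defined, and that the score scale genuinely converges to $[0,1]$ so that the \scoreuniform distribution is $U_{0,1}$; both are immediate from $s'_i = (i-1)/(n-1)$. As an alternative that avoids the threshold-coincidence step altogether, one could verify $\mae = \pi_0\pi_1(1-2\auc) + 1/2$ directly from the definitions of $\mae$ and $\auc$ using $R(t) = t$, but the route through Theorems~\ref{thm:LcostsU} and~\ref{th::LcostiUUU} is shorter and reuses computations already carried out.
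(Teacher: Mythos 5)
Your proof is correct and is essentially the argument the paper intends: the paper leaves this theorem's proof implicit (``Similarly, we get the same results\ldots''), expecting the reader to mirror the proof of Theorem~\ref{thm:evenly}, i.e., use Lemma~\ref{lmm:evenly} to conclude that $R(t)=t$ makes the \scoreuniform and \rateuniform methods draw thresholds from the same uniform distribution on $[0,1]$, and then read off $\LcostsuU=\mae$ from Theorem~\ref{thm:LcostsU} and $\LcostruU=\pi_0\pi_1(1-2\auc)+1/2$ from Theorem~\ref{th::LcostiUUU}. Your added care about invertibility of $R$ and the score scale converging to $[0,1]$ is a reasonable tightening of the same route, not a departure from it.
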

with similar results for skews.
This also connects \mae with \auc and clarifies when they are linearly related.

\subsection{Perfectly-calibrated scores. Relating \bs, \cl and \rl}\label{sec:perfectcal}

In this section we will work with a different condition on the scores. We will study what interesting connections can be established if we assume the scores to be perfectly calibrated. 

The informal definition of perfect calibration usually says that a model is calibrated when the estimated probabilities are close to the true probabilities. From this informal definition, we would derive that a model is perfectly calibrated if the estimated probability given by the scores (i.e., $\hat{p}(1|x)$) equals the true probability.
However, if this definition is applied to single instances, it implies not only perfect calibration but a perfect model. In order to give a more meaningful definition, the notion of calibration is then usually defined in terms of groups or bins of examples, as we did, for instance, with the Brier score decomposition.
So, we need to apply this correspondence between estimated and true (actual) probabilities over bins.
We say a bin partition is invariant on the scores if for any two examples with the same score they are in the same bin. In other words, two equal scores cannot be in different bins (equivalence classes cannot be broken). From here,
\begin{definition}[Perfectly-calibrated for empirical distribution \models]
We say that a \model is {\em perfectly calibrated} if for any invariant bin partition ${{\cal{P}}}$ we have that $y_{b_j} = s_{b_j}$ for all its bins: i.e., the average actual probability equals the average estimated probability, thus making $ \cl^{{\cal{P}}}=0$. Note that it is not sufficient to have $\cl=0$ for one partition, but for all the invariant partitions.
\end{definition}
Notice that the bins which are generated by a ROC curve are the minimal invariant partition on the scores (i.e., the quotient set). So, we can give an alternative definition of perfectly calibrated \model: a \model is perfectly calibrated if and only if $ \cl^{ROC}=0$.
For the continuous case, the partition is irrelevant and the definition is as follows: 


\begin{definition}[Perfectly-calibrated for continuous distribution \models]
We say a continuous \model is {\em perfectly calibrated} if $\cl=0$.
\end{definition}


\begin{lemma}\label{CLzero}
For a perfectly calibrated classifier $m$:
\[ \frac{1-s}{s} = \frac{ f_0(s)}{f_1(s)} \frac{\pi_0}{\pi_1} \]
and $c(s) = s$ as defined in Equation (\ref{eq:cT}), which means that $m$ is convex.
\end{lemma}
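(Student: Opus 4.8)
The plan is to exploit the \emph{pointwise} nature of the continuous Brier score decomposition established in Theorem~\ref{eqBSContdecom}. Recall that there
\[
\cl = \int_{0}^{1} \frac{\left( s(\pi_0 f_0(s) + \pi_1 f_1(s)) - \pi_1 f_1(s) \right)^2}{\pi_0 f_0(s) + \pi_1 f_1(s)}\, ds,
\]
and the integrand is a square divided by a non-negative quantity, hence non-negative wherever it is defined. Since a perfectly calibrated continuous model has $\cl = 0$ by definition, this non-negative integrand must vanish at (almost) every score $s$, i.e. at every $s$ with $\pi_0 f_0(s) + \pi_1 f_1(s) > 0$ (scores carrying no density contribute nothing and are irrelevant). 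First I would therefore record the identity
\[
s\left(\pi_0 f_0(s) + \pi_1 f_1(s)\right) = \pi_1 f_1(s).
\]

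From this identity both claims follow by elementary algebra. Expanding and collecting terms gives $s\,\pi_0 f_0(s) = (1-s)\,\pi_1 f_1(s)$, which rearranges to
\[
\frac{1-s}{s} = \frac{f_0(s)}{f_1(s)}\frac{\pi_0}{\pi_1},
\]
the first assertion. Dividing the same identity instead by $\pi_0 f_0(s) + \pi_1 f_1(s)$ yields $s = \frac{\pi_1 f_1(s)}{\pi_0 f_0(s) + \pi_1 f_1(s)} = c(s)$, using the definition of $c(\cdot)$ in Equation~(\ref{eq:cT}). Convexity is then immediate from Definition~\ref{def:Convex}: a model is convex precisely when $c(T)$ is non-decreasing in the threshold $T$, and here $c(T) = T$ is the identity function, which is increasing; hence $m$ is convex.

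The only step requiring a little care — and the nearest thing to an obstacle — is passing from ``$\cl = 0$'' to ``the integrand vanishes pointwise'': one must observe that the integrand is non-negative and invoke that a non-negative integrable function with zero integral is zero almost everywhere, and separately note that the set where the denominator $\pi_0 f_0(s) + \pi_1 f_1(s)$ vanishes carries no probability mass, so the (there vacuous) constraint loses nothing. Everything after that is a single line of algebra, so I expect the lemma to be short once the pointwise reduction is stated cleanly, mirroring the remark following Theorem~\ref{eqBSContdecom} that the decomposition holds pointwise for every score $s$.
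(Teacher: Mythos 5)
Your proposal is correct and follows essentially the same route as the paper: both start from the continuous decomposition of Theorem~\ref{eqBSContdecom}, use $\cl = 0$ together with the non-negativity of the integrand (the paper writes it as $(\pi_0 f_0(s) + \pi_1 f_1(s))\left(s - c(s)\right)^2$) to conclude pointwise that $s = c(s)$, and then read off the ratio identity and convexity from Definition~\ref{def:Convex}. Your explicit remark that the vanishing holds only almost everywhere where the mixture density is positive is a slightly more careful phrasing of the same step.
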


\begin{proof}
Consider the decomposition of  Theorem \ref{eqBSContdecom}.
%
%
%
%
A perfectly calibrated classifier must have  $\cl=0$ for every single continuous interval.
That means that:
\begin{eqnarray*}
\cl & = &  \int_{0}^{1} \frac{	 \left( s(\pi_0 f_0(s) + \pi_1 f_1(s)) - \pi_1 f_1(s) \right)^2	} {\pi_0 f_0(s) + \pi_1 f_1(s)}     ds \\
& = & \int_{0}^{1}  (\pi_0 f_0(s) + \pi_1 f_1(s))\left(s- \frac{\pi_1 f_1(s)} {\pi_0 f_0(s) + \pi_1 f_1(s)}  \right)^2  ds = 0
\end{eqnarray*}
which means
%
%
%
%
%
$s = \frac{\pi_1 f_1(s)} {\pi_0 f_0(s) + \pi_1 f_1(s)}$
which is exactly $c(s)$ given by Equation (\ref{eq:cT}) and the result and convexity follow (by definition \ref{def:Convex}). 
\end{proof}

Now that we have two proper and operational definitions of perfect calibration, we define a calibration transformation as follows. 
\begin{definition}
$\Cal$ is a monotonic function over the scores which converts any \model $m$ into another calibrated \model $m^*$ such that $\cl=0$ and $\rl$ is not modified. 
\end{definition}
$\Cal$ always produces a convex model, so $\Conv(\Cal(m))= \Cal(m)$, but a convex model is not always perfectly calibrated (e.g., a binormal model with same variances is always convex but it can be uncalibrated), so $\Cal(\Conv(m) \neq \Conv(m)$. This is summarised in Table \ref{tab:calconv}.
If the model is strictly convex, then $\Cal$ is strictly monotonic.
An instance of the function $\Cal$ is the transformation $T \mapsto s=c(T)$ where $c(T) = \frac{\pi_1 f_1(T)}{\pi_0 f_0(T) + \pi_1 f_1(T)}$ as given by Equation (\ref{eq:cT}). This transformation is shown to keep \rl unchanged in the appendix and makes $\cl=0$. 

The previous function is defined for continuous score distributions. The corresponding function for empirical distributions is known as the Pool Adjacent Algorithm (PAV)  \cite{PAV}. Following \cite{pav-rocch}, the $PAV$ function converts any \model $m$ into another calibrated \model $m^*$ such that the following property $s_{b_j} = y_{b_j}$ holds for every segment in its convex hull.

\begin{table}[htdp]
	\centering
	  {\small
		\begin{tabular}{cccc}
		  \hline\hline
			        & Evenly-spaced & Convexification & Perfect Calibration     \\\hline\hline
			Continuous distributions & \Even & \Conv & \Cal    \\\hline      		
			Empirical distributions & \Evend & ROCCH & PAV    \\\hline
      		\end{tabular}
		}
	\caption{Transformations on scores. Perfect calibration implies a convex model but not vice versa.}
	\label{tab:calconv}
\end{table}



It has been shown by \cite{pav-rocch} that isotonic-based calibration \cite{Robertson98}  is equivalent to the PAV algorithm, and closely related to ROCCH, since, for every $m$ and dataset, we have:
\begin{eqnarray}
\bs(PAV(m)) & = &  \cl^{ROC}(PAV(m)) +  \rl^{ROC}(PAV(m)) =  \cl^{ROCCH}(PAV(m)) +  \rl^{ROCCH}(PAV(m)) \nonumber \\ 
           & = &  \rl^{ROC}(PAV(m)) =  \rl^{ROCCH}(PAV(m)) \label{eq:PAV}
\end{eqnarray}

It is also insightful to see that isotonic regression (calibration) is the monotonic function defined as ${\arg\min}_f \sum{(y_i - f(s_i))^2}$, i.e., the monotonic function over the scores which minimises the Brier score. This leads to the same function if we use any other proper scoring function (such as logloss).

The similar expression for the continuous case is
\begin{eqnarray}
\bs(\Cal(m)) & = &  \cl(\Cal(m)) +  \rl(\Cal(m)) =  \rl(\Cal(m)) \label{eq:Conv2}
\end{eqnarray}

Now we analyse what happens with perfectly calibrated \models\ 
for the \scoredriven \thresholdchoice and the \scoreuniform \thresholdchoicemethods. This will help us understand the similarities and differences between the \thresholdchoices and their relation with the \optimal method. Along the way, we will obtain some straightforward, but interesting, results. We start with a basic result:

\begin{theorem}\label{thm:perfect}
If a \model is perfectly calibrated then we have:
\begin{eqnarray}\label{eq:perfect}
\pi_0 \bar{s}_0 & = & \pi_1 (1-\bar{s}_1)
\end{eqnarray}
or equivalently,
\begin{eqnarray}\label{eq:perfect2}
\pi_0 \mae_0 & = & \pi_1 \mae_1
\end{eqnarray}

\end{theorem}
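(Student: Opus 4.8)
The plan is to obtain this directly from Lemma~\ref{CLzero}. First I would note that the two displayed equations in the statement are definitionally equivalent: by Definition~\ref{def:MAE} we have $\mae_0 = \int_0^1 s f_0(s)\,ds = \bar{s}_0$ and $\mae_1 = \int_0^1 (1-s) f_1(s)\,ds = 1-\bar{s}_1$, so $\pi_0\mae_0 = \pi_1\mae_1$ is literally a rewriting of $\pi_0 \bar{s}_0 = \pi_1(1-\bar{s}_1)$. Hence it suffices to prove the first identity.

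Next I would invoke Lemma~\ref{CLzero}, which gives that a perfectly calibrated model satisfies $c(s) = s$ pointwise, with $c(s) = \frac{\pi_1 f_1(s)}{\pi_0 f_0(s) + \pi_1 f_1(s)}$ as in Equation~(\ref{eq:cT}). Writing $p(s) = \pi_0 f_0(s) + \pi_1 f_1(s)$ for the overall score density, the definition of $c$ rearranges to $\pi_1 f_1(s) = c(s)\,p(s) = s\,p(s)$, and correspondingly $\pi_0 f_0(s) = p(s) - \pi_1 f_1(s) = (1-s)\,p(s)$. Then I would simply substitute these two identities into the integrals defining the two sides: $\pi_0\bar{s}_0 = \int_0^1 s\,\pi_0 f_0(s)\,ds = \int_0^1 s(1-s)\,p(s)\,ds$, while $\pi_1(1-\bar{s}_1) = \int_0^1 (1-s)\,\pi_1 f_1(s)\,ds = \int_0^1 (1-s)\,s\,p(s)\,ds$. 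The two right-hand sides are identical, which yields the claim.

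There is essentially no obstacle here; the entire argument is a one-line consequence of Lemma~\ref{CLzero}, and the only point worth stating explicitly is the definitional equivalence of the two formulations. If one wished to avoid citing Lemma~\ref{CLzero}, an alternative would be to start from the continuous calibration-loss expression in Theorem~\ref{eqBSContdecom}: perfect calibration forces that integrand (and, as noted after the proof of Theorem~\ref{eqBSContdecom}, its pointwise value) to be zero, which immediately gives $s\,p(s) = \pi_1 f_1(s)$, after which the same integration closes the argument. This route, however, merely re-derives Lemma~\ref{CLzero} along the way, so the cleanest presentation is to quote the lemma directly.
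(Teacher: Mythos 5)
Your proposal is correct, but it takes a different route from the paper's own proof. The paper argues much more economically from the bin-based definition of perfect calibration: taking the trivial invariant partition consisting of a single bin containing all examples, perfect calibration forces the average actual class, $\pi_1$, to equal the average score, $\pi_0\bar{s}_0 + \pi_1\bar{s}_1$, and rearranging gives $\pi_0\bar{s}_0 = \pi_1(1-\bar{s}_1)$ in one line, with no densities, no Lemma~\ref{CLzero}, and no integration. That argument also works directly for empirical distributions and immediately yields the interpretation the paper highlights afterwards, namely that the mean score of a calibrated model must equal $\pi_1$. Your route instead goes through the continuous definition ($\cl=0$) and the pointwise identity $c(s)=s$ of Lemma~\ref{CLzero}, rewriting $\pi_1 f_1(s) = s\,p(s)$ and $\pi_0 f_0(s) = (1-s)\,p(s)$ with $p(s)=\pi_0 f_0(s)+\pi_1 f_1(s)$ and then integrating; this is valid and in fact proves something slightly stronger, since the two integrands $s\,\pi_0 f_0(s)$ and $(1-s)\,\pi_1 f_1(s)$ are shown to coincide pointwise (almost everywhere), not merely in aggregate. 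The trade-off is that your argument is tied to the continuous density framework and leans on a heavier lemma, whereas the paper's single-bin argument is more elementary and more general in applying verbatim to the empirical setting; your opening observation that the two displayed equations are definitionally equivalent via $\mae_0=\bar{s}_0$ and $\mae_1=1-\bar{s}_1$ is correct and worth keeping in either presentation.
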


\begin{proof}
For perfectly calibrated \models, we have that for every bin in an invariant partition on the scores we have that $y_{b_j} = s_{b_j}$.
Just taking a partition consisting of one single bin (which is an invariant partition), we have that this is the same as saying that $\pi_1 = \pi_1 \bar{s}_1 + \pi_0 \bar{s}_0$. This leads to $\pi_1 (1-\bar{s}_1) = \pi_0 \bar{s}_0$.
\end{proof}

This is an interesting equation in its own right. It gives a necessary condition for calibration: the extent to which the average score over all examples (which is the weighted mean of per-class averages $\pi_0\m{s}_0+\pi_1\m{s}_1$) deviates from $\pi_1$. 


We now give a first result which connects two \performancemetrics:

\begin{theorem}\label{thm:BS}
If a \model is perfectly calibrated then we have:
\begin{eqnarray*}
\bs = \pi_0 \bar{s}_0 = \pi_1 (1-\bar{s}_1) = \mae/2
\end{eqnarray*}

\end{theorem}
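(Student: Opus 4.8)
The plan is to combine Theorem~\ref{thm:perfect} with the continuous Brier-score decomposition of Theorem~\ref{eqBSContdecom} and the characterisation of perfect calibration in Lemma~\ref{CLzero}.

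First I would dispose of the two equalities that do not mention $\bs$. By the definition of \mae, $\mae = \pi_0\mae_0 + \pi_1\mae_1 = \pi_0\m{s}_0 + \pi_1(1-\m{s}_1)$. Theorem~\ref{thm:perfect} states that for a perfectly calibrated model $\pi_0\m{s}_0 = \pi_1(1-\m{s}_1)$, so each of these two summands is exactly half of \mae; this already yields $\pi_0\m{s}_0 = \pi_1(1-\m{s}_1) = \mae/2$.

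The core step is to show $\bs = \pi_0\m{s}_0$. I would invoke Theorem~\ref{eqBSContdecom}, which gives the exact decomposition $\bs = \cl + \rl$; perfect calibration means $\cl = 0$ by definition, hence $\bs = \rl = \int_0^1 \frac{\pi_1 f_1(s)\,\pi_0 f_0(s)}{\pi_0 f_0(s) + \pi_1 f_1(s)}\,ds$. By Lemma~\ref{CLzero} a perfectly calibrated model satisfies $c(s) = s$, i.e.\ $\frac{\pi_1 f_1(s)}{\pi_0 f_0(s) + \pi_1 f_1(s)} = s$. Pulling this factor out of the refinement-loss integrand in the two symmetric ways gives $\rl = \int_0^1 s\,\pi_0 f_0(s)\,ds = \pi_0\m{s}_0$ and $\rl = \int_0^1 (1-s)\,\pi_1 f_1(s)\,ds = \pi_1(1-\m{s}_1)$, where I use $\m{s}_0 = \int_0^1 s f_0(s)\,ds$ and $\m{s}_1 = \int_0^1 s f_1(s)\,ds$. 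Chaining this with the first paragraph delivers $\bs = \pi_0\m{s}_0 = \pi_1(1-\m{s}_1) = \mae/2$.

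Alternatively, avoiding the decomposition theorem entirely, one can argue directly: the identity $c(s) = s$ from Lemma~\ref{CLzero} is equivalent to the pointwise relation $s\,\pi_0 f_0(s) = (1-s)\,\pi_1 f_1(s)$, so in $\bs = \int_0^1 \{s^2\pi_0 f_0(s) + (1-s)^2\pi_1 f_1(s)\}\,ds$ one replaces $(1-s)^2\pi_1 f_1(s)$ by $s(1-s)\,\pi_0 f_0(s)$, whereupon the integrand collapses to $s\,\pi_0 f_0(s)$. I do not anticipate a genuine obstacle; the only point requiring care is that the calibration identity $c(s)=s$ holds \emph{pointwise} (not merely in an averaged sense) -- which is precisely what Lemma~\ref{CLzero} supplies -- so that the substitutions under the integral sign are valid.
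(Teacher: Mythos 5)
Your proof is correct, and your main route is essentially the paper's route: both pass through the continuous decomposition of Theorem~\ref{eqBSContdecom}, use $\cl=0$ to get $\bs=\rl$, then apply the calibration identity from Lemma~\ref{CLzero} inside the refinement-loss integral, with Theorem~\ref{thm:perfect} supplying the remaining equalities. The only difference is that the paper carries the calibration condition in the density-ratio form $f_0(s)/f_1(s)=\frac{1-s}{s}\frac{\pi_1}{\pi_0}$ and spends several algebraic lines extracting $\pi_1(1-\m{s}_1)$, while you substitute $c(s)=s$ and $1-c(s)=1-s$ directly, which is shorter and gives both $\pi_0\m{s}_0$ and $\pi_1(1-\m{s}_1)$ symmetrically. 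Your alternative argument, on the other hand, is a genuinely different and more elementary route: it sidesteps Theorem~\ref{eqBSContdecom} entirely by rewriting $(1-s)^2\pi_1 f_1(s)=s(1-s)\pi_0 f_0(s)$ pointwise in the \bs{} integrand so that it collapses to $\int_0^1 s\,\pi_0 f_0(s)\,ds$; this buys independence from the decomposition machinery (useful if one wants to present this result before the decomposition theorem), at the cost of losing the conceptual reading $\bs=\rl$ that the paper exploits in the surrounding discussion of the \optimal{} threshold choice. Both arguments are valid; the caveat you flag about needing the pointwise identity $c(s)=s$ is the right one, and Lemma~\ref{CLzero} does supply it.
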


\begin{proof}
We use the continuous decomposition (Theorem \ref{eqBSContdecom}):
\[ \bs  =  \cl + \rl \]
Since it is perfectly calibrated, $\cl=0$.
Then we have:
\begin{eqnarray}
\bs & = &  \rl = \int_{0}^{1}  \frac{\pi_1 f_1(s) \pi_0 f_0(s)} {\pi_0 f_0(s) + \pi_1 f_1(s)}    ds = 
 \int_{0}^{1} \left({\pi_1 f_1(s)}  \right)  \left(1- \frac{\pi_1 f_1(s)} {\pi_0 f_0(s) + \pi_1 f_1(s)}  \right) ds \nonumber \\
& = & \int_{0}^{1} \left({\pi_1 f_1(s)}  - \frac{[\pi_1 f_1(s)]^2} {\pi_0 f_0(s) + \pi_1 f_1(s)}  \right) ds =   \int_{0}^{1} {\pi_1 f_1(s)} ds - \int_{0}^{1}  \frac{[\pi_1 f_1(s)]^2} {\pi_0 f_0(s) + \pi_1 f_1(s)}   ds \nonumber \\
& = &   \pi_1 - \int_{0}^{1}  \frac{\pi_1 f_1(s)} {\frac{\pi_0 f_0(s)}{\pi_1 f_1(s)} + 1}   ds \nonumber 
\end{eqnarray}
Since it is perfectly calibrated, we have:
\[ \frac{f_0(s)}{f_1(s)} = \frac{1-s}{s}\frac{\pi_1}{\pi_0} \]
So:
\begin{eqnarray}
\bs & = &  \pi_1 - \int_{0}^{1}  \frac{\pi_1 f_1(s)} {\frac{\pi_0}{\pi_1}\frac{1-s}{s}\frac{\pi_1}{\pi_0} + 1}   ds =  \pi_1 - \int_{0}^{1}  \frac{s\pi_1 f_1(s)} {(1-s) + s}    ds \nonumber \\
 & = &  \pi_1 - \pi_1 \int_{0}^{1}  s f_1(s)    ds =  \pi_1 (1 - \int_{0}^{1}  s f_1(s))    ds \nonumber 
  = \pi_1 (1 - \bar{s}_1)   \nonumber
\end{eqnarray}
%
\end{proof}

We will now use the expressions for expected loss to analyse where this result comes from exactly. 
In the following result, we see that for a calibrated \model the \optimal threshold $T$ for a given cost proportion $c$ is $T=c$, which is exactly the \scoredriven \thresholdchoicemethod. In other words:

\begin{theorem}\label{thresholdsequivalence}
For a perfectly calibrated \model: 
$\forall c : \Tcosto(c) = \Tcostsd(c) = c$
\end{theorem}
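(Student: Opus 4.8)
The plan is to characterise $\Tcosto(c)$ via a first-order stationarity condition on $\Qcost(\cdot;c)$ and then substitute the perfect-calibration identity from Lemma~\ref{CLzero}. Differentiating the objective in Equation~(\ref{eqTcosto}) with respect to $t$, and using $F_0'=f_0$ and $F_1'=f_1$, gives
\[
\frac{d}{dt}\Qcost(t;c) = 2\bigl(-c\pi_0 f_0(t) + (1-c)\pi_1 f_1(t)\bigr).
\]
A stationary point therefore satisfies $c\pi_0 f_0(t) = (1-c)\pi_1 f_1(t)$, i.e. $\frac{c}{1-c} = \frac{\pi_1 f_1(t)}{\pi_0 f_0(t)} = \frac{c(t)}{1-c(t)}$ with $c(t)$ as in Equation~(\ref{eq:cT}). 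Since $x\mapsto x/(1-x)$ is strictly increasing on $[0,1)$, this is equivalent to $c(t)=c$. (This is just the familiar statement that the cost-minimising operating point for cost proportion $c$ is the point where the ROC curve has the matching slope.)

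The second step invokes calibration. By Lemma~\ref{CLzero}, a perfectly calibrated model has $c(t)=t$ for every $t$; hence the stationarity condition $c(t)=c$ reduces to $t=c$, which is precisely the value returned by $\Tcostsd$ in Definition~\ref{def:Tcostp}.

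The third step is to confirm that $t=c$ minimises $\Qcost(\cdot;c)$ globally rather than being merely stationary. Here I would use the fact, also from Lemma~\ref{CLzero}, that a perfectly calibrated model is convex, so $c(t)$ --- equal to $t$ in the present case --- is non-decreasing. Reading off the sign of the derivative above, $\frac{d}{dt}\Qcost(t;c)<0$ iff $\frac{c(t)}{1-c(t)} < \frac{c}{1-c}$ iff $c(t)<c$ iff $t<c$, and symmetrically $\frac{d}{dt}\Qcost(t;c)>0$ iff $t>c$. Thus $\Qcost(\cdot;c)$ is decreasing before $c$ and increasing after $c$, so $c$ is its unique global minimiser and $\Tcosto(c)=c=\Tcostsd(c)$.

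I expect the only real friction to be the degenerate cases: where $f_0$ or $f_1$ vanishes on an interval (so $c(t)$ has plateaus or jumps and $\argmin$ returns an interval), and where $c$ sits at or outside an endpoint of the range of $c(\cdot)$. In those situations the monotonicity of $c(t)=t$ still forces any minimiser to equal $c$ (possibly as an endpoint of an optimal plateau), so the conclusion survives; a careful write-up should simply state the same regularity and invertibility hypotheses already used for the rate-based methods in Section~\ref{sec:rates} and note that both $\Tcosto$ and $\Tcostsd$ are understood modulo this plateau ambiguity.
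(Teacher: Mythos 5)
Your proof is correct and takes essentially the same route as the paper: the first-order condition you obtain by differentiating $\Qcost(t;c)$ is exactly the slope-matching condition the paper states via cost isometrics, and both arguments then conclude $T=c$ by substituting the calibration identity $c(t)=t$ from Lemma~\ref{CLzero}. Your explicit check of global minimality via the convexity guaranteed by Lemma~\ref{CLzero}, and the remark about plateau/degenerate cases, add rigour that the paper leaves implicit in its appeal to standard ROC analysis.
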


\begin{proof}
By Lemma \ref{CLzero} we have 
\begin{equation}
\frac{f_0(s)}{f_1(s)} = \frac{1-s}{s}\frac{\pi_1}{\pi_0} \label{eq:slope2}
\end{equation}
If we know $c$, we want to find the score $s=T$ where the slope is equal to the slope of a cost isometric \cite{Fla03}. 
The slope of 
a cost isometric is 
\begin{equation*}
\frac{c_1\pi_1}{c_0\pi_0} = \frac{1-c}{c}\frac{\pi_1}{\pi_0} \label{eq:slope1}
\end{equation*}
Setting the two slopes equal implies $T=c$. 
\end{proof}

%
%

And now we can express and relate many of the expressions for the expected loss seen so far. Starting with the expected loss for the \optimal \thresholdchoicemethod, i.e., $\Lcosto$ (which uses $\Tcosto$), we have, from Theorem \ref{thresholdsequivalence}, that $\Tcosto(c) = \Tcostsd(c) = c$ when the \model is perfectly calibrated. Consequently, we have the same as Equation (\ref{eqLcostp}), and since we know that $\bs= \pi_0 \bar{s}_0$ for perfectly calibrated \models, we have:
\begin{eqnarray*}
{\LcostoU} = {\bs} = \pi_0 \bar{s}_0 = \mae/2
\end{eqnarray*}

The following theorem
summarises all the previous results. 

\begin{theorem}\label{theo:perfcal}
For perfectly calibrated \models:
\begin{equation}
{\LcostsdU}= {\LcostoU} = \rl = \frac{{\LcostsuU}}{2} = \frac{\mae}{2} ={\bs} = {\pi_0 \bar{s}_0} = {\pi_1 (1 - \m{s}_1)} 
\end{equation}

\end{theorem}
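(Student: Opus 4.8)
The plan is to obtain every equality in the statement by chaining results that have already been proved, so that essentially no new computation is required; the only care needed is in matching hypotheses. First I would invoke the two identities that hold for \emph{any} probabilistic model: by Theorem~\ref{thm:LcostpUequalsBS2} we have $\LcostsdU = \bs$, and by Theorem~\ref{thm:LcostsU} we have $\LcostsuU = \mae$. These hold whether or not the model is calibrated, so in particular they hold here, and they already pin down three of the quantities in the chain; it remains to relate $\bs$, $\mae$, $\rl$, $\LcostoU$, $\pi_0\bar{s}_0$ and $\pi_1(1-\bar{s}_1)$ under the calibration assumption.

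Next I would bring in Theorem~\ref{thm:BS}, which states that for a perfectly calibrated model $\bs = \pi_0\bar{s}_0 = \pi_1(1-\bar{s}_1) = \mae/2$. Combined with the previous step this gives $\LcostsdU = \bs = \pi_0\bar{s}_0 = \pi_1(1-\bar{s}_1)$ and $\bs = \mae/2 = \LcostsuU/2$, covering all the listed quantities except $\LcostoU$ and $\rl$. For $\rl$: a perfectly calibrated continuous model has $\cl = 0$ by definition, so the exact continuous Brier decomposition of Theorem~\ref{eqBSContdecom} gives $\bs = \cl + \rl = \rl$. For $\LcostoU$: Lemma~\ref{CLzero} shows that perfect calibration implies convexity, so Theorem~\ref{thm:RL} applies and yields $\LcostoU = \rl$; as an independent cross-check, Theorem~\ref{thresholdsequivalence} shows $\Tcosto(c) = \Tcostsd(c) = c$ pointwise in $c$, and substituting equal thresholds into the expected-loss integral~(\ref{eqLcost}) gives $\LcostoU = \LcostsdU$ directly. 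Either route closes the loop, so $\LcostoU = \rl = \bs$, and transitivity then assembles the full chain $\LcostsdU = \LcostoU = \rl = \LcostsuU/2 = \mae/2 = \bs = \pi_0\bar{s}_0 = \pi_1(1-\bar{s}_1)$.

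There is no genuine obstacle here — the theorem is a synthesis statement — but the step I would be most careful about is checking that each borrowed result is applied under exactly its stated hypotheses: Theorem~\ref{thm:RL} requires the model to be convex, which is supplied by Lemma~\ref{CLzero}; Theorem~\ref{thm:BS} and Theorem~\ref{eqBSContdecom} are the continuous-distribution versions, which match the continuous definition of perfect calibration in force here; and the pointwise threshold identity of Theorem~\ref{thresholdsequivalence} must be integrated against the uniform density on $[0,1]$ before it delivers the equality of the \emph{expected} losses. Once these checks are made, no further work is needed.
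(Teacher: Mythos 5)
Your proposal is correct and follows essentially the same route as the paper: the paper likewise assembles the chain from the general identities $\LcostsdU=\bs$ and $\LcostsuU=\mae$, the calibrated-model facts of Theorem~\ref{thm:BS} (and Theorem~\ref{thm:perfect}), the threshold identity $\Tcosto(c)=\Tcostsd(c)=c$ of Theorem~\ref{thresholdsequivalence}, and Theorem~\ref{thm:RL} with convexity supplied by Lemma~\ref{CLzero}. Your extra remark that $\rl=\bs$ follows directly from $\cl=0$ in the continuous decomposition is the same observation the paper makes within the proof of Theorem~\ref{thm:BS}, so no substantive difference in approach.
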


\begin{proof}
Since $\LcostsdU= \bs$ it is clear that ${\LcostsdU}= {\pi_0 \bar{s}_0}$, as seen above for ${\LcostoU}$ as well.
Additionally, from Theorem \ref{thm:LcostsU}, we have that ${\LcostsuU} = {{\pi_0 \m{s}_0 + \pi_1 (1 - \m{s}_1)}}$, which reduces to $2{\LcostsuU} = 2\bs = 2\pi_0 \m{s}_0$.
We also use the result of Theorem \ref{thm:RL} which states that, in general (not just for perfectly calibrated models), ${\LcostoU(m)} = {\rl(\Conv(m))}$.
\end{proof}
%


All this gives an interpretation of the \optimal \thresholdchoice method as a method which calculates expected loss by assuming perfect calibration.
Note that this is clearly seen by the relation
${\LcostsdU}= {\LcostoU} = \frac{{\LcostsuU}}{2}$, since the loss drops to the half if we use scores to adjust to the operating condition. In this situation, we get the best possible result.

\subsection{Choosing a \thresholdchoicemethod}

It is enlightening to see that many of the most popular classification \performancemetrics are just expected losses by changing the \thresholdchoicemethod and the use of cost proportions or skews.
However, it is even more revealing to see how (and under which conditions) these \performancemetrics can be related (in some cases with inequalities and in some other cases with equalities).
The notion of score transformation is the key idea for these connections, and is more important that it might seem at first sight. Some \thresholdchoicemethods can be seen as a score transformation followed by the \scoredriven \thresholdchoicemethod. Even the fixed \thresholdchoicemethod can be seen as a crisp transformation where scores are set to $1$ if $s_i > t$ and 0 otherwise. 
Another interesting point of view is to see the values of extreme models, such as a model with perfect ranking ($\auc=1$, $ \rl^{ROCCH}=0$) and a random model ($\auc=0.5$, $ \rl^{ROCCH}=0.25$ when $\pi_0=\pi_1$).
Figure \ref{fig:comparison} summarises all the relations found so far and these extreme cases.

%
\begin{figure*}
\centering
%
\begin{fmpage}{0.8\linewidth}
\centering
General relations:
$$\LcostruU = \pi_0\pi_1(1-2\auc)+ \frac{1}{2}  > \LcostrdU = \pi_0\pi_1(1-2\auc)+ \frac{1}{3} \geq \LcostoU =  \rl_{\Conv} $$
$$\LcostsuU = \mae \geq \LcostsdU = \bs \geq \LcostoU =  \rl_{\Conv}$$

\end{fmpage}
\begin{fmpage}{0.8\linewidth}
\centering

If scores are evenly-spaced:
$$\LcostruU = \pi_0\pi_1(1-2\auc)+ \frac{1}{2} = \LcostsuU = \mae = \pi_0 {\bar{s}_0} + {\pi_1 (1 - \m{s}_1)} $$
$$\LcostrdU = \pi_0\pi_1(1-2\auc)+ \frac{1}{3} = \LcostsdU = \bs $$

\end{fmpage}
\begin{fmpage}{0.8\linewidth}
\centering

If scores are perfectly calibrated:
$$ {\LcostsdU}= {\LcostoU} = \rl = \frac{{\LcostsuU}}{2} = \frac{\mae}{2} ={\bs} = {\pi_0 \bar{s}_0} = {\pi_1 (1 - \m{s}_1)} $$
\end{fmpage}
\begin{fmpage}{0.8\linewidth}
\centering

If the model has perfect ranking:
$$ {\LcostruU}= \frac{1}{4} > {\LcostrdU} = \frac{1}{12} > {\LcostoU} = 0 $$
\end{fmpage}
\begin{fmpage}{0.8\linewidth}
\centering

If the model is random (and $\pi_0 = \pi_1$):
$$ {\LcostsuU}= {\LcostsdU}= {\LcostruU}= \frac{1}{2} > {\LcostrdU} = \frac{1}{3} > {\LcostoU} =  \frac{1}{4} $$
\end{fmpage}

\caption{Comparison of losses and \performancemetrics, in general and under several score conditions.}
\label{fig:comparison}
\end{figure*}

The first apparent observation is that $\LcostoU$ seems the best loss, since it derives from the \optimal \thresholdchoicemethod. We already argued in Section \ref{sec:optimal} that this is unrealistic. The result given by Theorem \ref{thm:RL} is a clear indication of this, since this makes expected loss equal to ${ \rl_{\Conv}}$.
Hence, this \thresholdchoicemethod assumes that the calibration which is performed with the convex hull over the training (or a validation dataset) is going to be perfect and hold for the test set.
Figure \ref{fig:comparison} also gives the impression that $\LcostsuU$ and $\LcostruU$ are so bad that their corresponding \thresholdchoicemethods and metrics are useless.
In order to refute this simplistic view, we must realise (again) that not every \thresholdchoicemethod can be applied in every situation. Some require more information or more assumptions than others. Table \ref{tab:times} completes Table \ref{tab:thresholdchoicemethods} to illustrate the point.
If we know the deployment operating condition at evaluation time, then we can fix the threshold and get the expected loss.
If we do not know this information at evaluation time, but we expect to be able to have it and use it at deployment time, then the \scoredriven, \ratedriven and \optimal \thresholdchoicemethods seem the appropriate ones. Finally, if no information about the operating condition is going to be available at any time then the \scoreuniform and the \rateuniform are the (only reasonable) option.

\begin{table}[htdp]
	\centering
		\begin{tabular}{lccc}	 
			\Thresholdchoicemethod               & Fixed & Driven by o.c. & Chosen uniformly   \\\hline
			
			Using scores                          & \scorefixed ($\Tsf$)                                       & \scoredriven ($\Tsd$)     & \scoreuniform ($\Tsu$)     \\
      Using \rates                          & \ratefixed ($\Trf$)                                       & \ratedriven ($\Trd$)       & \rateuniform ($\Tru$)      \\		
      Using \optimal thresholds             &                                                          & \optimal ($\To$)       &      \\
			\hline
      Required information                  & o.c. at evaluation time                                  & o.c. at deployment time       &  no information    \\
			\hline
		\end{tabular}
	\caption{Information which is required (and when) for the seven \thresholdchoicemethods so that they become reasonable. Operating condition is denoted by o.c. }
	\label{tab:times} 
\end{table}

From the cases shown in \ref{tab:times}, the methods driven by the operating condition require further discussion.
The relations shown in Figure \ref{fig:comparison} illustrate that, in addition to the \optimal \thresholdchoicemethod, the other two methods that seem more competitive are the \scoredriven and the \ratedriven. 
One can argue that the \ratedriven \thresholdchoice has an expected loss which is always greater than $1/12$ (if $\auc=1$, we get $-1/4 + 1/3$), while the others can be 0.
But things are not so clear-cut. 

\begin{itemize}
\item The \scoredriven \thresholdchoicemethod considers that the scores are estimated probabilities and that they are reliable, in the tradition of proper scoring rules. So it just uses these probabilities to set the thresholds.
\item The \ratedriven \thresholdchoicemethod completely ignores the scores and only considers their order. It assumes that the ranking is reliable while the scores are not accurate probabilities. It derives the thresholds using the predictive positive rate. It can be seen as the \scoredriven \thresholdchoicemethod where the scores have been set evenly-spaced by a transformation.
\item The \optimal \thresholdchoicemethod also ignores the scores completely and only considers their order. It assumes that the ranking is reliable while the scores are not accurate probabilities. However, this method derives the thresholds by keeping the order and using the slopes of the segments of the convex hull (typically constructed over the training dataset or a validation dataset). It can be seen as the \scoredriven \thresholdchoicemethod where the scores have been calibrated by the PAV method. 
\end{itemize}

\noindent Now that we better understand the meaning of the \thresholdchoicemethods we may state the difficult question more clearly: given a model, which \thresholdchoicemethod should we use to make classifications? The answer is closely related to the calibration problem. Some theoretical and experimental results \cite{Robertson98, PAV, Platt, ZE01, Zadrozny, NC05, niculescu2005predicting, Handbook, gebel2009multivariate} have shown that the PAV method (also known as isotonic regression) is not frequently the best calibration method.
Some other calibration methods could do better, such as Platt's calibration or binning averaging. 
In particular, it has been shown that ``isotonic regression is more prone to overfitting, and thus performs worse than Platt scaling, when data is scarce" \cite{niculescu2005predicting}.
Even with a large validation dataset which allows the construction of an accurate ROC curve and an accurate convex hull, the resulting choices are not necessarily optimal for the test set, since there might be problems with outliers \cite{ruping2006robust}. 
In fact, if the validation dataset is much smaller (or biased) than the training set, the resulting probabilities can be even worse than the original probabilities, as it may happen with cross-validation. So, 
we have to feel free to use other (possibly better) calibration methods instead and do not stick to the PAV method just because it is linked to the \optimal \thresholdchoicemethod.

So the question of whether we keep the scores or not (and how we replace them in case) depends on our expectations on how well-calibrated the \model is, and whether we have tools (calibration methods and validation datasets) to calibrate the scores.

But we can turn the previous question into a much more intelligent procedure. Calculating the three expected losses discussed above (and perhaps the other \thresholdchoicemethods as well) provides a rich source of information about how our models behave. This is what performance metrics are all about. It is only after the comparison of all the results and the availability of (validation) datasets when we can make a decision about which \thresholdchoicemethod to use.

This is what we did with the example shown in Table \ref{tab:models2} in Section \ref{sec:intro}.
We evaluated the model for several \thresholdchoicemethods 
and from there we clearly saw which models were better calibrated and we finally made a decision about which model to use and with which \thresholdchoicemethods.

In any case, note that the results and comparisons shown in Figure \ref{fig:comparison} are for expected loss; the actual loss does not necessarily follow these inequalities. In fact, the expected loss calculated over a validation dataset may not hold over the test dataset, and even some \thresholdchoicemethods we have discarded from the discussion above (the fixed ones or the \scoreuniform and \rateuniform, if probabilities or rankings are very bad respectively) could be better in some particular situations.

\section{Discussion}\label{sec:conclusion}

This paper builds upon the notion of \thresholdchoicemethod and the expected loss we can obtain for a range of cost proportions (or skews) for each of the \thresholdchoicemethods we have investigated. The links between \thresholdchoicemethods, between \performancemetrics, in general and for specific score arrangements, have provided us with a much broader (and more elaborate) view of classification \performancemetrics and the way thresholds can be chosen.
In this last section we link our results to the extensive bulk of work on classification evaluation and analyse the most important contributions and open questions which are derived from this paper.

\subsection{Related work}

One decade ago there was a scattered view of classification evaluation. Many \performancemetrics existed and it was not clear what their relationships were.
One first step in understanding some of these \performancemetrics in terms of costs was the notion of cost isometrics \cite{Fla03}.
With cost isometrics, many classification metrics (and decision tree splitting criteria) are characterised by 
its skew landscape, i.e., the slope of its isometric at any point in the ROC space.
Another comprehensive view was the empirical evaluation made in \cite{PRL09}.
The analysis of Pearson and Spearman correlations between 18 different \performancemetrics
shows the pairs of metrics for which the differences are significant. 

In addition to these, there have been three lines of research in this area which provide further pieces to understand the whole picture.

\begin{itemize}
\item First, the notion of `proper scoring rules' (which was introduced in the sixties, see e.g. \cite{murphy1970scoring}), has been developed to a degree \cite{BSS05} in which it has been shown that the Brier score (MSE loss), logloss, boosting loss and error rate (0-1 loss) are all special cases of an integral over a Beta density, and that all these \performancemetrics can be understood as averages (or integrals), at least theoretically, over a range of cost proportions (see e.g. \cite{gneiting2007strictly,reid2010composite,brummer2010thesis}), so generalising the early works by Murphy on probabilistic predictions when cost-loss ratio is unknown (\cite{murphy1966note} and \cite{murphy1969measures}). Additionally, further connections have been found between proper scoring rules and distribution divergences ($f$-divergences and Bregman divergences) \cite{reid2011information}.

\item Second, the translation of the Brier decomposition using ROC curves \cite{flach-matsubara-ecml07} suggests a connection between the Brier score and ROC curves, and most specially between refinement loss and \auc, since both are \performancemetrics which do not require the magnitude of the scores of the \model.

\item Third, an important coup d'effet has been given by Hand \cite{hand2009measuring}, stating that the \auc cannot be used as a \performancemetric for evaluating \models (for a range of cost proportions) because the distribution for these cost proportions depends on the \model. This seemed to suggest a definitive rupture between ranking quality and classification performance over a range of cost proportions. 

\end{itemize}

\noindent Each of the three lines mentioned above provides a partial view of the problem of classifier evaluation, and suggests that some important connections between \performancemetrics were waiting to be unveiled.
The starting point of this unifying view is that all the previous works above worked with only two \thresholdchoicemethods, which we have called the \scoredriven \thresholdchoicemethod and the \optimal \thresholdchoicemethod. Only a few works mention these two \thresholdchoicemethods together. For instance, Drummond and Holte \cite{drummond-and-Holte2006} talk about `selection criteria' (instead of `\thresholdchoicemethods') and they distinguish between `performance-independent' selection criteria and `cost-minimizing' selection criteria. Hand (personal communication) says that `\cite{hand2009measuring} (top of page 122) points out
that there are situations where one might choose thresholds independently of cost, and go into
more detail in \cite{hand2010evaluating}'. This is related to the fixed \thresholdchoicemethod, or the \rateuniform and \scoreuniform \thresholdchoicemethods used here.
Finally, in \cite{ICML11CoherentAUC} we explore a new \rateuniform \thresholdchoicemethod while in \cite{ICML11Brier} we explore the \scoredriven \thresholdchoicemethod. 

The notion of proper scoring rule works with the \scoredriven \thresholdchoicemethod.
This implies that this notion cannot be applied to \auc ---\cite{reid2011information} connects the area under the convex hull (\auch) with other proper scoring rules but not \auc--- and to RL. 
As a consequence, the Brier score, log-loss, boosting loss and error rate would only be minor choices depending
on the information about the distribution of costs. 

David Hand \cite{hand2009measuring} takes a similar view of the cost distribution, as a choice that depends on the information we may have about the problem, but makes an important change over the tradition in proper scoring rules tradition. He considers `\optimal thresholds' (see Equation (\ref{eqTcosto})) instead of the \scoredriven choice.
With this \thresholdchoicemethod, David Hand is able to derive \auc (or yet again \auch) as a measure of aggregated classification performance, but the distribution he uses (and criticises) depends on the \model itself. Then he defines a new \performancemetric which is proportional to the area under the \optimal cost curve.

\subsection{Conclusions and future work}

As a conclusion, if we want to evaluate a \model for a wide range of operating conditions (i.e., cost proportion or skews), we have to determine first which \thresholdchoicemethod is to be used. If it is fixed because we have a non-probabilistic classifier or we are given the actual operating condition at evaluation time, then we get accuracy (and macro-accuracy) as a good \performancemetric. 
If we have no access to the operating condition at evaluation time but neither do we at deployment time, then the \scoreuniform and the \rateuniform may be considered, with \mae and \auc as corresponding performance metrics. Finally, in the common situation when we do not know the operating condition at evaluation time but we expect that it will be known and used at deployment time, then we have more options.
If a \model has no reliable scores or probability estimations, we recommend the refinement loss ($ \rl_{\Conv}$, which is equivalent to area under the \optimal cost curve) if thresholds are being chosen using the convex hull of a reliable ROC curve, or, alternatively, we recommend the area under the ROC curve ($\auc$) if the estimation of this convex hull is not reliable enough to choose thresholds confidently. More readily, if a \model has reliable scores because it is a good probability estimator or it has been processed by a calibration method, then we recommend to choose the thresholds according to scores. In this case, the corresponding \performancemetric is the Brier score. 


From this paper, now we have a much better understanding on the relation between the Brier score, the \auc and refinement loss. We also know much better what is happening when models are not convex and/or not calibrated.
In addition, we find that using evenly-spaced scores, we get that the Brier score and the \auc are linearly related. Furthermore, we see that if the \model is perfectly calibrated, the expected loss using the \scoredriven \thresholdchoicemethod equals the \optimal \thresholdchoicemethod.


As said in the introduction, this paper works on a different dimension, because, instead of changing the cost or skew distribution as the work on proper scoring rules has done, we change the \thresholdchoicemethod. This suggests that some other combinations could be explored, such as Hand did with his measure $H$ \cite{hand2009measuring}, when using the $B_{2,2}$ distribution for the \optimal \thresholdchoicemethod instead of the uniform ($B_{1,1}$) distribution. We think that the same thing could be done with the \ratedriven \thresholdchoicemethod, possibly leading to new variants of the \auc.


The collection of new findings introduced in this paper leads to many other avenues to follow and some questions ahead. For instance, the duality between cost proportions and skews suggests that we could work with loglikelihood ratios as well. Also, there is always the problem of multiclass evaluation. This is as challenging as interesting, since there are many more \thresholdchoicemethods in the multiclass case and the corresponding expected losses could be connected to some multiclass extensions of the binary \performancemetrics.
Finally, more work is needed on the relation between the ROC space and the cost space, and the representation of all these expected losses in the latter space. The notion of Brier curve \cite{ICML11Brier} is a first step in this direction, but all the new \thresholdchoicemethods also lead to other curves.


\vskip 0.2in
\bibliographystyle{plain}

\bibliography{biblio}

\begin{thebibliography}{10}

\bibitem{AdamsHand1999}
NM~Adams and DJ~Hand.
\newblock {Comparing classifiers when the misallocation costs are uncertain}.
\newblock {\em Pattern Recognition}, 32(7):1139--1147, 1999.

\bibitem{PAV}
M.~Ayer, H.D. Brunk, G.M. Ewing, W.T. Reid, and E.~Silverman.
\newblock An empirical distribution function for sampling with incomplete
  information.
\newblock {\em Annals of Mathematical Statistics}, 5:641--647, 1955.

\bibitem{Handbook}
A.~Bella, C.~Ferri, J.~Hernandez-Orallo, and M.J. Ramirez-Quintana.
\newblock Calibration of machine learning models.
\newblock In {\em Handbook of Research on Machine Learning Applications}, pages
  128--146. IGI Global, 2009.

\bibitem{bella2010quantification}
A.~Bella, C.~Ferri, J.~Hern{\'a}ndez-Orallo, and M.J. Ram{\'\i}rez-Quintana.
\newblock Quantification via probability estimators.
\newblock In {\em 2010 IEEE International Conference on Data Mining}, pages
  737--742. IEEE, 2010.

\bibitem{brier1950verification}
G.W. Brier.
\newblock {Verification of forecasts expressed in terms of probability}.
\newblock {\em Monthly weather review}, 78(1):1--3, 1950.

\bibitem{brummer2010thesis}
N.~Br{\"u}mmer.
\newblock {Measuring, refining and calibrating speaker and language information
  extracted from speech}.
\newblock {\em Ph.D. dissertation, Department of Electrical and Electronic
  Engineering, University of Stellenbosch}, 2010.

\bibitem{BSS05}
A.~Buja, W.~Stuetzle, and Y.~Shen.
\newblock Loss functions for binary class probability estimation: Structure and
  applications.
\newblock
  http://www-stat.wharton.upenn.edu/$\sim$buja/PAPERS/paper-proper-scoring.pdf%
-, 2005.

\bibitem{cohen2004properties}
I.~Cohen and M.~Goldszmidt.
\newblock {Properties and benefits of calibrated classifiers}.
\newblock {\em Knowledge Discovery in Databases: PKDD 2004}, pages 125--136,
  2004.

\bibitem{DH00}
C.~Drummond and R.C. Holte.
\newblock Explicitly representing expected cost: An alternative to {ROC}
  representation.
\newblock In {\em Knowledge Discovery and Data Mining}, pages 198--207, 2000.

\bibitem{drummond-and-Holte2006}
C.~Drummond and R.C. Holte.
\newblock {Cost Curves: An Improved Method for Visualizing Classifier
  Performance.}
\newblock {\em Machine Learning}, (65):95--130, 2006.

\bibitem{Elk01}
C.~Elkan.
\newblock The foundations of {Cost-Sensitive} learning.
\newblock In Bernhard Nebel, editor, {\em Proceedings of the seventeenth
  International Conference on Artificial Intelligence ({IJCAI}-01)}, pages
  973--978, San Francisco, CA, 2001.

\bibitem{Faw01}
T.~Fawcett.
\newblock Using rule sets to maximize {ROC} performance.
\newblock In {\em 2001 IEEE International Conference on Data Mining (ICDM-01)},
  2001.

\bibitem{Fawcett06}
T.~Fawcett.
\newblock An introduction to {ROC} analysis.
\newblock {\em Pattern Recognition Letters}, 27(8):861--874, 2006.

\bibitem{pav-rocch}
T.~Fawcett and A.~Niculescu-Mizil.
\newblock {PAV} and the {ROC} convex hull.
\newblock {\em Machine Learning}, 68(1):97--106, July 2007.

\bibitem{Fawcett-and-Provost1997}
T.~Fawcett and F.~Provost.
\newblock {Adaptive fraud detection}.
\newblock {\em Data mining and knowledge discovery}, 1(3):291--316, 1997.

\bibitem{FFHS05}
C.~Ferri, P.A. Flach, J.~Hern{\'a}ndez-Orallo, and A.~Senad.
\newblock Modifying {ROC} curves to incorporate predicted probabilities.
\newblock In {\em Second Workshop on {ROC} Analysis in ML, ROCML}, pages
  33--40, 2005.

\bibitem{PRL09}
C.~Ferri, J.~Hern\'{a}ndez-Orallo, and R.~Modroiu.
\newblock An experimental comparison of performance measures for
  classification.
\newblock {\em Pattern Recognition Letters}, 30(1):27--38, 2009.

\bibitem{flach-matsubara-ecml07}
P.~A. Flach and E.~Takashi Matsubara.
\newblock A simple lexicographic ranker and probability estimator.
\newblock In {\em 18th European Conference on Machine Learning}, pages
  575--582. Springer, 2007.

\bibitem{Fla03}
P.A. Flach.
\newblock The geometry of {ROC} space: Understanding machine learning metrics
  through {ROC} isometrics.
\newblock In {\em Machine Learning, Proceedings of the Twentieth International
  Conference (ICML 2003)}, pages 194--201, 2003.

\bibitem{ICML11CoherentAUC}
P.A. Flach, J.~Hern{\'a}ndez-Orallo, and C.~Ferri.
\newblock A coherent interpretation of {AUC} as a measure of aggregated
  classification performance.
\newblock In {\em Proceedings of the 28th International Conference on Machine
  Learning, ICML2011}, 2011.

\bibitem{DBLP:journals/datamine/Forman08}
G.~Forman.
\newblock Quantifying counts and costs via classification.
\newblock {\em Data Min. Knowl. Discov.}, 17(2):164--206, 2008.

\bibitem{gebel2009multivariate}
M.~Gebel.
\newblock {\em Multivariate calibration of classifier scores into the
  probability space}.
\newblock PhD thesis, University of Dortmund, 2009.

\bibitem{gneiting2007strictly}
T.~Gneiting and A.E. Raftery.
\newblock {Strictly proper scoring rules, prediction, and estimation}.
\newblock {\em Journal of the American Statistical Association},
  102(477):359--378, 2007.

\bibitem{Good52}
I.~J. Good.
\newblock Rational decisions.
\newblock {\em Journal of the Royal Statistical Society, Series B},
  14:107--114, 1952.

\bibitem{hand1997construction}
D.J. Hand.
\newblock {\em {Construction and assessment of classification rules}}.
\newblock John Wiley \& Sons Inc, 1997.

\bibitem{hand2009measuring}
D.J. Hand.
\newblock {Measuring classifier performance: a coherent alternative to the area
  under the ROC curve}.
\newblock {\em Machine learning}, 77(1):103--123, 2009.

\bibitem{hand2010evaluating}
D.J. Hand.
\newblock Evaluating diagnostic tests: the area under the roc curve and the
  balance of errors.
\newblock {\em Statistics in Medicine}, 29(14):1502--1510, 2010.

\bibitem{ICML11Brier}
J.~Hern\'{a}ndez-Orallo, P.~Flach, and C.~Ferri.
\newblock Brier curves: a new cost-based visualisation of classifier
  performance.
\newblock In {\em Proceedings of the 28th International Conference on Machine
  Learning, ICML2011}, 2011.

\bibitem{lachiche2003improving}
N.~Lachiche and P.~Flach.
\newblock Improving accuracy and cost of two-class and multi-class
  probabilistic classifiers using roc curves.
\newblock In {\em International Conference on Machine Learning}, volume~20,
  page 416, 2003.

\bibitem{LL02}
G.~Lebanon and J.~D. Lafferty.
\newblock Cranking: Combining rankings using conditional probability models on
  permutations.
\newblock In {\em Machine Learning, Proceedings of the Nineteenth International
  Conference (ICML 2002)}, pages 363--370, 2002.

\bibitem{martin1997det}
A.~Martin, G.~Doddington, T.~Kamm, M.~Ordowski, and M.~Przybocki.
\newblock {The DET curve in assessment of detection task performance}.
\newblock In {\em Fifth European Conference on Speech Communication and
  Technology}. Citeseer, 1997.

\bibitem{murphy1966note}
A.H. Murphy.
\newblock {A note on the utility of probabilistic predictions and the
  probability score in the cost-loss ratio decision situation}.
\newblock {\em Journal of Applied Meteorology}, 5:534--536, 1966.

\bibitem{murphy1969measures}
A.H. Murphy.
\newblock {Measures of the Utility of Probabilistic Predictions in Cost-Loss
  Ratio Decision Situations in which Knowledge of the Cost-Loss Ratios is
  Incomplete.}
\newblock {\em Journal of Applied Meteorology}, 8:863--873, 1969.

\bibitem{murphy1973new}
A.H. Murphy.
\newblock {A New Vector Partition of the Probability Score.}
\newblock {\em Journal of Applied Meteorology}, 12:595--600, 1973.

\bibitem{murphy1970scoring}
A.H. Murphy and R.L. Winkler.
\newblock {Scoring rules in probability assessment and evaluation}.
\newblock {\em Acta Psychologica}, 34:273--286, 1970.

\bibitem{niculescu2005predicting}
A.~Niculescu-Mizil and R.~Caruana.
\newblock Predicting good probabilities with supervised learning.
\newblock In {\em Proceedings of the 22nd international conference on Machine
  learning}, pages 625--632. ACM, 2005.

\bibitem{NC05}
Alexandru Niculescu-mizil and Rich Caruana.
\newblock Obtaining calibrated probabilities from boosting.
\newblock In {\em In: Proc. 21st Conference on Uncertainty in Artificial
  Intelligence (UAI 05), AUAI Press}, pages 413--420. AUAI Press, 2005.

\bibitem{piatetsky1999estimating}
G.~Piatetsky-Shapiro and B.~Masand.
\newblock {Estimating campaign benefits and modeling lift}.
\newblock In {\em Proceedings of the fifth ACM SIGKDD international conference
  on Knowledge discovery and data mining}, page 193. ACM, 1999.

\bibitem{Platt}
J.~C. Platt.
\newblock Probabilistic outputs for support vector machines and comparisons to
  regularized likelihood methods.
\newblock In {\em Advances in Large Margin Classifiers}, pages 61--74. MIT
  Press, Boston, 1999.

\bibitem{Prati2011}
R.~C. Prati, G.~E.A.P.A. Batista, and M.~C. Monard.
\newblock A survey on graphical methods for classification predictive
  performance evaluation.
\newblock {\em IEEE Transactions on Knowledge and Data Engineering},
  23:1601--1618, 2011.

\bibitem{provost2001robust}
F.~Provost and T.~Fawcett.
\newblock Robust classification for imprecise environments.
\newblock {\em Machine Learning}, 42(3):203--231, 2001.

\bibitem{reid2010composite}
M.D. Reid and R.C. Williamson.
\newblock {Composite binary losses}.
\newblock {\em The Journal of Machine Learning Research}, 11:2387--2422, 2010.

\bibitem{reid2011information}
M.D. Reid and R.C. Williamson.
\newblock {Information, divergence and risk for binary experiments}.
\newblock {\em The Journal of Machine Learning Research}, 12:731--817, 2011.

\bibitem{Robertson98}
T.~Robertson, F.~Wright, and R.~Dykstra.
\newblock {\em Order restricted statistical inference}.
\newblock John Wiley and Sons, New York, 1988.

\bibitem{ruping2006robust}
S.~R{\"u}ping.
\newblock Robust probabilistic calibration.
\newblock {\em Machine Learning: ECML 2006}, pages 743--750, 2006.

\bibitem{SDM00}
J.A. Swets, R.M. Dawes, and J.~Monahan.
\newblock Better decisions through science.
\newblock {\em Scientific American}, 283(4):82--87, October 2000.

\bibitem{ZE01}
B.~Zadrozny and C.~Elkan.
\newblock Obtaining calibrated probability estimates from decision trees and
  naive bayesian classifiers.
\newblock In {\em Proceedings of the Eighteenth International Conference on
  Machine Learning (ICML 2001)}, pages 609--616, 2001.

\bibitem{Zadrozny}
B.~Zadrozny and C.~Elkan.
\newblock Transforming classifier scores into accurate multiclass probability
  estimates.
\newblock In {\em The 8th ACM SIGKDD Intl. Conference on Knowledge Discovery
  and Data Mining}, pages 694--699, 2002.

\end{thebibliography}


\newpage

\appendix

\section{Appendix: proof of Theorem \ref{thm:RL} and examples relating $\LcostoU$ and $\rl$ }

In this appendix, we give the proof for Theorem \ref{thm:RL} in the paper, along with some examples which show how the correspondence between $\LcostoU(m)$ and $\rl(m)$ goes.
The theorem works with convex models as given by definition \ref{def:Convex}. 

In this appendix, we will use:

$$c(T) = \frac{\pi_1 f_1(T)}{\pi_0 f_0(T) + \pi_1 f_1(T)}$$

\noindent
throughout, as introduced by eq. (\ref{eq:cT}). Sometimes we will use subindices for $c(T)$ depending on the model we are using.
We will also use ${slope(T)} = {\pi_1 f_1(T)} = \frac{\pi_1}{\pi_0}\left(\frac{1}{c(T)}-1\right)$.
A convex model is the same as saying that $c(T)$ is non-decreasing or that $slope(T)$ is non-increasing.

We use $c^{-1}(s)$ for the inverse of $c(T)$ (wherever it is well defined).
We will use the following transformation $T \mapsto s=c(T)$ and the resulting model will be denoted by $m^{(c)}$. We will use $s$, $c$ or $\sigma$ for elements in the codomain of this transformation (cost proportions or scores between 0 and 1) and we will use $T$ or $\tau$ for elements in the domain.

For continuous and strictly convex models for which $c(0)=0$ and $c(1)=1$, the proof is significantly simpler. In general, for any convex model, including discontinuities and straight segments, things become a little bit more elaborate, as we see below.







\subsection{Analysing pointwise equivalence}

One way of showing that two aggregated measures are equal is to show that these measures are pointwise equal. However, this is not the right way here, since this is not the case in general (for every possible convex model $m$). In order to understand this better, we start by drawing the loss (in terms of cost proportion $c$, as in cost curves) and, using the same \xaxis,  we also draw measures on the thresholds, especially when these thresholds go from 0 to 1.
In particular, we use the same idea that was introduced in \cite{ICML11Brier} for \emph{Brier curves}. In this appendix, we will only show the \cl term of the Brier curve, so we can call them \emph{refinement curves}.
Finally, a third curve will also be used, which can be understood as the {\rl} of the calibrated model using the transformation $c(T)$ on the scores. As we will see, calibration keeps the $\rl$ as well, but this is again achieved in a non-pointwise manner. This leads to three different curves in the figures which follow:

\begin{enumerate}
\item The loss of the original convex model $m$, $\LcostoU(m)$, which will be shown in brown.
\item The loss of the $m^{(c)}$, the model calibrated with $c(T)$, i.e., $\LcostoU(m^{(c)})$, denoted by $\Lambda$, which will be shown in purple.   
\item The refinement loss of model $m$, $\rl(m)$, which will be shown in green.
\end{enumerate}

\noindent In this appendix, we develop these three items and their correspondences, which will be the key to find the proof for the theorem. In fact, we go from (1) to (2), and then, from (2) to (3).

We start with figure \ref{fig:perfectlycalibrated}, where the model is perfectly calibrated (and hence strictly convex), and the three curves match pointwise.


\begin{figure}
\centering
\includegraphics[width=0.9\textwidth]{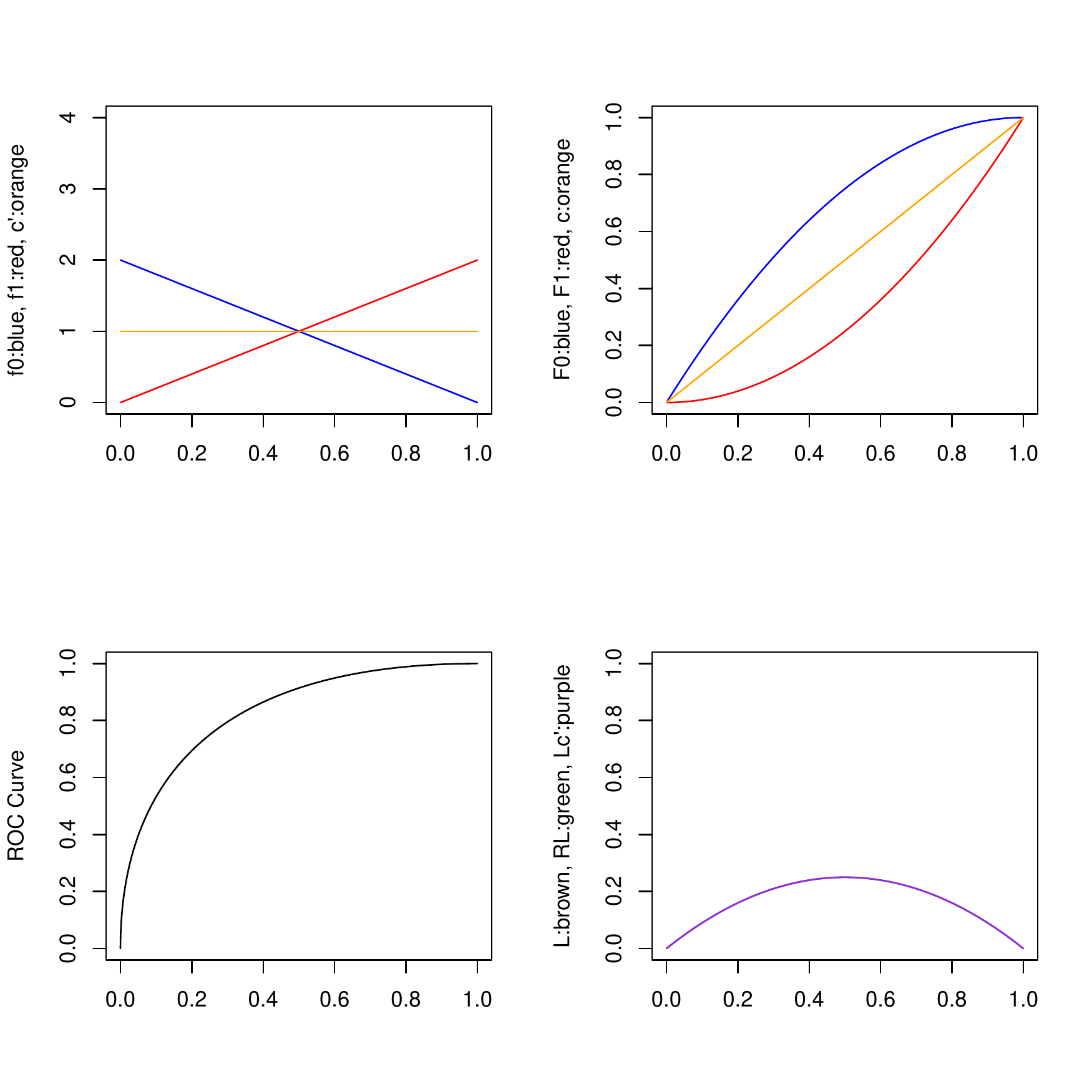} 
\caption{Here we have a perfectly calibrated model using two triangular distributions.
We see that $c(T)$ (in orange) in this case is $c(T)=T$ and its density function $c'(T)=1$. The ROC curve is shown at the bottom left plot, which is strictly convex.
The plot on the bottom right shows that the three curves are pointwise identical (this is trivial from Lemma \ref{lemma:prime} for the two expressions of loss, and it is also easy to see for \rl using Theorem \ref{thm:LcostpUequalsBS2}). 
The intervals are $I_\sigma = \{(0, 1) \}$, $I_\tau = \{(0, 1) \}$, where the only interval is bijective.}
\label{fig:perfectlycalibrated}
\end{figure}

Figure \ref{fig:noncalibrated} shows a model which has exactly the same ranking (and ROC curve) as in figure \ref{fig:perfectlycalibrated}. However, here, the model is not perfectly calibrated. We see now that two of the three curves match pointwise, namely the green one (\rl) and the purple ($\Lambda$). Yet again, the three curves still match in total area.


\begin{figure}
\centering
\includegraphics[width=0.9\textwidth]{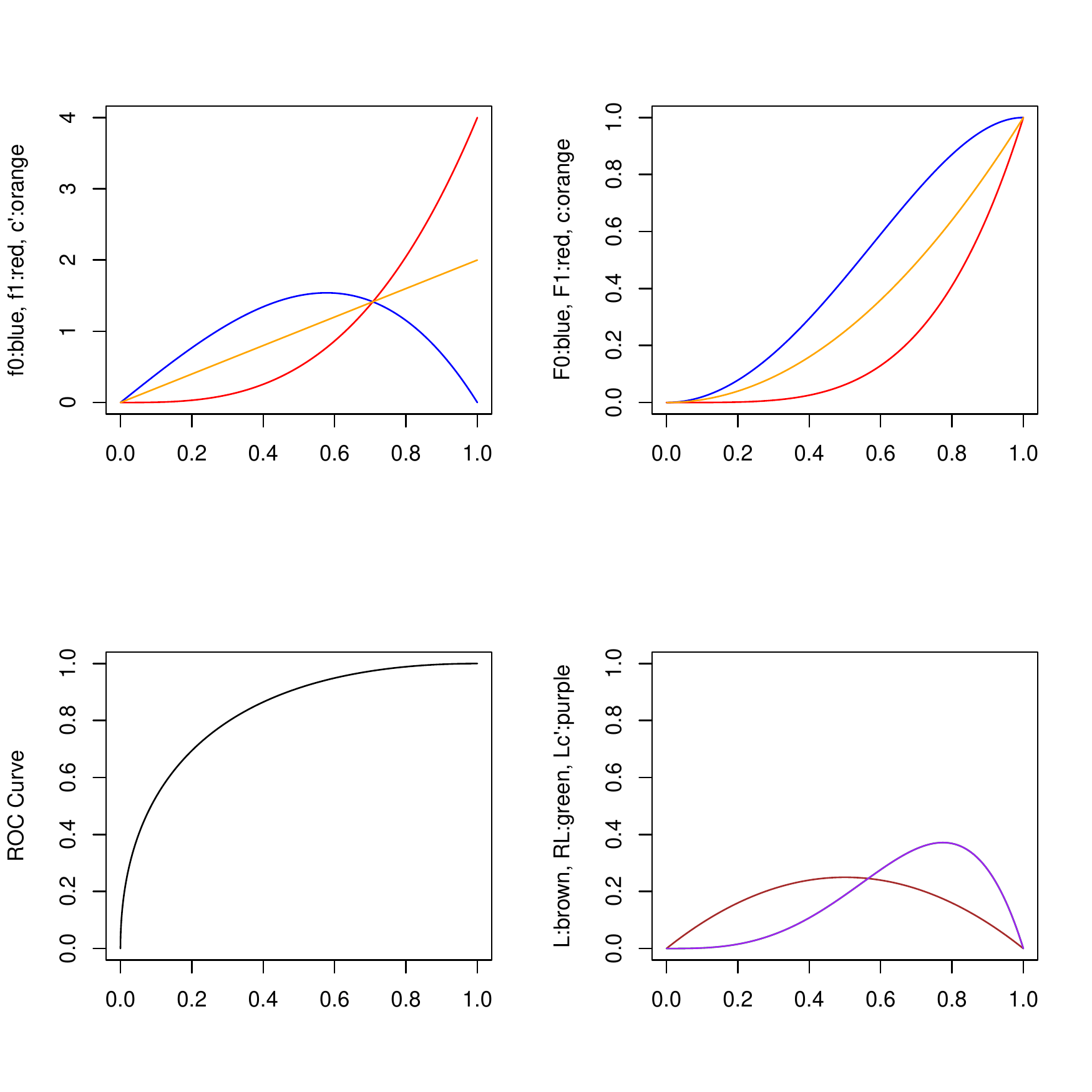} 
\caption{Here we have a non-calibrated model with the same ranking as Figure \ref{fig:perfectlycalibrated}.
We see that $c'(T)$ is linear. The ROC curve is identical to the previous case.
The plot on the bottom right shows that the three curves match in the areas but only two in their shapes as well: the green and purple curves are equal, since \rl is pointwise equal to $\acute{\Lambda}$. 
The intervals are $I_\sigma = \{(0, 1) \}$, $I_\tau = \{(0, 1) \}$, where the only interval is bijective.}
\label{fig:noncalibrated}
\end{figure}

Figure \ref{fig:noncalibrated2} shows another model which has exactly the same ranking (and ROC curve) as in figure \ref{fig:perfectlycalibrated}, but again the model is not perfectly calibrated. We see now, none of the three curves match pointwise (while the three still match in total area).


\begin{figure}
\centering
\includegraphics[width=0.9\textwidth]{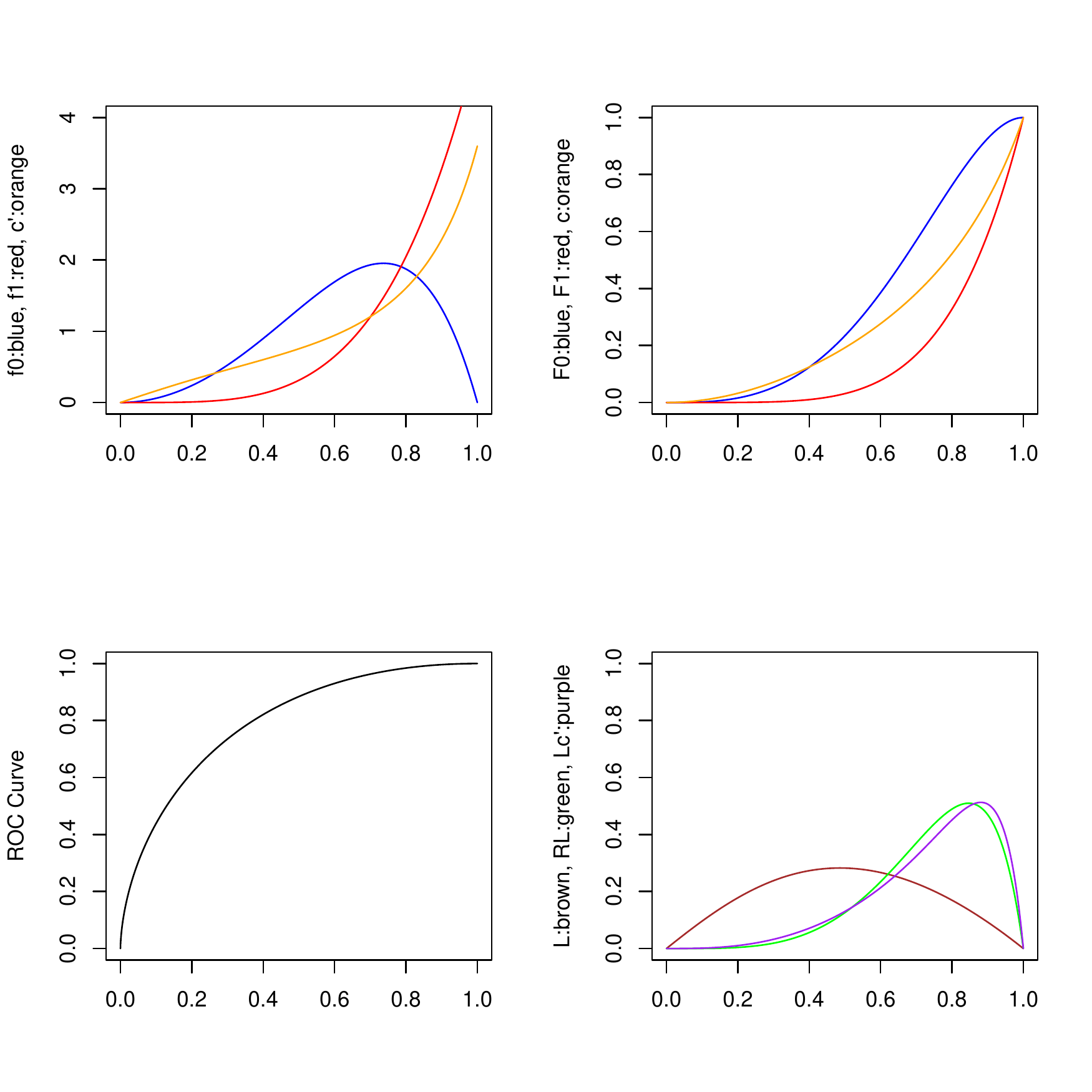} 
\caption{Here we have a non-calibrated model with the same ranking as Figure \ref{fig:perfectlycalibrated}.
Now we see that $c'(T)$ is not linear. The ROC curve is identical to the previous two cases.
The plot on the bottom right shows that the three curves match in the areas but not point wise.  
The intervals are $I_\sigma = \{(0, 1) \}$, $I_\tau = \{(0, 1) \}$, where the only interval is bijective.}
\label{fig:noncalibrated2}
\end{figure}

\subsection{Intervals}

Since the model is convex, we know that $c(T)$ is monotone, more precisely, non-decreasing. We can split the codomain and domain of this function into intervals. Intervals in the codomain of thresholds will be represented with the letter $\tau$ and intervals in the domain of cost proportions or scores between 0 and 1 will be denoted by letter $\sigma$. The series of intervals are denoted as follows:
\begin{eqnarray*}
 I_\sigma & = & (\sigma_0, \sigma_1), (\sigma_1, \sigma_2) \dots (\sigma_i, \sigma_{i+1}) \dots (\sigma_{n-1}, \sigma_n)  \\
& & \:\:\:\:\:\:\:\:\:\:\:\:\:\:\:\:\:\:\:\:\:\:\:\: \Uparrow c(\tau) \:\:\:\:\:\:\:\:\:\:\:\:\:\:\:\:\:\:\: \Downarrow c^{-1}(\sigma) \\
 I_\tau & = & (\tau_0, \tau_1), (\tau_1, \tau_2) \dots (\tau_i, \tau_{i+1}) \dots (\tau_{n-1}, \tau_n)  
\end{eqnarray*}

\noindent where $\sigma_0 = 0$, $\sigma_n = 1$, $\tau_0 = -\infty$ and $\tau_n = \infty$.
Even though we cannot make a bijective mapping for every point, we can construct a bijective mapping between $I_\sigma$ and $I_\tau$.
Because of this bijection, we may occasionally drop the subindex for $I_\sigma$ and $I_\tau$.

We need to distinguish three kinds of intervals:

\begin{itemize}
\item Intervals where $c(T)$ is strictly increasing, denoted by $\acute{I}$. We call these intervals {\em bijective}, since $c(T)$ is invertible.
These correspond to non-straight parts of the ROC curve. Each point inside these segments is optimal for one specific cost proportion.
\item Intervals where $c(T)$ is constant, denoted by $\bar{I}$. We call these non-injective intervals {\em constant}. 
These correspond to straight parts of the ROC curve. All the points inside these segments are optimal for just one cost proportion, and we only need to consider any of them (e.g., the extremes).
\item Intervals in the codomain where no value $T$ for $c(T)$ has an image, denoted by $\dot{I}$. We call these `intervals' {\em singular}, and address non-surjectiveness. In the codomain they may usually correspond to one single point, but also can correspond to an actual interval when the density functions are 0 for some intervals in the codomain. 
In the end, these correspond to discontinuous points of the ROC curve. The points at $(0,0)$ and $(1,1)$ are generally (but not always) discontinuous. These points are optimal for many cost proportions. 
\end{itemize}

Table \ref{tab:intervals} shows how these three kinds of intervals work.

\begin{table}[htdp]
\begin{center}
\begin{tabular}{ccc}\hline 
bijective & constant & singular \\ 
$\acute{I}$ & $\bar{I}$ & $\dot{I}$ \\ \hline 
$]\sigma_i, \sigma_{i+1}[$ & $[\sigma_i, \sigma_{i+1}]$ & $]\sigma_i, \sigma_{i+1}[$  \\
$\uparrow\uparrow\uparrow\uparrow\uparrow$ & $\nearrow\nwarrow$ &  $\nwarrow\nearrow$ \\
$]\tau_i, \tau_{i+1}[$ & $]\tau_i, \tau_{i+1}[$ & $[\tau_i, \tau_{i+1}]$  \\
\hline\end{tabular}
\end{center}
	\caption{Illustration for the three types of intervals.}
	\label{tab:intervals}
\end{table}%

Figures \ref{fig:strictlyconvex} to \ref{fig:discontinuous} show several examples where the number and type of intervals vary, as they are detailed in the captions of the figures.

Now we are ready to get some results:

\begin{lemma}\label{lemma:prime}
If the model $m$ is convex, we have that minimal expected loss can be expressed as:

\begin{equation}
\LcostoU(m)  = \acute{\Lambda}(m) + \dot{\Lambda}(m) \label{eq:LcT}
\end{equation}

\noindent where:

\begin{equation}
\acute{\Lambda}(m) = \sum_{]\tau_i,\tau_{i+1}[ \in \acute{I}_\tau} \int_{\tau_i}^{\tau_{i+1}} 2c(T) \pi_0(1-F_0(T)) + 2(1-c(T))\pi_1 F_1(T) \} c'(T) dT  \label{eq:LcTbijective}
\end{equation}

\noindent where $c'(T)$ is the derivative of $c(T)$ and:

\begin{eqnarray}
 \dot{\Lambda}(m) & = & \sum_{]\sigma_i,\sigma_{i+1}[ \in \dot{I}_\sigma} \int_{\sigma_i}^{\sigma_{i+1}} 2c \pi_0(1-F_0(\tau_i)) + 2(1-c)\pi_1 F_1(\tau_i) \} dc \label{eq:LcTsingular1} \\
 & = & \sum_{]\sigma_i,\sigma_{i+1}[ \in \dot{I}_\sigma} \left\{ \pi_0(1-F_0(\tau_i))   ({{\sigma_{i+1}}^2}-{{\sigma_{i}}^2}) + \pi_1 F_1(\tau_i)  (2{\sigma_{i+1}}-{{\sigma_{i+1}}^2}-2{\sigma_i}+{{\sigma_i}^2})    \right\}  \label{eq:LcTsingular}
\end{eqnarray}

\noindent Note that the constant intervals in $\bar{I}_\sigma$ are not considered (their loss is 0). 
\end{lemma}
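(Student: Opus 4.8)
The plan is to start from the defining expression for optimal loss under a uniform cost distribution, Equation~(\ref{eq:Loriginal}),
\[
\LcostoU(m) = \int_0^1 \min_t \{ 2c\pi_0(1-F_0(t)) + 2(1-c)\pi_1 F_1(t) \}\, dc ,
\]
and to make the inner minimisation explicit using convexity. Differentiating $\Qcost(t;c)$ with respect to $t$ gives $2\{(1-c)\pi_1 f_1(t) - c\pi_0 f_0(t)\}$, which vanishes precisely when $c = \frac{\pi_1 f_1(t)}{\pi_0 f_0(t) + \pi_1 f_1(t)} = c(t)$. Since $m$ is convex, $c(t)$ is non-decreasing, so $\Qcost(\cdot\,;c)$ is non-increasing while $c(t) \le c$ and non-decreasing once $c(t) \ge c$: every stationary point is a global minimiser. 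Hence for a fixed $c$ the minimising threshold set is exactly $\{t : c(t) = c\}$, possibly supplemented by a limiting threshold at the end of a singular interval (see below), and $\min_t \Qcost(t;c) = \Qcost(t^\star(c);c)$ for any such $t^\star(c)$.

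Next I would partition the integration range $[0,1]$ of the variable $c$ according to the three types of intervals introduced above, using the order-preserving correspondence between $I_\sigma$ and $I_\tau$. On a bijective interval $]\sigma_i,\sigma_{i+1}[ \in \acute{I}_\sigma$, the map $c(\cdot)$ restricts to a strictly increasing, absolutely continuous bijection onto $]\tau_i,\tau_{i+1}[$, so the substitution $c = c(T)$, $dc = c'(T)\,dT$ is valid and the minimising threshold for $c = c(T)$ is $T$ itself; this converts $\int_{\sigma_i}^{\sigma_{i+1}} \min_t \Qcost(t;c)\, dc$ into the integrand displayed in Equation~(\ref{eq:LcTbijective}), whose sum over the bijective intervals is $\acute{\Lambda}(m)$. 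On a constant interval $\bar{I}$ the function $c(\cdot)$ is constant over a whole threshold interval, so its image in $c$-space is a single point (a straight segment of the ROC curve) and it contributes nothing to the $c$-integral, which is why the constant intervals are absent from the statement. On a singular interval $]\sigma_i,\sigma_{i+1}[ \in \dot{I}_\sigma$ no threshold satisfies $c(t) \in\, ]\sigma_i,\sigma_{i+1}[$, and the minimiser is the single threshold $\tau_i$ (equivalently $\tau_{i+1}$), the discontinuity point of the ROC curve; hence $\min_t \Qcost(t;c) = \Qcost(\tau_i;c)$ for all $c$ in the interval, and integrating the linear-in-$c$ expression $2c\pi_0(1-F_0(\tau_i)) + 2(1-c)\pi_1 F_1(\tau_i)$ over $]\sigma_i,\sigma_{i+1}[$ gives Equation~(\ref{eq:LcTsingular1}) and then, via the elementary antiderivative, the closed form of Equation~(\ref{eq:LcTsingular}). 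Summing over the singular intervals yields $\dot{\Lambda}(m)$. Adding the contributions of the three groups of intervals gives $\LcostoU(m) = \acute{\Lambda}(m) + \dot{\Lambda}(m)$, as claimed.

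I expect the main obstacle to be the bookkeeping around the interval decomposition rather than any particular computation: one must verify that the three types of intervals partition the $c$-range (and the threshold range) up to a negligible set, that the correspondence $I_\sigma \leftrightarrow I_\tau$ is well defined even though $c(\cdot)$ is only non-decreasing (plateaus of $c(\cdot)$ collapse to single $c$-values, while jumps of $c(\cdot)$ open up into $c$-intervals), and that $c(\cdot)$ is absolutely continuous on bijective intervals so that the change of variables $dc = c'(T)\,dT$ is legitimate. The endpoints $c = 0$ and $c = 1$ need separate attention, since the ROC curve is typically discontinuous at $(0,0)$ and $(1,1)$; these boundary pieces fall under the singular case with $\tau_i = -\infty$ or $\tau_{i+1} = +\infty$, where one uses $F_0(-\infty) = F_1(-\infty) = 0$ and $F_0(+\infty) = F_1(+\infty) = 1$ to see that the relevant integrands remain well defined.
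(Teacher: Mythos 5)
Your proposal is correct and follows essentially the same route as the paper: both start from Equation~(\ref{eq:Loriginal}), locate the minimiser via $c(T)=c$, split the $c$-range according to the bijective/constant/singular interval types, change variables $c=c(T)$ on the bijective pieces, integrate the fixed-threshold linear expression on the singular pieces, and observe that constant intervals contribute nothing. The one small improvement in your write-up is that you argue global optimality of the stationary point directly from the monotonicity of $c(\cdot)$ (sign of $\partial Q/\partial t$ flips exactly at $c(t)=c$), whereas the paper computes a second derivative and then informally appeals to convexity to rule out merely local minima; your version makes that step tighter without changing the argument's substance.
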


\label{eq:Tcosto}

\begin{proof}
We take the expression for optimal loss from Equation (\ref{eq:Loriginal}):
%
\begin{eqnarray}
\LcostoU 				& = & \int^{1}_{0} \underset{t}\min\{ 2c\pi_0(1-F_0(t)) + 2(1-c)\pi_1 F_1(t) \} dc \label{eq:Loriginal2}  
\end{eqnarray}

\noindent In order to calculate the minimum, we make the derivative of the min expression equal to 0:

\begin{eqnarray*}
2c\pi_0(0-f_0(t)) + 2(1-c)\pi_1 f_1(t) = 0\\
-2c \cdot \frac{\pi_0}{\pi_1}{slope(T)} + 2(1-c) = 0\\
\frac{\pi_0}{\pi_1}{slope(T)}  = \frac{1-c}{c}\\
\frac{1}{c(T)}-1  = \frac{1-c}{c}\\ 
c(T) = c
\end{eqnarray*}

\noindent We now check the sign of the second derivative, which is:

\begin{eqnarray*}
-2c \cdot \frac{\pi_0}{\pi_1}slope'(t) =
-2c \times (\frac{1}{c(T)}-1)' =
-2c \frac{-c'(T)}{{c(T)}^2} =
2c  \frac{c'(T)}{{c(T)}^2} 
\end{eqnarray*}

For the bijective intervals $\acute{I}_\sigma$, where the model is strictly convex and $c(T)$ is strictly decreasing, its derivative is $> 0$.
Also, $c$ is always between 0 and 1, so the above expression is positive, and it is a minimum.
And this cannot be a `local' minimum, since the model is convex.

For the constant intervals $\bar{I}_\sigma$ where the model is convex (but not strictly), this means that $c(T)$ is constant, and its derivative is 0. That means that the minimum can be found at any point $T$ in these intervals $]\tau_i, \tau_{i+1}[$ for the same $[\sigma_i = \sigma_{i+1}]$. But their contribution to the loss will be 0, as can be seen since $c'(T)$ equals $0$.

For the singular intervals $\dot{I}_\sigma$, on the contrary, all the values in each interval $]\sigma_i, \sigma_{i+1}[$ will give a minimum for the same $[\tau_i = \tau_{i+1}]$.

So we decompose the loss with the bijective and singular intervals only:

\begin{equation}
\LcostoU(m)  = \acute{\Lambda}(m) + \dot{\Lambda}(m)
\end{equation}

For the strictly convex (bijective) intervals, we now know that the minimum is at $c(T) = c$, and $c(T)$ is invertible. We can use exactly this change of variable over Equation (\ref{eq:Loriginal2}) and express this for the series of intervals $]\tau_i,\tau_{i+1}[$. 

\begin{eqnarray*}
\acute{\Lambda}(m)   & = & \sum_{]\tau_i,\tau_{i+1}[ \in \acute{I}_\sigma} \int_{\tau_i}^{\tau_{i+1}} 2c(T) \pi_0(1-F_0(T)) + 2(1-c(T))\pi_1 F_1(T) \} c'(T) dT  
\end{eqnarray*}

\noindent which corresponds to Equation (\ref{eq:LcTbijective}). Note that when there is only one bijective interval (the model is continuous and strictly convex), we have that there is only one integral in the sum and its limits go from $c^{-1}(0)$ to $c^{-1}(1)$, which in some cases can go from $-\infty$ to $\infty$, if the scores are not understood as probabilities.

For the singular intervals, we can work from Equation (\ref{eq:Loriginal2}):

\begin{eqnarray*}
 \dot{\Lambda}(m) & = & \sum_{]\sigma_i,\sigma_{i+1}[ \in \dot{I}_\sigma} \int_{\sigma_i}^{\sigma_{i+1}} \min\{ 2c\pi_0(1-F_0(t)) + 2(1-c)\pi_1 F_1(t) \}  dc  
\end{eqnarray*}

As said, all the values in each interval $]\sigma_i, \sigma_{i+1}[$ will give a minimum for the same $[\tau_i = \tau_{i+1}]$, so this reduces to:

\begin{eqnarray*}
 \dot{\Lambda}(m) & = &  \sum_{]\sigma_i,\sigma_{i+1}[ \in \dot{I}_\sigma} \int_{\sigma_i}^{\sigma_{i+1}} \{ 2c \pi_0(1-F_0(\tau_i)) + 2(1-c)\pi_1 F_1(\tau_i) \} dc \\
                  & = &  2 \sum_{]\sigma_i,\sigma_{i+1}[ \in \dot{I}_\sigma} \left\{ \pi_0(1-F_0(\tau_i)) \int_{\sigma_i}^{\sigma_{i+1}}  c dc + \pi_1 F_1(\tau_i) \int_{\sigma_i}^{\sigma_{i+1}} (1-c) dc \right\} \\ 
                  & = &  2 \sum_{]\sigma_i,\sigma_{i+1}[ \in \dot{I}_\sigma} \left\{ \pi_0(1-F_0(\tau_i))   \left[\frac{c^2}{2}\right]_{\sigma_i}^{\sigma_{i+1}} + \pi_1 F_1(\tau_i)  \left[c-\frac{c^2}{2}\right]_{\sigma_i}^{\sigma_{i+1}}   \right\} \\ 
									& = &  \sum_{]\sigma_i,\sigma_{i+1}[ \in \dot{I}_\sigma} \left\{ \pi_0(1-F_0(\tau_i))   ({{\sigma_{i+1}}^2}-{{\sigma_{i}}^2}) + \pi_1 F_1(\tau_i)  (2{\sigma_{i+1}}-{{\sigma_{i+1}}^2}-2{\sigma_i}+{{\sigma_i}^2})    \right\}  
\end{eqnarray*}

\noindent which corresponds to Equation (\ref{eq:LcTsingular}).
\end{proof}

This gives us the connection between the brown curves (the loss) and the purple curves (the loss of the transformed model). In the figures, we only show the term $\acute{\Lambda}(m)$, and not the other term $\dot{\Lambda}(m)$.

\subsection{More examples}

Now we will see examples with different kinds of intervals, and where the correspondence between $\rl$ and loss is more subtle.

We can illustrate what happens with Lemma \ref{lemma:prime} in figure \ref{fig:strictlyconvex}.
As we see here, the model is strictly convex (as shown by its ROC curve and the strictly increasing  blue $c(T)$ in the top right plot). We can apply the lemma and we see that the brown curve (the loss) and the purple curve (corresponding to  $\acute{\Lambda}(m)$) in the bottom right plot have the same area. Note that the limits are here between 0 and 1 for cost proportions $c$ and thresholds $T$. In fact, we have that $c^{-1}(0)=0$ and $c^{-1}(1)=1$. This explains why $\dot{\Lambda}(m)=0$ in this case.
There is another curve, and it is shown in green, and it is the refinement loss of the model. Showing that the area under this curve is also equal is the purpose of the rest of this appendix.


\begin{figure}
\centering
\includegraphics[width=0.9\textwidth]{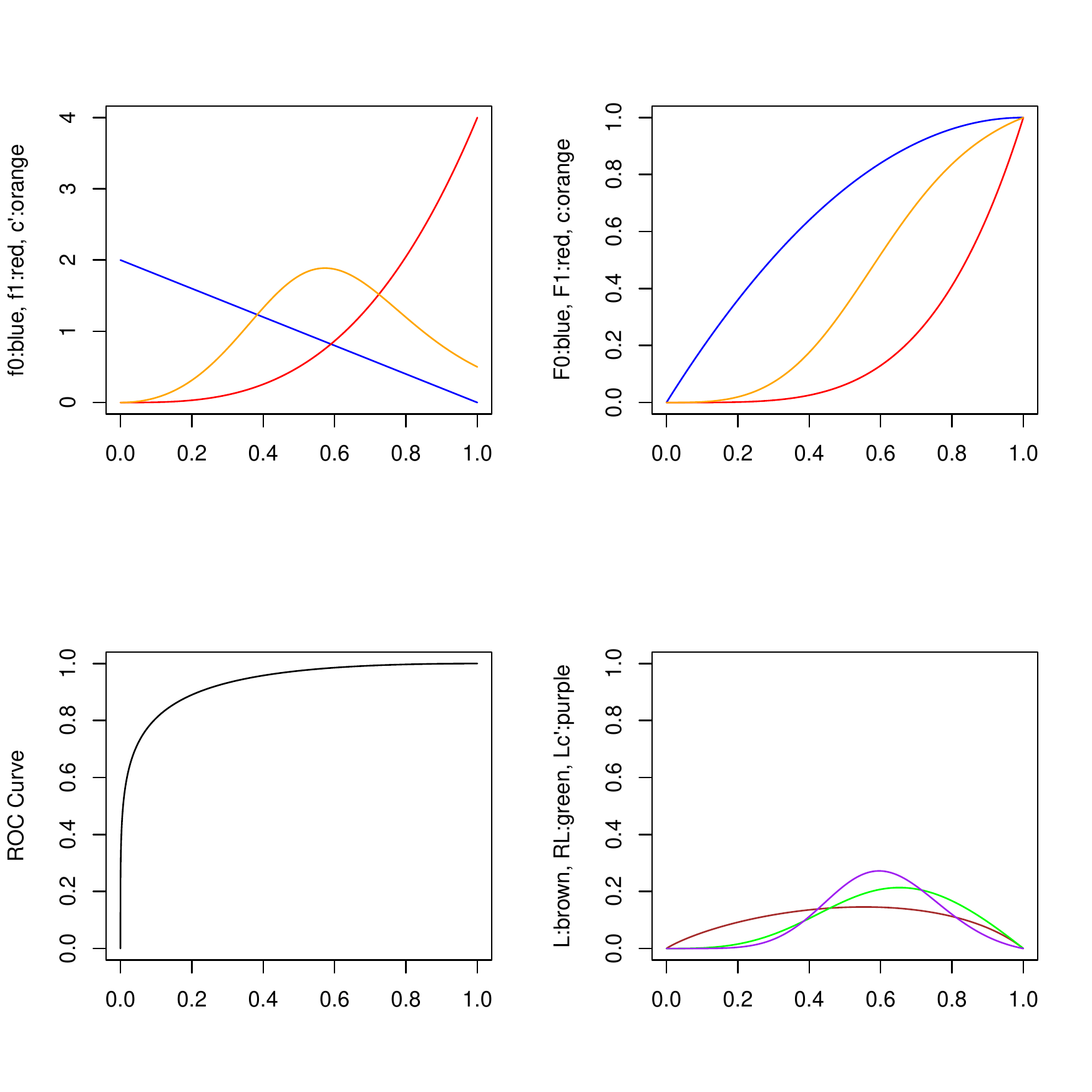} 
\caption{Here we have a model with probability density functions $f_0(x) = 2(1-x)$ and $f_1(x)= 4x^3$, with corresponding cumulative distribution functions $F_0(x)= 1 - (1-x)^2$ and $F_1(x) = x^4$. We also show the functions $c(x) = 4x^3 / (2(1-x) + 4x^3)$ and $c'(x) = (2 (3 - 2x )x^2)/(-1 + x - 2x^3)^2$. These six functions are shown on the the top left plot (densities) and top right plot (cumulative). We see that $c(T)$ (in orange) in this case is strictly increasing. The ROC curve is shown at the bottom left plot, which is strictly convex. The interesting plot is on the bottom right.
Here we see that the overall loss (which is $0.10245$ in this case) can be calculated in three different ways. The original curve (in brown) is given by Equation (\ref{eq:Loriginal2}).
A different curve (in purple) is given by $\acute{\Lambda}$ in Equation (\ref{eq:LcTbijective}). The equivalence of the area under these two curves when the model is convex is what Lemma \ref{lemma:prime} shows, because in this case $\dot{\Lambda} = 0$.
But there is yet another possible curve for convex models, and it is shown in green, and it is the refinement loss of the model.
The intervals are $I_\sigma = \{(0, 1) \}$, $I_\tau = \{(0, 1) \}$, where the only interval is bijective. }
\label{fig:strictlyconvex}
\end{figure}

We can see a similar picture for a model which is also convex (but not strictly) in  figure \ref{fig:nonstrictlyconvex}.
However, here we see plateaus in some of the curves and we see that the warping of the curves is much more interesting.


\begin{figure}
\centering
\includegraphics[width=0.9\textwidth]{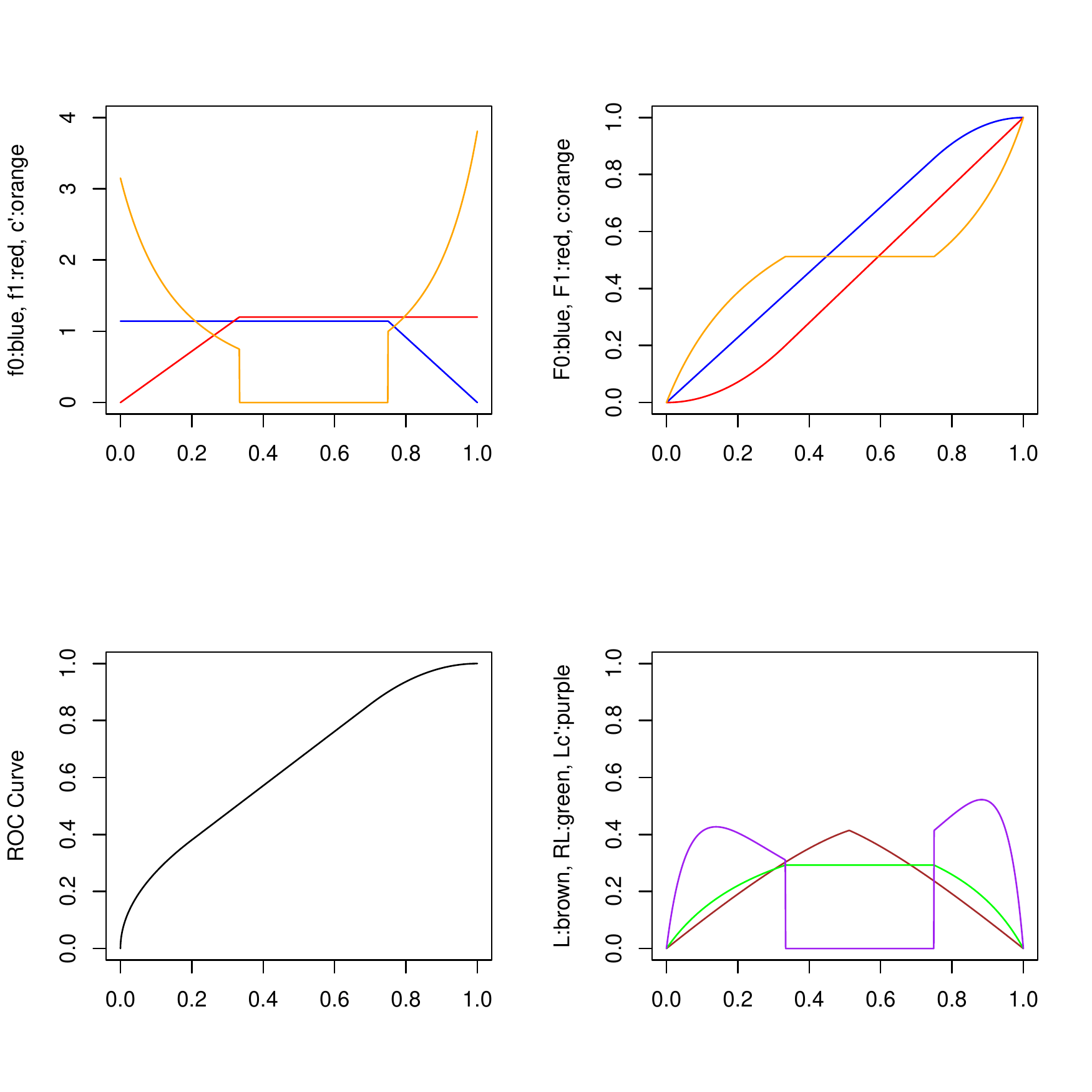} 
\caption{Here we have a convex model, which is not strictly convex.
$f_0$ is defined as a triangular distributions upto 1/3 and then constant with height $8/7$, and a $f_1$ is defined constant with height $6/5$ until 3/4 and then triangular. All this leads to three different segments, one being straight.
We see that $c(T)$ (in orange) in this case is non-decreasing. The ROC curve is shown at the bottom left plot, which is convex (but not strictly, since it has a straight segment). The interesting plot is on the bottom right.
Here we also see that the overall loss (which is $0.2266$ in this case) can be calculated in three different ways. The original curve (in brown) is given by Equation (\ref{eq:Loriginal2}).
A different curve (in purple) is given by $\acute{\Lambda}$ in Equation (\ref{eq:LcTbijective}). Again, the equivalence of the area under these two curves when the model is convex is what Lemma \ref{lemma:prime} shows, because in this case $\dot{\Lambda} = 0$.
The curve for the refinement loss is shown in green, with also the same area. 
One important thing is what happens to the brown curve against the purple curve. While the three curves treat the segments which are straight on the convex hull, they do this in a very different way. The brown curve (original loss expression) eliminates them on the \xaxis, the purple curve (loss expression using $c(T)$) eliminates them on the \yaxis and the green curve ($\rl$) treats them by using a constant value.
The intervals are $I_\sigma = \{(0, 0.51), (0.51, 0.51), (0.51, 1) \}$, $I_\tau = \{(0, 0.33), (0.33, 0.75), (0.75, 1) \}$, where the first interval is bijective, the second is constant and the third is bijective. }
\label{fig:nonstrictlyconvex}
\end{figure}

As already shown before, figure \ref{fig:perfectlycalibrated} is a case of a model is perfectly calibrated, where the three curves match. Figure \ref{fig:noncalibrated} shows a model which has exactly the same ranking (and ROC curve) than in figure \ref{fig:perfectlycalibrated} but only two of the three curves match pointwise (the three still matching in total area). Figure \ref{fig:noncalibrated2} shows another model which has exactly the same ranking (and ROC curve) than in figure \ref{fig:perfectlycalibrated}, but again the model is not perfectly calibrated. We see now, none of the three curves match pointwise (the three still matching in total area). In these three cases we only have one bijective interval.

In  figure \ref{fig:severalconcavities}, we show the picture for a non-convex model.


\begin{figure}
\centering
\includegraphics[width=0.9\textwidth]{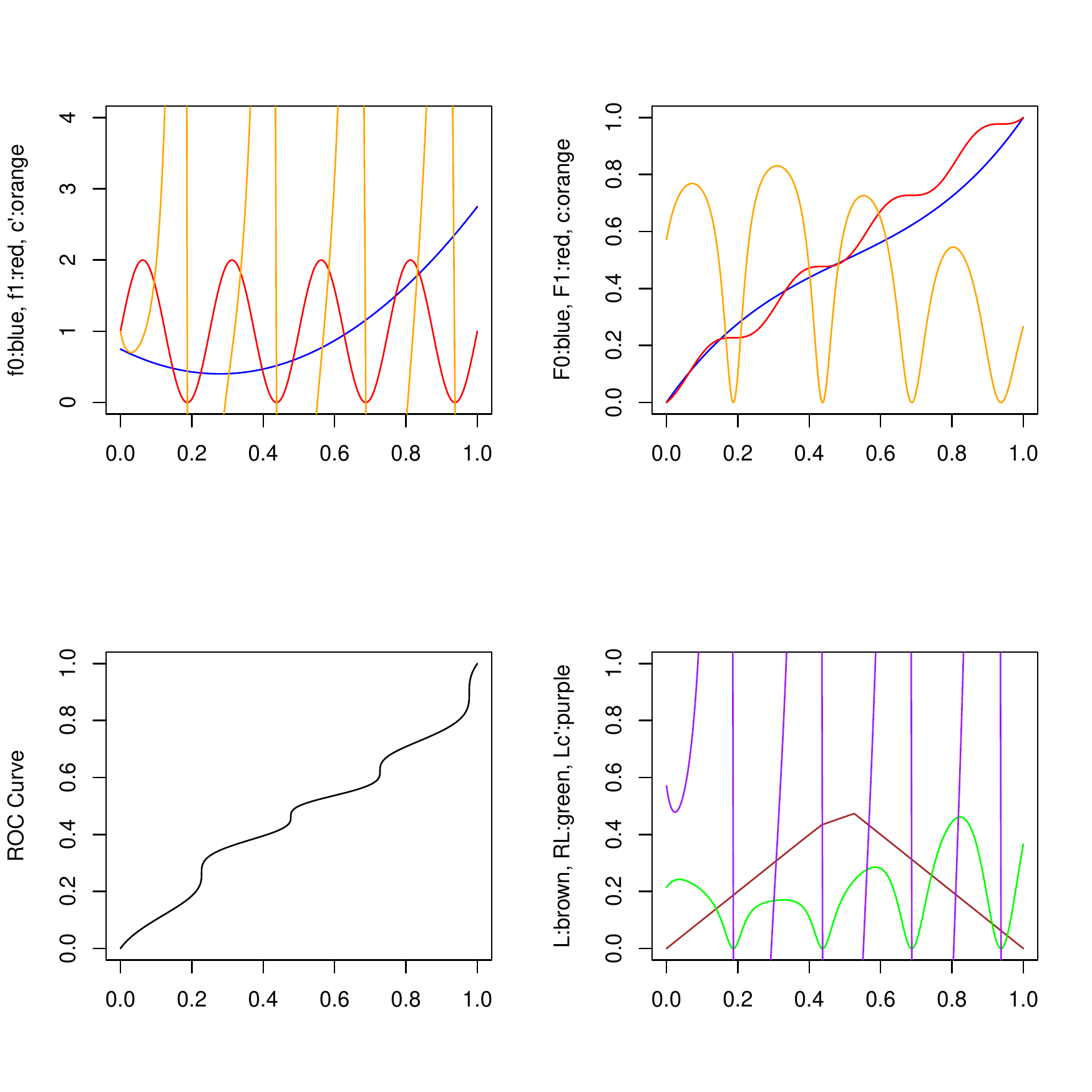} 
\caption{Here we have a non-convex model.
$f_0$ is a parabola and $f_1$ is a sinoidal function. 
We see that $c(T)$ (in orange) in this case is not non-decreasing. In fact, it is not a cumulative distribution and $c'(T)$ is clearly not a density function. The ROC curve is shown at the bottom left plot, which has many concavities.
Note that we cannot use $c(T)$ to determine the concavities in the ROC curve, and we cannot use $c(T)$ to calculate the convex hull either (locally).
 The plot on the bottom right shows that the three curves do not match in areas.
Because of non-convexity we cannot give a series of intervals. }
\label{fig:severalconcavities}
\end{figure}

Figure \ref{fig:diagonal} shows a diagonal classifier represented by $f_0$ and $f_1$ being the density functions for uniform distribution. The result is a convex model.


\begin{figure}
\centering
\includegraphics[width=0.9\textwidth]{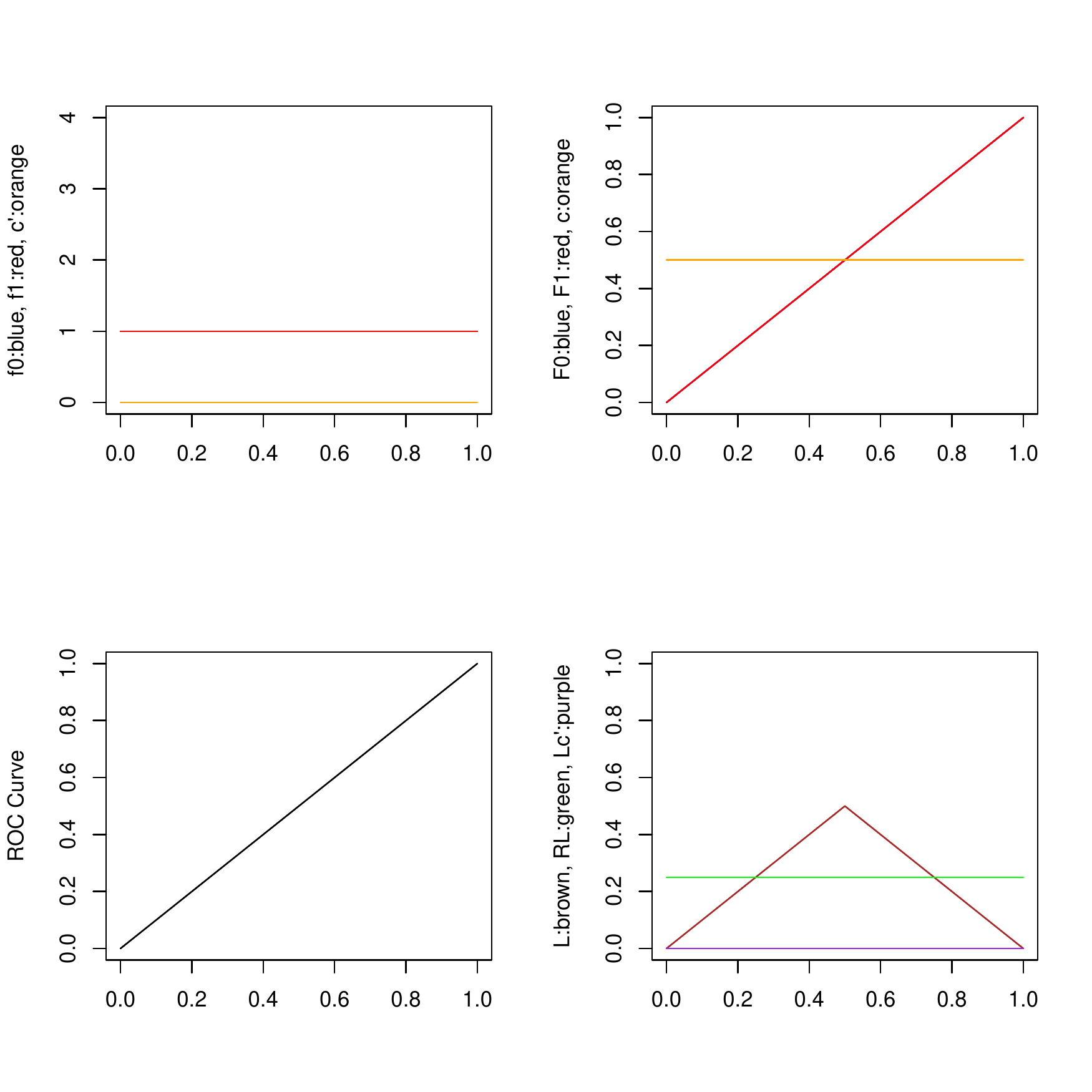} 
\caption{A diagonal model where $f_0=f_1=1$ everywhere. This model is convex, but it shows that the purple curve ($\acute{\Lambda}$ in Equation (\ref{eq:LcTbijective})) is 0. In this case, however, $\bar{\Lambda}$ in Equation (\ref{eq:LcTbijective}) is equal to $\pi_0 [c^2]^{\pi_1}_0 + \pi_1 [2c-c^2]^{1}_{\pi_1}$, which leads to $\pi_0\pi_1$, which equals the \rl for a diagonal classifier. In the figure, since $\pi_0=\pi_1$ we have that the area is 0.25.
The intervals are $I_\sigma = \{(0, 0.5), (0.5, 0.5), (0.5, 1) \}$, $I_\tau = \{(0, 0), (0, 1), (1, 1) \}$, where the first interval is singular, the second is constant and the third is singular.}		
\label{fig:diagonal}
\end{figure}

Figure \ref{fig:diagonal2} shows another diagonal classifier, which is also convex. Here thera are some regions where the density functions are zero, and all the mass is concentrated around 0.5.


\begin{figure}
\centering
\includegraphics[width=0.9\textwidth]{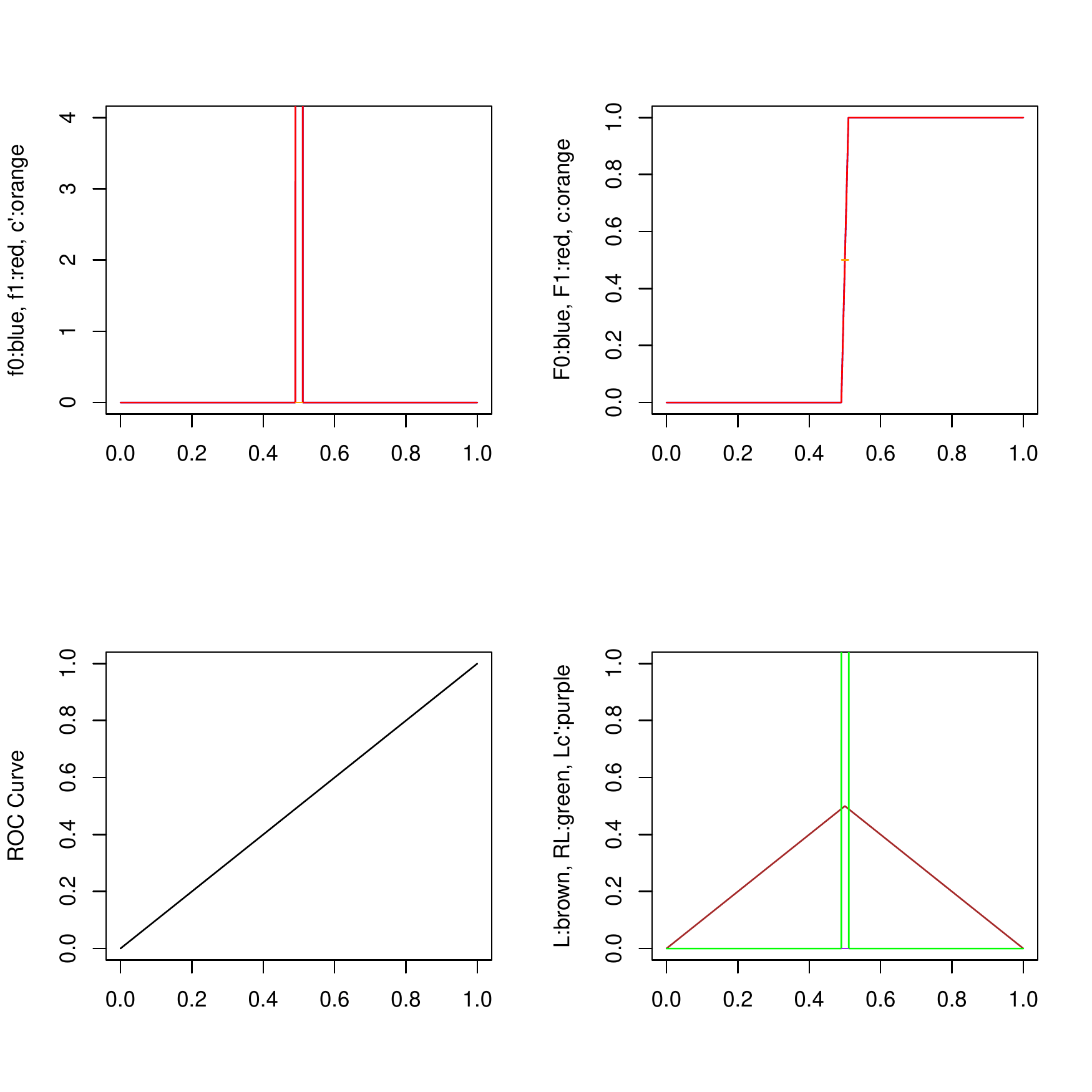} 
\caption{A diagonal model where $f_0=f_1=0$ almost everywhere, except in the interval $[0.495,0.505]$ where we have $f_0=f_1=1/0.01$. This model is convex, but it shows that the purple curve ($\acute{\Lambda}$ in Equation (\ref{eq:LcTbijective})) is 0. In this case, however, $\bar{\Lambda}$ in Equation (\ref{eq:LcTbijective}) is equal to $\pi_0 [c^2]^{\pi_1}_0 + \pi_1 [2c-c^2]^{1}_{\pi_1}$, which leads to $\pi_0\pi_1$, which equals the \rl for a diagonal classifier. In the figure, since $\pi_0=\pi_1$ we have that the area is 0.25. However, the \rl is concentrated in the interval $[0.495,0.505]$.
The intervals are $I_\sigma = \{(0, 0.5), (0.5, 0.5), (0.5, 1) \}$, $I_\tau = \{(0, 0.495), (0.495, 0.505), (0.505,1) \}$, where the first interval is singular, the second is constant and the third is singular.}		
\label{fig:diagonal2}
\end{figure}

Figures \ref{fig:notcumulative-simple} and \ref{fig:notcumulative2} show cases where the limits of the integral are not between 0 and 1, and also because there are some `singular' intervals.


\begin{figure}
\centering
\includegraphics[width=0.9\textwidth]{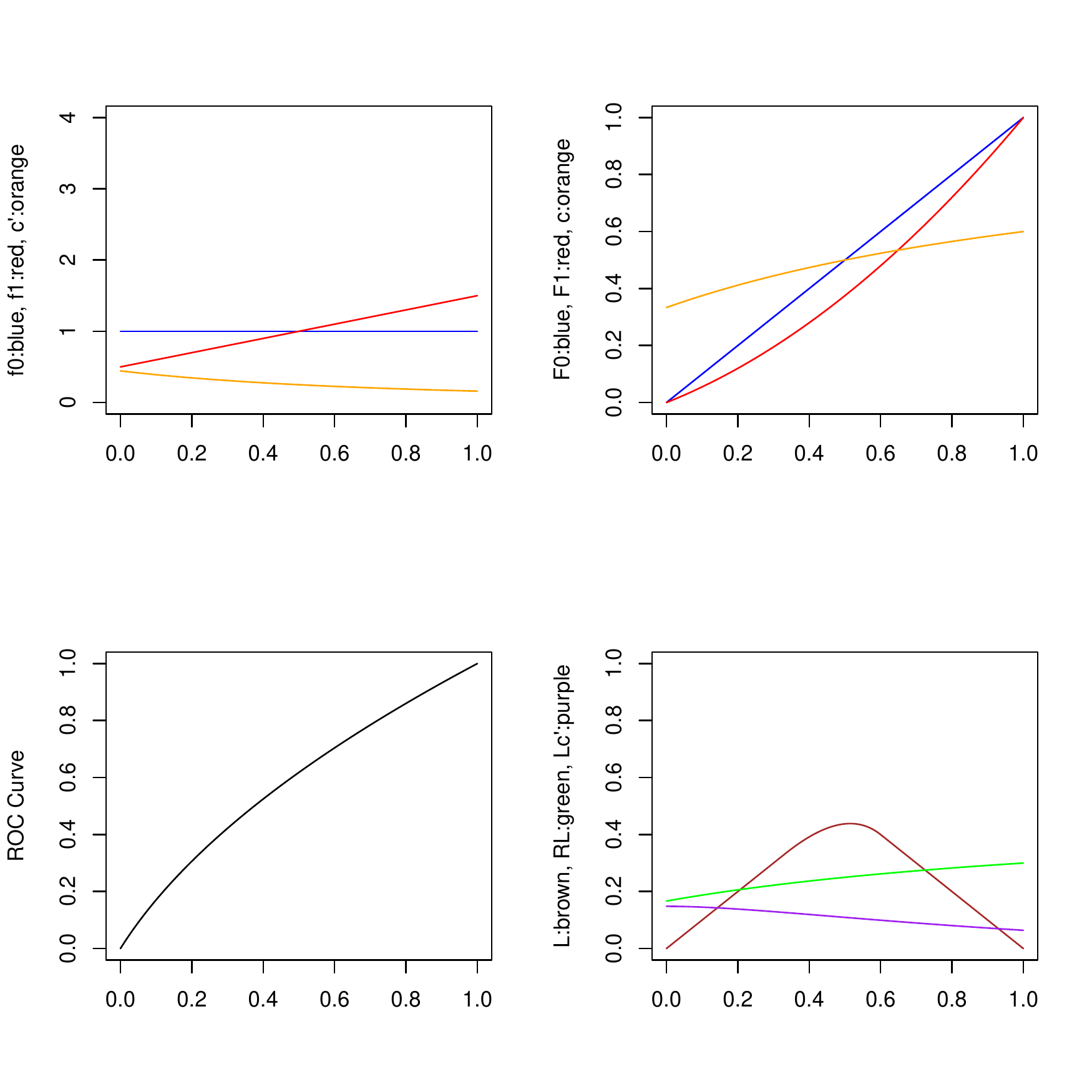} 
\caption{A model with a constant $f_0=1$ and a triangular $f_1$ which does not start at 0. This model is strictly convex, but it shows that the area of the purple curve does not match the other two curves. In fact, $c(T)$ in this case is not a cumulative distribution, since it does not go from 0 to 1.
The explanation here is found because $\dot{\Lambda} \neq 0$ and also the limits of integration, so they are $c^{-1}(0) \neq 0$ and $c^{-1}(1) \neq 1$.
The intervals are $I_\sigma = \{(0, 0.33), (0.33, 0.6), (0.6, 1) \}$, $I_\tau = \{(0, 0), (0, 1), (1, 1) \}$, where the first interval is singular, the second is bijective and the third is singular.}
\label{fig:notcumulative-simple}
\end{figure}

\begin{figure}
\centering
\includegraphics[width=0.9\textwidth]{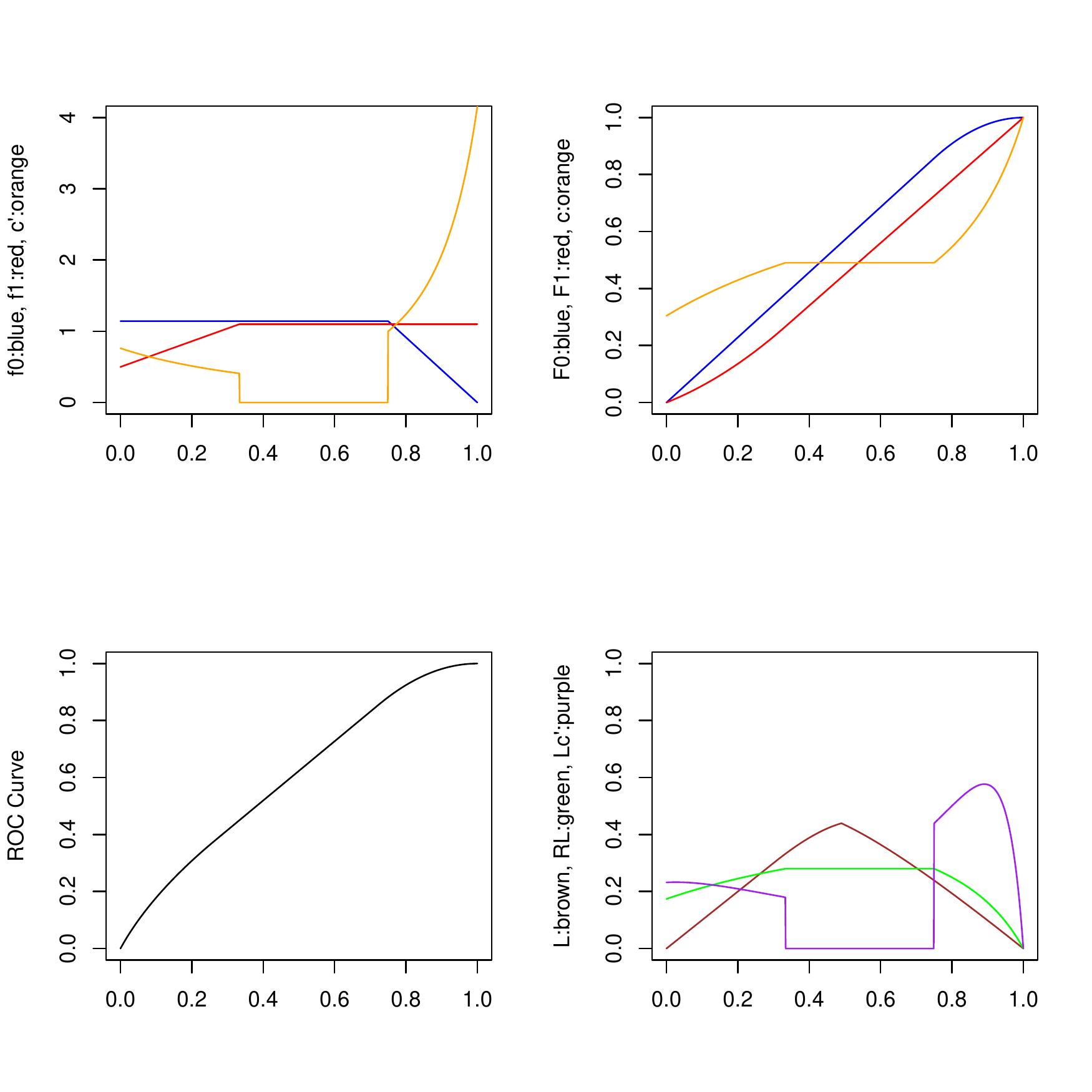} 
\caption{Another example where the area of the purple curve does not match the other two curves. In fact, again, $c(T)$ in this case is not a cumulative distribution, since it does not go from 0 to 1.
The explanation here is again found because $\dot{\Lambda} \neq 0$ and also the limits of integration, so they are $c^{-1}(0) \neq 0$ and $c^{-1}(1) \neq 1$.
The intervals are $I_\sigma = \{(0, 0.3), (0.3, 0.51), (0.51, 0.51), (0.51, 1) \}$, $I_\tau = \{(0, 0), (0, 0.33), (0.33, 0.75), (0.75, 1) \}$, where the first interval is singular, the second is bijective, the third is constant and the fourth is bijective.
}
\label{fig:notcumulative2}
\end{figure}

Finally, Figure \ref{fig:discontinuous} shows a case where the model is discontinuous, and they start with $c(0) \neq 0$ and $c(1)\neq 1$.


\begin{figure}
\centering
\includegraphics[width=0.9\textwidth]{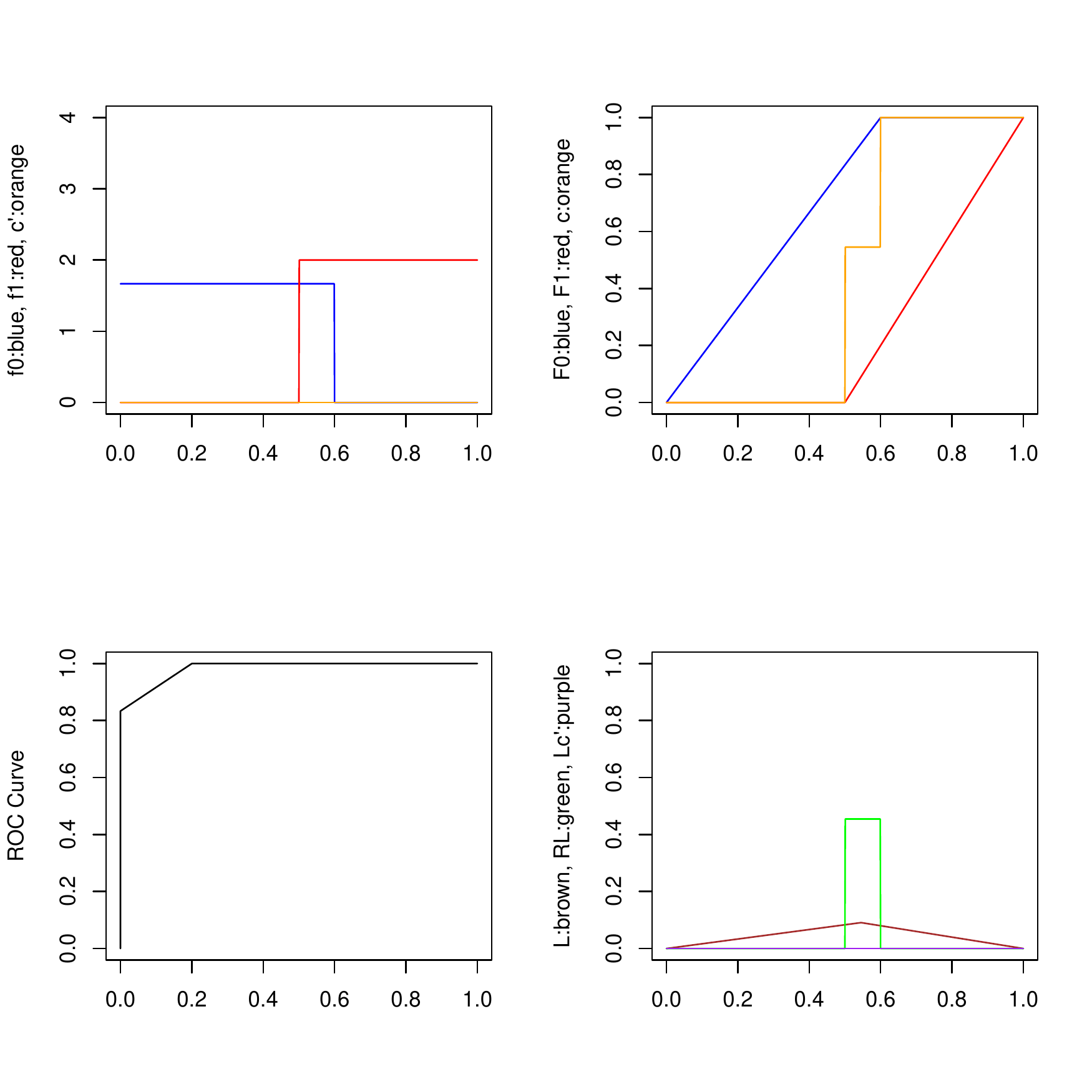} 
\caption{A model with a constant $f_0=1/0.6$ from $0$ to $0.6$ and a constant $f_1=2$ from $0$ to $0.5$. This model is convex, but it shows that the area of the purple curve does not match the other two curves. We have that $c(T)$ in this case is a cumulative distribution, since it goes from 0 to 1. However, we have discontinuities, and then we have some mass which is not included by the purple curve, which appears in $\dot{\Lambda} \neq 0$.
The intervals are $I_\sigma = \{(0, 0), (0, 0.55), (0.55, 0.55), (0.55, 1), (1,1) \}$, $I_\tau = \{(0, 0.5), (0.5, 0.5), (0.5, 0.6), (0.6, 0.6), (0.6, 1) \}$, where the first interval is constant, the second is singular, the third is constant, the fourth is singular and the fifth is constant.
}
\label{fig:discontinuous}
\end{figure}

\subsection{$c(T)$ is idempotent}

Now we work with the transformation $T \mapsto s=c(T)$. The resulting model using this transformation will be denoted by $m^{(c)}$.
We will use $H_0(s)$ and $H_1(s)$ for the cumulative distributions, which are defined as follows.
Since $s=c(T)$ by definition we have that $F_0(T) = H_0(c(T)) = H_0(s)$ and similarly $F_1(T) = H_1(c(T)) = H_1(s)$.

For the intervals $]\tau_i,\tau_{i+1}[$ in $\acute{I}_\tau$, we have $c(T)$ is strictly convex we just use $c^{-1}(s)$ to derive $H_0$ and $H_1$. This may imply discontinuities at $\tau_i$ or $\tau_{i+1}$ for those values of $s$ for which constant intervals have been mapped, namely $\sigma_i$ and $\sigma_{i+1}$.
So, we need to define the density functions as follows. For the bijective intervals we just use $h_0(s)ds = f_0(T)dT$ and $h_1(s)ds = f_1(T)dT$ as a shorthand for a change of variable, and we can clear $h_0$ and $h_1$ using $c^{-1}(s)$. We do that using open intervals $]\tau_i,\tau_{i+1}[$  in $T$.
These correspond to $]c(\tau_i), c(\tau_{i+1})[$ = $]\sigma_i, \sigma_{i+1}[$.

The constant intervals are $[\tau_{i}, \tau_{i+1}]$ in $\bar{I}_\tau$. 
There is probability mass 
for every constant interval $[\tau_{i}, \tau_{i+1}]$ mapping to a point $s_i = c(\tau_i) = c(\tau_{i+1}) = \sigma_i = \sigma_{i+1}$, as follows:
%
%
\begin{eqnarray}
{[H_0(T)]}_{\sigma_{i}}^{\sigma_{i+1}} = \int_{\tau_{i}}^{\tau_{i+1}} f_0(T) dt = [F_0(T)]_{\tau_{i}}^{\tau_{i+1}} = F_0(\tau_{i+1})-F_0(\tau_{i}) \label{eq:h0} \\
{[H_1(T)]}_{\sigma_{i}}^{\sigma_{i+1}} = \int_{\tau_{i}}^{\tau_{i+1}} f_1(T) dt = [F_1(T)]_{\tau_{i}}^{\tau_{i+1}} = F_1(\tau_{i+1})-F_1(\tau_{i}) \label{eq:h1}  
\end{eqnarray}
%


Finally, we just define $h_0(s)=h_1(s)=0$ for those $s \in [\sigma_{i}, \sigma_{i+1}] \in \dot{I}_\sigma$, since for the singular intervals there is only one point $\tau_{i}$ and the mass to share is 0.

This makes $m^{(c)}$ well-defined for convex models (not necessarily continuous and strictly convex).

\begin{lemma}\label{idempotent}
For model $m^{(c)}$ we have that, for the non-singular intervals, $c_{m^{(c)}}(s)=\frac{\pi_1 h_1(s)}{\pi_0 h_0(s) + \pi_1 h_1(s)}$ is idempotent, i.e.:
\begin{equation}
c_{m^{(c)}}(s)  =  s
\end{equation}
\end{lemma}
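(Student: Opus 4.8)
The plan is to verify the claim separately on the two kinds of non-singular intervals identified above — the bijective intervals $\acute{I}$ (where $c(T)$ is strictly increasing) and the constant intervals $\bar{I}$ (where $c(T)$ is constant) — using in each case the explicit description of the transformed model $m^{(c)}$ given around Equations~(\ref{eq:h0})--(\ref{eq:h1}). The singular intervals need no attention, since there $h_0=h_1=0$ and $c_{m^{(c)}}$ is simply not defined.

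First I would treat a bijective interval $]\tau_i,\tau_{i+1}[$. Here $s=c(T)$ is a strictly increasing bijection onto $]\sigma_i,\sigma_{i+1}[$, so writing $T=c^{-1}(s)$ the defining change of variables $h_k(s)\,ds=f_k(T)\,dT$ for $k\in\{0,1\}$ yields $h_k(s)=f_k(T)/c'(T)$. Substituting into the definition of $c_{m^{(c)}}$, the common factor $1/c'(T)$ cancels between numerator and denominator:
\[
 c_{m^{(c)}}(s) = \frac{\pi_1 f_1(T)/c'(T)}{\pi_0 f_0(T)/c'(T)+\pi_1 f_1(T)/c'(T)} = \frac{\pi_1 f_1(T)}{\pi_0 f_0(T)+\pi_1 f_1(T)} = c(T) = s,
\]
using the definition of $c(T)$ from Equation~(\ref{eq:cT}). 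This disposes of the bijective case with a routine Jacobian cancellation.

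Next I would treat a constant interval $[\tau_i,\tau_{i+1}]$, on which $c(T)$ equals some constant $c_\ast$; by the definition of $c(T)$ this forces the likelihood ratio $f_0(T)/f_1(T)$ to equal the constant $\tfrac{(1-c_\ast)\pi_1}{c_\ast\pi_0}$ throughout the interval. The whole interval collapses onto the single point $s=c_\ast=\sigma_i=\sigma_{i+1}$, which under $m^{(c)}$ carries class-$k$ mass $F_k(\tau_{i+1})-F_k(\tau_i)=\int_{\tau_i}^{\tau_{i+1}}f_k(T)\,dT$ by Equations~(\ref{eq:h0})--(\ref{eq:h1}); the value of $c_{m^{(c)}}$ at this atom is the corresponding ratio of these accumulated masses. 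Pulling the constant likelihood ratio through the integral gives $\int_{\tau_i}^{\tau_{i+1}}f_0 = \tfrac{(1-c_\ast)\pi_1}{c_\ast\pi_0}\int_{\tau_i}^{\tau_{i+1}}f_1$, whence that ratio evaluates to $\tfrac{c_\ast}{(1-c_\ast)+c_\ast}=c_\ast=s$. Combining the two cases gives $c_{m^{(c)}}(s)=s$ on every non-singular interval. The only delicate point I anticipate is the bookkeeping on the constant intervals: one must argue that the atom version of $c_{m^{(c)}}$ is the right object (it is the natural limit of the ratio of conditional masses shrinking onto the point), and that the constancy of the likelihood ratio there — itself a direct consequence of $c(T)$ being constant — is precisely what makes the ratio of accumulated masses reduce to $c_\ast$; the bijective and singular cases are, respectively, a one-line change of variables and vacuous.
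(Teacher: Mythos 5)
Your proof is correct and follows essentially the same route as the paper's: on bijective intervals both arguments cancel the Jacobian via the change of variables $h_k(s)\,ds=f_k(T)\,dT$, and on constant intervals both identify the atom's class masses with $F_k(\tau_{i+1})-F_k(\tau_i)$ and use the constancy of $c(T)$ (equivalently, of the likelihood ratio) to reduce the ratio of accumulated masses to $c_\ast=s$. Your version merely spells out this last reduction a bit more explicitly than the paper does; no substantive difference.
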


\begin{proof}
For the bijective (strictly convex) intervals $]\tau_i, \tau_{i+1}[$ mapped into $]c(\tau_i), c(\tau_{i+1})[$, i.e. $]\sigma_i, \sigma_{i+1}[$: 
\begin{eqnarray*}
c_{m^{(c)}}(s) & = & \frac{\pi_1 h_1(s)}{\pi_0 h_0(s) + \pi_1 h_1(s)} = \frac{\pi_1 h_1(s)ds}{\pi_0 h_0(s)ds + \pi_1 h_1(s)ds}  \\
               & = & \frac{\pi_1 f_1(T)dT}{\pi_0 f_0(T)dT + \pi_1 f_1(T)dT} = \frac{\pi_1 f_1(T)}{\pi_0 f_0(T) + \pi_1 f_1(T)}  = c(T) = s  
\end{eqnarray*}
For the points $s_i= c(\tau_{i}) = c(\tau_{i+1})$ corresponding to constant intervals, we have that using (\ref{eq:h0}) and (\ref{eq:h1}): 
\begin{eqnarray*}
c_{m^{(c)}}(s_i) & = & \frac{\pi_1 h_1(s_i)}{\pi_0 h_0(s_i) + \pi_1 h_1(s_i)} = \frac{\pi_1 [F_1(T)]_{\tau_{i}}^{\tau_{i+1}}}{\pi_0 [F_0(T)]_{\tau_{i+1}}^{a_{i+1}} + \pi_1 [F_1(T)]_{\tau_{i}}^{\tau_{i+1}}} 
\end{eqnarray*}

Since $c(T)$ is constant in the interval $]\tau_{i}, \tau_{i+1}[$, we have:
\begin{eqnarray*}
c_{m^{(c)}}(s_i) & = & \frac{\pi_1 f_1(T)}{\pi_0 f_0(T) + \pi_1 f_1(T)}  = c(T) = s_i  
\end{eqnarray*}
%
%
%
\end{proof}

\subsection{Main result}

Finally, we are ready to prove the theorem treating the three kinds of intervals.

\begin{theorem}\label{th:planA} (Theorem \ref{thm:RL} in the paper)
For every convex model $m$, we have that:
\begin{equation*}
\LcostoU(m)  =  \rl(m)
\end{equation*}
\end{theorem}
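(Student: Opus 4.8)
The plan is to reduce the statement to the perfectly calibrated model attached to $m$, for which optimal thresholding coincides with score-driven thresholding. Concretely, let $m^{(c)}$ be the model obtained from $m$ by the monotone score transformation $T\mapsto s=c(T)$ of Equation~(\ref{eq:cT}). By Lemma~\ref{idempotent} we have $c_{m^{(c)}}(s)=s$ (checked separately on the strictly increasing stretches and on the atoms into which the straight ROC segments of $m$ collapse), so $m^{(c)}$ is perfectly calibrated, hence $\cl(m^{(c)})=0$ and $m^{(c)}$ is convex by Definition~\ref{def:Convex}. Theorem~\ref{thresholdsequivalence} then gives $\Tcosto(c)=\Tcostsd(c)=c$ for all $c$, so the per-condition losses agree and $\LcostoU(m^{(c)})=\LcostsdU(m^{(c)})$, which equals $\bs(m^{(c)})$ by Theorem~\ref{thm:LcostpUequalsBS2}, which in turn equals $\rl(m^{(c)})$ by the Brier decomposition (Theorem~\ref{eqBSContdecom}) with $\cl(m^{(c)})=0$. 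So the theorem reduces to the two invariances $\LcostoU(m)=\LcostoU(m^{(c)})$ and $\rl(m)=\rl(m^{(c)})$.

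For the first invariance I would argue that $\min_t\Qcost(t;c)$ depends only on the lower-left convex hull of the attainable operating points $(F_1(t),F_0(t))$, i.e.\ only on the ranking the scores induce. Since $m$ is convex, $T\mapsto c(T)$ is non-decreasing: on strictly increasing stretches it is an order isomorphism, and on flat stretches it merely identifies points lying on a single straight segment of the ROC curve, which leaves the hull unchanged. Hence $\min_t\Qcost(t;c)$ is the same for $m$ and $m^{(c)}$ at every $c$, and integrating against the uniform density gives $\LcostoU(m)=\LcostoU(m^{(c)})$. For the second invariance I would split the threshold axis of $m$ into the three interval types of Table~\ref{tab:intervals}. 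On bijective (curved) intervals the change of variable $h_k(s)\,ds=f_k(T)\,dT$ sends the refinement integrand $\frac{\pi_0 f_0(T)\pi_1 f_1(T)}{\pi_0 f_0(T)+\pi_1 f_1(T)}\,dT$ exactly to the refinement integrand of $m^{(c)}$; on a constant interval $[\tau_i,\tau_{i+1}]$ — a straight ROC segment on which $c(T)\equiv c^{\ast}$, so $\pi_1 f_1=c^{\ast}(\pi_0 f_0+\pi_1 f_1)$ — a one-line computation gives that both the $m$-side integral and the atom of $m^{(c)}$ at $s=c^{\ast}$ contribute $c^{\ast}\pi_0\bigl(F_0(\tau_{i+1})-F_0(\tau_i)\bigr)$; and on singular intervals $f_0=f_1=0$, so neither side contributes. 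Summing yields $\rl(m)=\rl(m^{(c)})$, and chaining the pieces gives $\LcostoU(m)=\LcostoU(m^{(c)})=\rl(m^{(c)})=\rl(m)$.

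An equivalent, more hands-on route — the one that also produces the three curves drawn in Figures~\ref{fig:strictlyconvex}--\ref{fig:discontinuous} — keeps Lemma~\ref{lemma:prime}'s decomposition $\LcostoU(m)=\acute{\Lambda}(m)+\dot{\Lambda}(m)$ and matches the two terms to $\rl(m)$ directly: inside $\acute{\Lambda}(m)$ apply $s=c(T)$ to turn the integrand of Equation~(\ref{eq:LcTbijective}) into the score-driven loss integrand of the locally calibrated model, use Lemma~\ref{idempotent} to discard the calibration part (it is identically zero because $s=c_{m^{(c)}}(s)$), and change variables back to recover the refinement integrand of $m$ on the curved pieces; then evaluate $\dot{\Lambda}(m)$ from Equation~(\ref{eq:LcTsingular}) and verify it equals the refinement mass of $m$ carried by the straight segments of the hull. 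I expect the main obstacle to be exactly this last bit of bookkeeping: correctly pairing the cost-ranges that are optimal at a single threshold (the singular intervals, which feed $\dot{\Lambda}$) with the straight ROC segments (which carry that refinement mass), tracking the boundary terms that the per-interval integration by parts throws off, and checking that nothing is double-counted or dropped when the ROC curve has discontinuities or vertical/horizontal pieces. Everything else — the change-of-variable identities, the Brier decomposition, the calibrated-model equalities — is routine once that correspondence is nailed down, and the worked examples in the appendix are precisely the checks that it has been.
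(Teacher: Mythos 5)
Your proposal is correct in substance, but its primary route is organised differently from the paper's appendix proof, so a comparison is in order. The paper first proves Lemma \ref{lemma:prime}: the first-order condition $c(T)=c$ splits $\LcostoU(m)$ into a bijective part $\acute{\Lambda}$ and a singular part $\dot{\Lambda}$, a change of variable turns their sum into $\bs(m^{(c)})$, and then calibration of $m^{(c)}$ (Lemma \ref{idempotent}) plus the interval-by-interval computation gives $\rl(m^{(c)})=\rl(m)$; your second, ``hands-on'' route is exactly this. Your primary route instead factors the theorem into two invariances under the monotone map $T\mapsto c(T)$: first, $\LcostoU$ is preserved because $\min_t \Qcost(t;c)$ is a linear minimisation over the attainable operating points and so only sees their lower convex hull, which the map preserves (interior points of straight segments become unattainable but are never strictly better than the retained endpoints) --- in effect a \Cal-analogue of the paper's Theorem \ref{th:step2b} for \Conv; second, $\rl$ is preserved, by the same bijective/constant/singular bookkeeping the paper performs (your atom contribution $c^{\ast}\pi_0\bigl(F_0(\tau_{i+1})-F_0(\tau_i)\bigr)$ agrees with Equation (\ref{eq:constant})). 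What your organisation buys is that the remaining work on $m^{(c)}$ is delegated to already-proved results (Theorems \ref{thresholdsequivalence}, \ref{thm:LcostpUequalsBS2} and \ref{eqBSContdecom}); what it costs is that those results are stated and proved for continuous score densities, whereas $m^{(c)}$ in general carries point masses (the collapsed straight ROC segments) and zero-density gaps, so a fully rigorous write-up must re-verify them for such mixed distributions --- score-driven thresholding remains optimal at a calibrated atom because the marginal loss change there is exactly zero, and $\LcostsdU=\bs$ survives because the relevant identities involve only the cumulatives $H_0,H_1$ --- and this is precisely the bookkeeping that the paper makes explicit through Lemma \ref{lemma:prime} and the atom definitions in Equations (\ref{eq:h0}) and (\ref{eq:h1}). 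So the ``main obstacle'' you anticipate is indeed where the real work lies, and your sketch handles it correctly; only the tacit extension of the continuous-density theorems to $m^{(c)}$ needs to be stated and checked explicitly.
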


\begin{proof}

Let us start from Lemma \ref{lemma:prime}:

\begin{equation*}
\LcostoU(m)  = \acute{\Lambda}(m) + \dot{\Lambda}(m) 
\end{equation*}

\noindent working with Equation (\ref{eq:LcTbijective}) first for the bijective intervals:

\begin{equation*}
\acute{\Lambda}(m) = \sum_{]\tau_i,\tau_{i+1}[ \in \acute{I}_\tau} \int_{\tau_i}^{\tau_{i+1}} 2c(T) \pi_0(1-F_0(T)) + 2(1-c(T))\pi_1 F_1(T) \} c'(T) dT  
\end{equation*}

Since this only includes the bijective intervals, we can use the correspondence between the $H$ and the $F$, and making the change $s=c(T)$.
\begin{eqnarray*}
\acute{\Lambda}(m) & = & \sum_{]\tau_i,\tau_{i+1}[ \in \acute{I}_\tau} \int_{\tau_i}^{\tau_{i+1}}  2c(T) \pi_0(1-H_0(c(T))) + 2(1-c(T))\pi_1 H_1(c(T)) \} c'(T) dT \\
                  & =  & \sum_{]c(\tau_i),c(\tau_{i+1})[ \in \acute{I}_\sigma} \int_{c(\tau_i)}^{c(\tau_{i+1})} 2s \pi_0(1-H_0(s)) + 2(1-s)\pi_1 H_1(s) \} ds \\
                  & =  & \sum_{]\sigma_i,\sigma_{i+1}[ \in \acute{I}_\sigma} \int_{\sigma_i}^{\sigma_{i+1}} 2s \pi_0(1-H_0(s)) + 2(1-s)\pi_1 H_1(s) \} ds 
\end{eqnarray*}

\noindent and now working with Equation (\ref{eq:LcTsingular1}) for the singular intervals and also using the correspondence between the $H$ and the $F$:
\begin{eqnarray*}
 \dot{\Lambda}(m) & = & \sum_{]\sigma_i,\sigma_{i+1}[ \in \dot{I}_\sigma} \int_{\sigma_i}^{\sigma_{i+1}} 2c \pi_0(1-F_0(\tau_i)) + 2(1-c)\pi_1 F_1(\tau_i) \} dc \\
                   & = & \sum_{]\sigma_i,\sigma_{i+1}[ \in \dot{I}_\sigma} \int_{\sigma_i}^{\sigma_{i+1}} 2c \pi_0(1-H_0(c(\tau_i))) + 2(1-c)\pi_1 H_1(c(\tau_i)) \} dc \\
                   & = & \sum_{]\sigma_i,\sigma_{i+1}[ \in \dot{I}_\sigma} \int_{\sigma_i}^{\sigma_{i+1}} 2s \pi_0(1-H_0(\sigma_i)) + 2(1-s)\pi_1 H_1(\sigma_i) \} ds 
\end{eqnarray*}
The last step also uses the renaming of the variable. But since $h_0(s)=h_1(s)=0$ for the singular intervals, we have that $H_0(s)$ and $H_1(s)$ are constant in these intervals, so this can be rewritten as:
\begin{eqnarray*}
 \dot{\Lambda}(m) & = & \sum_{]\sigma_i,\sigma_{i+1}[ \in \dot{I}_\sigma} \int_{\sigma_i}^{\sigma_{i+1}} 2s \pi_0(1-H_0(s)) + 2(1-s)\pi_1 H_1(s) \} ds 
\end{eqnarray*}
Putting $\acute{\Lambda}(m)$ and $\dot{\Lambda}(m)$ together, because the constant intervals ($\bar{I}_\sigma$) have length 0 (and loss 0), we have:
\begin{eqnarray*}
\LcostoU(m)   & = & \sum_{]\sigma_i,\sigma_{i+1}[ \in I_\sigma} \int_{\sigma_i}^{\sigma_{i+1}} 2s \pi_0(1-H_0(s)) + 2(1-s)\pi_1 H_1(s) \} ds 
\end{eqnarray*}
We can join the integrals into a single one, even though the whole integral has to be calculated by intervals if it is discontinuous:
\begin{eqnarray*}
\LcostoU(m) & =  & \int_{\sigma_0}^{\sigma_{n}} \{  2s \pi_0(1-H_0(s)) + 2(1-s)\pi_1 H_1(s) \} ds \\
            & =  & \int^{1}_{0} \{ 2s \pi_0(1-H_0(s)) + 2(1-s)\pi_1 H_1(s) \} ds 
\end{eqnarray*}

By Theorem \ref{thm:LcostpUequalsBS2} (Equation (\ref{eq:bsloss})) in the paper (and also because this theorem holds pointwise)  we have that the last expression equals the Brier score, so this leads to:
\begin{equation}
\LcostoU(m)  =  \bs(m^{(c)})
\end{equation}

\noindent And now we have that using Definition \ref{def:BS} for the \bs:

\begin{equation}
\bs(m^{(c)}) = \int^{1}_{0} \{  \pi_0s^2h_0(s) + \pi_1(1-s)^2 h_1(s) \} ds
\end{equation}

\noindent This is 0 when $h_0(s)=h_1(s)=0$, so we can ignore the singular intervals for the rest of the proof.
The calibration loss for model $m^{(c)}$ can be expanded as follows, and using Lemma \ref{idempotent} (which is applicable except for non-singular intervals) we have:
\begin{eqnarray*}\label{eq:CL}
\cl(m^{(c)}) & = &\int_{0}^{1} \left(	 s - \frac{\pi_1 h_1(s)}{\pi_0 h_0(s) + \pi_1 h_1(s)} \right)^{2} \left( \pi_0 h_0(s) + \pi_1 h_1(s) \right)  ds \\
& = &\int_{0}^{1} \left(	 s - s \right)^{2} \left( \pi_0 h_0(s) + \pi_1 h_1(s) \right)  ds	= 0
\end{eqnarray*}
So, we have that:
\begin{equation}
\LcostoU(m)  =  \rl(m^{(c)})  \label{eq:ap54}
\end{equation}
And now we need to work with $\rl$: 
\begin{eqnarray*}
  \rl(m^{(c)})	 & =  & \int_{0}^{1}  \frac{\pi_1 h_1(s) \pi_0 h_0(s)} {\pi_0 h_0(s) + \pi_1 h_1(s)} ds = \int_{0}^{1}   \pi_0 h_0(s) {\frac{\pi_1 h_1(s)}{\pi_0 h_0(s) + \pi_1 h_1(s)}} ds \\
               	 & =  & \int_{0}^{1}   \pi_0 h_0(s) c_{m^{(c)}}(s)  ds = \int_{0}^{1}   \pi_0 h_0(s) s  ds 							
\end{eqnarray*}
The last step applies Lemma \ref{idempotent} again.

We now need to treat the bijective and the constant intervals separately, otherwise the integral cannot be calculated when $h_0$ and $h_1$ are discontinuous.
\begin{eqnarray*}
  \rl(m^{(c)})	 & =  & \sum_{]\sigma_i,\sigma_{i+1}[ \in \acute{I}_\sigma } \int^{\sigma_{i+1}}_{\sigma_i}   \pi_0 h_0(s) s  ds  +  \sum_{]\sigma_i,\sigma_{i+1}[ \in \bar{I}_\sigma }   \pi_0 h_0(\sigma_{i}) \sigma_i  							
\end{eqnarray*}
We apply the variable change $s=c(T)$ for the expression on the left:
\begin{eqnarray*}
\sum_{]c(\tau_i),c(\tau_{i+1})[ \in \acute{I}_\sigma } \int^{c(\tau_{i+1})}_{c(\tau_i)}   \pi_0 h_0(s) s  ds          & = &  \sum_{]\tau_i,\tau_{i+1}[ \in \acute{I}_\tau} \int^{\tau_{i+1}}_{\tau_i}  \pi_0 h_0(c(T)) c(T) \frac{d c(T)}{dT} dT  \\ 
  	       & = & \sum_{]\tau_i,\tau_{i+1}[   \in \acute{I}_\tau } \int^{\tau_{i+1}}_{\tau_i}   \pi_0 h_0(c(T)) \frac{\pi_1f_1(T)}{\pi_1f_1(T)+ \pi_0f_0(T)} \frac{d c(T)}{dT} dT    \\
			     & = &  \sum_{]\tau_i,\tau_{i+1}[  \in \acute{I}_\tau }\int^{\tau_{i+1}}_{\tau_i}  \pi_0 h_0(c(T))\frac{d c(T)}{dT} \frac{\pi_1f_1(T)}{\pi_1f_1(T)+ \pi_0f_0(T)}  dT   \\
           & = &  \sum_{]\tau_i,\tau_{i+1}[ \in \acute{I}_\tau} \int^{\tau_{i+1}}_{\tau_i}   \pi_0 f_0(T) \frac{\pi_1f_1(T)}{\pi_1f_1(T)+ \pi_0f_0(T)}  dT   	
\end{eqnarray*}
We now  work with the expression on the right using Equation (\ref{eq:h0}):
\begin{eqnarray}
\sum_{]c(\tau_i),c(\tau_{i+1})[ \in \bar{I}_\sigma }    \pi_0 h_0(c(\tau_{i})) c(\tau_{i}) 	 & = & \sum_{]\tau_i,\tau_{i+1}[  \in \bar{I}_\tau }    \pi_0   [F_0(T)]_{\tau_{i}}^{\tau_{i+1}}  c(\tau_{i}) \label{eq:constant} 
			     =     \sum_{]\tau_i,\tau_{i+1}[  \in \bar{I}_\tau }   \pi_0   \int_{\tau_{i}}^{\tau_{i+1}} f_0(T) dT c(\tau_{i}) \nonumber \\					
           & = &     \sum_{]\tau_i,\tau_{i+1}[  \in \bar{I}_\tau }    \int_{\tau_{i}}^{\tau_{i+1}} \pi_0  f_0(T) c(T) dT   \nonumber \\
           & = &     \sum_{]\tau_i,\tau_{i+1}[  \in \bar{I}_\tau }     \int_{\tau_{i}}^{\tau_{i+1}} \pi_0  f_0(T) \frac{\pi_1f_1(T)}{\pi_1f_1(T)+ \pi_0f_0(T)} dT   	\nonumber
\end{eqnarray}
The change from $c(\tau_{i})$ to $c(T)$ inside the integral can be performed since $c(T)$ is constant, because here we are working with the constant intervals.

Putting everything together again:				
\begin{eqnarray*}			
	\rl(m^{(c)})          & = &   \sum_{]\tau_i,\tau_{i+1}[ \in \acute{I}_\tau } \int^{\tau_{i+1}}_{\tau_i}   \pi_0 f_0(T) \frac{\pi_1f_1(T)}{\pi_1f_1(T)+ \pi_0f_0(T)}  dT +    \sum_{]\tau_i,\tau_{i+1}[  \in \bar{I}_\tau }    \int_{\tau_{i}}^{\tau_{i+1}} \pi_0  f_0(T) \frac{\pi_1f_1(T)}{\pi_1f_1(T)+ \pi_0f_0(T)} dT   \\
		       & = &  \int_{\tau_0}^{{\tau_n}}   \frac{\pi_0 f_0(T)\pi_1f_1(T)}{\pi_1f_1(T)+ \pi_0f_0(T)}  dT  =
		        \int_{-\infty}^{\infty}    \frac{\pi_0 f_0(T)\pi_1f_1(T)}{\pi_1f_1(T)+ \pi_0f_0(T)}  dT = \rl(m) 		
\end{eqnarray*}
This and Equation (\ref{eq:ap54}) complete the proof.
\end{proof}

We have a nice example in Figure \ref{fig:nonstrictlyconvex}, where it shows that the refinement loss is not 0 in the constant segments. In fact, the area in the constant segments can be calculated from Equation (\ref{eq:constant}). In this case, $c(1/3) = c(3/4) = 0.512$, and we have that $F_0(1/3) = 0.38$ and $F_0(3/4) = 0.85$. Since $\pi_1=0.5$ we have that the area of this constant segment for the \rl is $0.512 \cdot 0.5 \cdot (0.857- 0.381) = 0.1220$, which equals the area as calculated using the width times the height (i.e., $(3/4-1/3) \cdot 0.2927 = 0.1220$). The height is given by \rl in this interval, which is $\frac{\pi_0 f_0(T)\pi_1f_1(T)}{\pi_1f_1(T)+ \pi_0f_0(T)} = \frac{0.5 \cdot 1.143 \cdot 0.5 \cdot 1.2}{0.5 \cdot 1.143 + 0.5 \cdot 1.2} = 0.2927$.

 \end{document}